\title{Minimax Statistical Estimation under Wasserstein Contamination}
\author{
Patrick Chao
and Edgar Dobriban\footnote{
The authors are with the Department of Statistics and Data Science, University of Pennsylvania. E-mail addresses: 
\texttt{pchao@wharton.upenn.edu},
\texttt{dobriban@wharton.upenn.edu}.}
}
\date{\today}
\begin{document}
\maketitle
\abstract{

Contaminations are a key concern in modern statistical learning, as small but systematic perturbations of all datapoints can substantially alter estimation results. 
Here, we study \emph{Wasserstein-$r$ contaminations} ($r\ge 1$) in an $\ell_q$ norm ($q\in [1,\infty]$), in which each observation may undergo an adversarial perturbation with bounded cost, complementing the classical Huber model, corresponding to total variation norm, where only a fraction of observations is arbitrarily corrupted. 
We study both independent and joint (coordinated) contaminations and develop a minimax theory under $\ell_q^r$ losses.

Our analysis encompasses several fundamental problems: location estimation, linear regression, and pointwise nonparametric density estimation. For joint contaminations in location estimation and for prediction in linear regression, we obtain the \emph{exact minimax risk}, identify \emph{least favorable contaminations}, and show that the \emph{sample mean} and \emph{least squares predictor} are respectively minimax optimal. 
For location estimation under independent contaminations, we give sharp upper and lower bounds, including exact minimaxity 
in the Euclidean Wasserstein contamination case, 
when $q=r=2$. For pointwise density estimation in any dimension, we derive the optimal rate, showing that it is achieved by kernel density estimation with a bandwidth that is possibly larger than the classical one.

Our proofs leverage powerful tools from optimal transport developed over the last 20 years, including the dynamic Benamou--Brenier formulation. 
Taken together, our results suggest that in contrast to the Huber contamination model, for norm-based Wasserstein contaminations, classical estimators may be nearly optimally robust. 
}
\clearpage
\tableofcontents
\medskip

\section{Introduction}

In any data collection process, there may be
contaminations that adversely affect the fidelity of the data, e.g., 
historical data may not represent the  population of interest, or a sensor may be systematically biased. 
Despite the presence of contaminations, we aim for accurate statistical estimation.
Concretely, suppose we are interested in estimating a parameter $\theta$ from an observation $Z\sim P_\theta$, but we observe \textit{contaminated} 
data $Z'$ from $\nu$, where $P_\theta$ and $\nu$ are ``close", i.e., $d(P_\theta,\nu)\le \ep$ for some metric $d$ and perturbation size $\ep\ge 0$. Is it still possible to estimate $\theta$ given data
perturbed by possibly adversarial
contamination? 
What are the best possible robust estimators $\htheta$?

Total variation (TV) perturbations---the choice $d=\mathrm{TV}$---correspond to Huber's well studied $\ep$-contamination model  \citep[e.g.,][etc]{huberLocation,huberRatio,huber2004robust,hampel2005robust,chen2016general, chen2018robust,zhu2020robuststat}. In this setting, a fraction $\ep$ of the data may be arbitrarily corrupted, representing possible outliers in the data collection process.
Standard estimators such as the sample mean are very sensitive to this contamination, and other estimators, such as median-type methods, are required. 

In the Huber $\ep$-contamination model, only a fraction $\ep$ of the data points is modified. 
Motivated by a number of settings (see e.g., Section \ref{app: motivation wasserstein}),
we consider a different regime, where it is possible for \textit{all observations} to be perturbed by a small quantity. This can be captured by different choices of $d$, of which 
one of the canonical choices
is the Wasserstein distance from optimal transport \citep[e.g.,][etc]{villani2003topics,villani2008optimal}. 
For a clean observation $Z$ and perturbed observation $Z'$ in 
$\R^p$,
and some
a Wasserstein-$r$ contamination ($r\ge 1$)
in the $\ell_q$ 
norm ($q\in [1,\infty]$)
keeps
$(\mathbb{E}{\|Z-Z'\|_q^r})^{1/r} \le \ep$,
whereas a total variation contamination bounds $\PP{Z\neq Z'} \le \ep$.

\begin{table}[t]
\centering
\caption{Summary of our results on minimax risks under Wasserstein-$r$ contamination in $\ell_q$, $r\ge 1$, $q\in[1,\infty]$.
JC denotes joint contaminations where perturbations can be coordinated across datapoints, and IC denotes independent contaminations across data points; with $\mathcal M_J$ and $\mathcal M_I$ respectively denoting the corresponding Minimax risks. 
Here $=$ denotes an exact expression; $\asymp$ denotes an order-equivalence (rate). 
For location estimation, $E_i$, $i=1,\ldots,n$ denote the noise variables, and allow noise large enough that $\big(\E\|\bar E\|_q^r\big)^{1/r} \le s_n$ for some $s_n>0$. 
Also, 
in the linear regression problem, $Y=X\theta+E$,
allow noise large enough that  
$\big(\E\|P_XE\|_q^r\big)^{1/r} \le s_{\rm pred}$ for some $s_{\rm pred}>0$, where $P_X$ is the projection to the column span of $X$.}
\label{sum}
\scalebox{1}{
\begin{tabular}{@{}p{4.2cm}p{3cm}p{9.5cm}@{}}
\toprule
\textbf{Problem} & \textbf{Contamination} & \textbf{Minimax risk (exact / rate)} \\
\midrule

\textbf{Location estimation (JC)}; loss $\|\hat\theta-\theta\|_q^r$
& $W_{q,r}$ in $\ell_q$, \newline budget $\ep$
& $\mathcal M_J(\varepsilon,q,r) = (s_n+\varepsilon)^r$
\\

\midrule

\textbf{Location estimation (IC)}; loss $\|\hat\theta-\theta\|_q^r$
&  $W_{q,r}$ in $\ell_q$, \newline budget $\ep$
& \textbf{(A)} $\mathcal M_I(\varepsilon,q,r) \asymp (s_n+\varepsilon)^r.$
\vspace{0.5em}
\newline
\textbf{(B)} Exact risk when $q=r=2$, with $\E\|E_1\|_2^2 \le \sigma^2$:
\newline
$\mathcal M_I(\varepsilon,2,2)=
\begin{cases}
\frac{1}{n}(\varepsilon+\sigma)^2, & \varepsilon^2 \le \sigma^2/(n-1)^2,\\
\varepsilon^2 + \sigma^2/(n-1), & \varepsilon^2 > \sigma^2/(n-1)^2.
\end{cases}$
\\

\midrule

\textbf{Linear regression, \newline prediction (JC)};
\newline loss $\|\hat m(Y')-X\theta\|_q^r$
& Output-only $W_{q,r}$ on $\R^n$, budget $\rho$
& \textbf{(A)} $(s_{\rm pred}+\rho)^r \le \mathcal M_J^{\rm pred}(\rho,q,r) \le (s_{\rm pred}+\|P_X\|_{q\to q}\rho)^r.$
\vspace{0.5em}
\newline
\textbf{(B)} Exact risk when $q=2$: $\mathcal M_J^{\rm pred}(\rho,2,r) = (s_{\rm pred}+\rho)^r$.
\\

\midrule

\textbf{Density estimation (IC)};
loss $\Phi(|\hat f(x_0)-f(x_0)|)$
& $W_{q,r}$ in $\ell_q$, \newline budget $\ep$
& $\mathcal M_I(\varepsilon,s,p,r) \asymp \Phi\!\Big(n^{-s/(2s+p)} \vee \varepsilon^{s/(s+1+p/r)}\Big).$
\\

\bottomrule
\end{tabular}
}
\end{table}

These types of contaminations 
may be reflected in a number of applied domains.
Consider an example
where we aim to classify tissue datapoints from hospitals as normal or cancerous;  such as in the Camelyon17 dataset \citep{Bndi2019FromDO}.
Different hospitals may exhibit differences in their imaging techniques---a systematic difference that ``shifts" every observation a bit, as opposed to creating outliers. 
In this setting, the contaminations between hospitals might more reasonably be characterized by---for instance---a Wasserstein contamination error rather than by Huber contamination.\footnote{See Section \ref{app: motivation wasserstein} in the Appendix for additional motivation.} 

These considerations have led to a large body of work on 
distributionally robust optimization (DRO) and estimators to overcome such challenges, see for instance, \cite{kuhn2019wasserstein,rahimian2019distributionally,chen2020distributionally}, and Section \ref{subsec: related work} for an overview.
However, the fundamental limits of statistical estimation under Wasserstein-type contaminations, as measured through minimax risks, have been studied only in a few works, such as \cite{zhu2020robuststat,Liu2021RobustWE}. 
Our work aims to deepen our understanding of this area. 

In our work, 
we consider an analyst who studies a dataset, aiming to estimate a parameter in the
$\ell_q^r$ loss. 
However, the dataset is perturbed by
a Wasserstein-$r$ contamination 
in the $\ell_q$ 
norm.
In this paper, we provide the following: 
\begin{enumerate}
    \item A formulation of two types of contaminations of an independent and identically distributed (i.i.d.) sample, including a strong form of \emph{Joint Contamination} (JC), where the adversary can perturb observations in a coordinated way. 
    We also consider a lighter form of \emph{i.i.d.~Contamination} (IC), where the adversary can only perturb each data point's distribution in an i.i.d.~way. 
    \item We give a precise analysis of several fundamental
    statistical problems under these forms of contamination: location estimation (including normal mean estimation), linear regression, and non-parametric density estimation; see Table \ref{sum} for a summary.

    \item For JC location estimation, we find the \emph{exact minimax risks under Wasserstein $\ell_q^r$ perturbations} (Theorem~\ref{thm:JC-Lqr}),
    showing that the sample mean is minimax optimal in $\ell_q^r$ loss.  Specifically, denoting by $\M_J(\ep,q,r)$ the minimax risk in a location estimation problem, under JC Wasserstein $\ell_q^r$ perturbations of size $\ep\ge0$, we show that
     \[
\M_J(\ep,q,r)^{1/r}
=
\M_J(0,q,r)^{1/r}
+ \ep.
\]
 We emphasize that this result is \emph{exact in any finite sample}. 
 It has no unknown or hidden constants,
 it is non-asymptotic, and it is not just about the rate, 
 but rather gives the precise value of the minimax risk. 
This expression shows that perturbations increase $\M_J(0,q,r)^{1/r}$ linearly in $\ep$. 
The proofs of the lower bound have certain elements of novelty in the literature on robustness.
 They consist of a 
a construction of a feasible class of contaminations,  
and reductions that let us pass from arbitrary estimators
to equivariant estimators based on the Hunt-Stein theorem; which can then be handled analytically.

    For IC location estimation, the sample mean remains approximately optimal (Theorem ~\ref{thm:IC-struct}), while being still exactly optimal for $q=r=2$ (Theorem~\ref{thm: gaussian mean opt}).  
    Specifically, denote by $\M_I(\ep, 2,2)$ the
    minimax risk measured in an $\ell_2^2$ loss,
    and noise variables such that 
    $ \E\|E_1\|_2^2 \le \sigma^2$,
    under IC $\ell_2^2$ Wasserstein perturbations of size $\ep\ge0$.
    Then, for $n\ge 2$, 
the risk takes one of two forms depending on the relative sizes of $\varepsilon, \sigma^2$, and $n$: 
    \begin{align*}
        \M_I(\ep, 2,2) & =\begin{cases}
            \frac{1}{n}\left(\ep+\sigma\right)^2 & \; \ep^2 \le \frac{\sigma^2}{(n-1)^2}, \\
            \ep^2 + \frac{\sigma^2}{n-1}                   & \; \ep^2 > \frac{\sigma^2}{(n-1)^2}.
        \end{cases}
    \end{align*}
When $n=1$, $\M_I(\ep,2,2) =
            \left(\ep+\sigma\right)^2$.
Observe that there is a transition at $\ep = \sigma/(n-1)$.
 It turns out that below this perturbation level, it is a stronger contamination to add noise, whereas above it, it is stronger to add bias.
These insights arise out of a careful exact analysis of the minimax risk and worst-case perturbations.
 Again, we emphasize that this is an exact formula, and not just an approximation or a rate.

    These results present an intriguing contrast with robustness in the Huber contamination model: if each observation is perturbed a bit, we may not be able to do much better than using classical averaging-based estimators.

    \item For linear regression under JC perturbations, we show a similar phenomenon: the least squares predictor remains approximately minimax optimal in prediction error (Theorems~\ref{thm:lmr-master} and \ref{thm:minimax-pred}). 
    Specifically, denoting by $\M_J^{\rm pred}(\rho,q,r)$ 
    the minimax prediction error in a linear regression
    problem, under JC Wasserstein $\ell_q^r$ perturbations of size\footnote{As explained in more detail later, the notation $\rho$ is used here because it is not always straightforward to compare the meaning of the magnitude of $\rho$ to that of a perturbation in location estimation.} 
    $\rho\ge0$, we show that
$$
1 \le\
\frac{\M_J^{\rm pred}(\rho,q,r)^{1/r}-\M_J^{\rm pred}(0,q,r)^{1/r}}{\rho}
 \le\
\|P_X\|_{q\to q},
$$
where $\|P_X\|_{q\to q}$ is the induced $\ell_q\to\ell_q$ norm of the
 projection matrix $P_X$ onto the column span of 
 the feature matrix
 $X$. 
When $q=2$, this norm is equal to unity, implying that 
the prediction risk is exactly characterized by 
 $
\M_J^{\rm pred}(\rho,2,r)^{1/r}
=
\M_J^{\rm pred}(0,2,r)^{1/r}
+\rho.
$
This is yet another exact formula. 

Analogously, 
we  show 
that the OLS estimator
is approximately optimal in estimation error (Theorem~\ref{thm:lmr-master}).
Denoting by 
$\M_J^{\rm pred}(\rho,q,r)$ 
    the minimax prediction error in a linear regression
    problem, under JC Wasserstein $\ell_q^r$ perturbations of size $\rho$, we characterize the minimax estimation risk up to constant factors, showing that 
$$ 
\frac{1}{2}
(\M_J^{\rm est}(0,q,r)^{1/r} + \rho) \le
\M_J^{\rm est}(0,q,r)^{1/r} \le \M_J^{\rm est}(\rho,q,r)^{1/r} + \rho.$$

Analogous results hold under IC contaminations as well (see Theorem~\ref{thm:master} and Corollary~\ref{cor:IC-q2r1}).

    \item For non-parametric density estimation, we find the minimax optimal rate of estimation.  
    Specifically, 
    for a convex non-decreasing loss $\Phi$, H\"older smooth density $s$ in dimension $p$, and Wasserstein-$r$ perturbations, 
    the minimax optimal risk 
    $\M_{I}(\ep, s, p, r)$
    under Wasserstein $\ep$-IC perturbations in an $\ell_q^r$ norm is
        is the maximum of the classical non-parametric rate and a term depending on the perturbation size $\ep$ (Theorem~\ref{thm: pointwise density bounds}):  
\begin{equation*}
\M_{I}(\ep, s, p, r)
\ \asymp\
\Phi\!\Big(n^{-\frac{s}{2s+p}} \vee\ \ep^{\frac{s}{ s+1+p/r }}\Big).
\end{equation*}
    Moreover, this rate is achieved by kernel density estimation with a bandwidth 
    \(
h^\star \asymp
  n^{-\frac{1}{2s+p}}
   \vee 
  \ep^{\frac{1}{ s+1+p/r }}.
  \)
    possibly larger than the classical one, depending on the perturbation size (Theorem~\ref{thm:upper-Wr-general-Phi-hetero-d}).
 Intriguingly, this result holds even if we allow the perturbed density to have a different smoothness level than the original one. 

\item In terms of technical innovation, our proofs leverage several properties of optimal transport that are perhaps not widely used in the statistical literature, to our knowledge. These include the dynamic Benamou-Brenier formulation of optimal transport, as well as boundedness properties of the probability flow induced by optimal transport maps. 
\end{enumerate}

\subsection{Notations}
For a positive integer $n$, we denote $[n]:=\{1,2,\ldots,n\}$.
We denote by $\mathcal{Z}$ the observation space,
a subset of some Euclidean space.
All random variables we consider are defined on a common probability space $(\Omega,\mathcal{F},\mathbb{P} )$.
For a family of probability distributions $\cP$ of interest, we will denote the estimand of interest by the functional $\theta:\mathcal{P} \to \R^p$, and the clean data distribution by $\mu\in\cP$, implying the parameter of interest is $\theta(\mu)$.
When the parameter is identifiable and parametrizes the probability distributions under consideration, we may also denote the data distribution as $P_\theta$ for $\theta\in\Theta$.

We arrange $n$ observations $Z_i\in \mathcal{Z}$ (resp., $Z_i'\in \mathcal{Z}$) into\footnote{In such cases, we will not distinguish between row and column vectors.} $Z=(Z_1,\ldots,Z_n)\in \mathcal{Z}^n$ (resp. $Z'=(Z_1',\ldots,Z_n')$). 
When $Z  = (Z_1,\ldots, Z_n)$ where each $Z_i \sim \mu$ are i.i.d.~random vectors, we will write $Z\sim \mu^n$.
For a vector $v\in \R^k$ and $q \in [1,\infty]$,
we denote by $\|v\|_q$ the standard $\ell_q$ norm.

Let $\mu$ and $\nu$ be probability measures on $\R^k$, $k\ge 1$.
For $r\ge 1$, 
the Wasserstein-$r$ distance associated with the norm $\|\cdot\|$ is defined as\footnote{In our work, all probability distributions considered will be such that the displayed moments are finite; this will sometimes be kept implicit, and no confusion shall arise. }
\[W_{\|\cdot\|, r}(\mu,\nu)=\inf_{(Z,Y)\sim \Pi(\mu,\nu) } \left\{\E [\|Z-Y\|^r]\right\}^{1/r}, \]
where the infimum is taken over all couplings $\Pi(\mu,\nu)$ between $\mu$ and $\nu$.
For the special case of the $\ell_q$ norm,  
$q\in[1,\infty]$, 
we will write
$W_{q, r}$.
The optimal coupling between $\mu$ and $\nu$
exists in all cases we study \citep[e.g.,][]{villani2008optimal,santambrogio2015optimal,figalli2021invitation}.

Let $\mathcal P_{q,r}$ denote the set of probability laws on the relevant Euclidean space with finite $r$th moment in the $\ell_q$ norm, i.e. $\mu\in\mathcal P_{q,r}$ iff $\int \|z\|_q^r,d\mu(z)<\infty$. Throughout, all “clean” and “perturbed” laws we consider lie in $\mathcal P_{q,r}$, so all Wasserstein distances $W_{q,r}$ we invoke are finite and all $r$th–moment risks we compute are well defined. 
This will not be repeated further.

\subsection{Contamination Models}\label{subsec: dist shift models}
\label{mo}

Given a family  $\cP$ of distributions on $\mathcal{Z}$,
in a standard statistical setting we observe
i.i.d.\footnote{While we work with an i.i.d.~sample, our definitions and results include arbitrary, potentially non-i.i.d.~data distributions as a special case by taking the sample size $n=1$; and viewing the one observation as the dataset.}~$Z_i\sim \mu$  for $i\in[n]$ sampled from an unknown $\mu\in \cP$.
Instead, here we
observe potentially \emph{contaminated}
$Z_i'$, $i\in [n]$ that are close in distribution to $Z_i$.

For a family $\cP$ of clean distributions
and a family $\cPo^n$ of possible perturbed distributions,
we consider an \emph{ambiguity set}\footnote{Following terminology in \cite{lee2017minimax}.}  $\cV\subset \cP\times \cPo^n=\{(\mu,\vec{\nu}):\mu\in \cP, \vec{\nu}\in\cPo^n\}$,
which represents the possible contaminations under consideration.
For  $( \mu,\vec{\nu})\in \cV$,
the clean data is $Z = (Z_1,\ldots, Z_n)$
with i.i.d.~$Z_i\sim \mu$,
but we observe $Z' = (Z_1', \ldots, Z_n') \sim \vec{\nu}$.
We consider the following contaminations and associated ambiguity sets.
\begin{definition}[Contaminations]\label{def: ds}
    For a family of distributions $\cP \subset \mathcal P_{q,r}$ where $\mu\in\cP$, 
    define the joint and i.i.d. contamination ambiguity sets as follows:
    \begin{enumerate}

        \item {\bf I.I.D.~Contamination (IC):} 
        For a perturbation strength $\ep\ge 0$,
        we observe i.i.d.~$Z_i'\sim \nu$, $i\in[n]$, where 
        $W_{q, r}(\mu,\nu) \le \ep$, for $i\in [n]$, so 
        $\cV_{I}(\ep)\coloneqq \{(\mu,\nu^n):  \mu\in\cP,  W_{q, r}(\mu,\nu)\le \ep\}.$
    
        \item {\bf Joint Contamination (JC):} 
        Endow $\mathcal{Z}^n$ with the scaled norm, where $r\ge 1$, $q\in[1,\infty]$:
\begin{equation}\label{pn}
    \|(z_1,\dots,z_n)-(z'_1,\dots,z'_n)\|
:=\Big(\frac1n\sum_{i=1}^n \|z_i-z'_i\|_q^{ r}\Big)^{1/r}.
\end{equation}
        For some $\ep\ge 0$,
        we observe $Z'\sim \vec{\nu}$, such that 
        $W_{\|\cdot\|, r}(\mu^n,\vec\nu) \le \ep$,
        so 
        $\cV_{J}(\ep)\coloneqq \{(\mu,\vec\nu):  \mu\in\cP,  W_{\|\cdot\|, r}(\mu^n,\vec\nu)\le \ep\}.$

    \end{enumerate}
\end{definition}

Examples of such contaminations are in Table \ref{table: dist shift perturbations}.
The scaling in \eqref{pn} 
ensures that the average per-datapoint perturbation of each observation is at most $\ep$, and so ensures that the meaning of $\ep$ is comparable across the two settings.
 Moreover, since joint contaminations can coordinate across observations, they are stronger than independent contaminations,
 so that $\cV_{I}( \ep) \subset \cV_{J}(\ep)$; see Section \ref{relc} for a brief argument.

{
\renewcommand{\arraystretch}{1.2}
\begin{table}[t]
    \caption{Examples of joint (JC) and independent (IC) contaminations. We consider real-valued observations.
    Here $\bar Z$ is the sample mean, $Z_{(1)}$ is the minimum of the real-valued observations, $\psi$ and $\psi'$ are scalar parameters and $\mathbbm{1}$ denotes the indicator function.}
    \centering
    \begin{tabular}{l l }
        \toprule
        Contamination & \qquad\qquad\qquad\qquad Examples \\
        \midrule
        JC                & $\begin{array}{l}
                Z_i'=Z_i+\psi(\bar Z-\theta);  \quad
                Z_i'=Z_i+\psi'\mathbbm{1}\{Z_i=Z_{(1)}\}
            \end{array}$   \\[0.1cm]
        IC                & $\begin{array}{l}
                Z_i'=Z_i+\psi(Z_i-\theta);  \quad
                Z_i'=Z_i+\psi' \mathbbm{1}\{Z_i>\theta\}
            \end{array}$   \\[0.1cm]
        \bottomrule
    \end{tabular}
    
    \label{table: dist shift perturbations}
\end{table}
}

\subsection{Minimax Estimators and Risk}\label{subsec: minimax est and risk}

Consider  estimators $\htheta:\cZ^n \to \Theta$,
measurable functions from the domain $\cZ^n$ to the parameter space $\Theta$.
Next, we define the minimax risk under consideration.

\begin{definition}[Minimax risk and estimators]\label{def: minimax risk and est}
    The minimax risk for the parameter $\theta$, ambiguity set $\cV$, and loss function $\mathcal{L}: \Theta\times \Theta \to [0,\infty)$, is the largest value that the risk can take over the entire ambiguity set $\mathcal{V}$:
    \begin{equation}\label{eq: def minimax risk}
        \M(\cV,\theta)
        \coloneqq
        \inf_{\htheta:\cZ^n\to \Theta } 
        \sup_{(\mu,\vec{\nu})\in \mathcal{V}} \mathbb{E}_{Z'\sim \vec{\nu}}{\cL\bigl(\htheta(Z'), \theta(\mu)\bigr)}.
    \end{equation}
    Define $\M_J$ and $\M_I$ as the minimax risks for the ambiguity sets from Definition \ref{def: ds}.
    An estimator $\htheta$ is a JC $\ep$--\emph{minimax estimator} if it achieves the JC $\ep$--minimax risk $\M_J$,
    \[
        \sup_{(\mu,\vec{\nu})\in \mathcal{V}_J(\ep)} \mathbb{E}_{Z'\sim \vec{\nu}}{\cL\bigl(\htheta(Z'), \theta(\mu)\bigr)} = \M_J(\ep,\theta),
    \]
    and similarly for IC.
\end{definition}
An estimator is a JC $\ep$--\emph{minimax rate-optimal estimator} if the risk is of the order $\M_J$ as a function of $n$, $\ep$ and other specified problem hyperparameters, and similarly for IC.\footnote{
After contamination, 
the original value  $\theta(P)$ of the  parameter $\theta$ may not be identifiable; 
however one can still evaluate the error in estimating its original (identifiable) value, with the understanding that the error may not converge to zero as the sample size grows.}

We will often study 
loss functions of the form
$\mathcal{L}:(a,b)\mapsto \mathcal{L}(a,b) := \Phi (\|a-b\|_q)$
for
a function $\Phi:[0,\infty)\to[0,\infty)$ increasing on $[0,\infty)$---such as the power functions $\Phi(x)=x^r$, $r\ge 1$, for all $x\ge 0$.
When the context is clear, we may omit the dependence on some quantities, 
such as the function $\theta$ determining the parameter, the sample size $n$, etc.,
i.e., we may write $\M_J(\ep)$ and refer to $\ep$-minimax estimators. 

Under the contamination--free setting, the risk is $R(\htheta,\theta(\mu))=
\mathbb E\mathcal L(\hat\theta(Z),\theta(\mu))$. Since $\M_J(0) = \M_I(0)$
equal the standard minimax risk without contamination,
we denote this quantity by $\M(0)$:
\(\mathcal{M}(0)=\inf_{\htheta} \sup_{\mu \in \cP} R(\htheta,\theta(\mu)).\)
The basic observation that JC is a stronger perturbation 
implies that
    for any $\ep\ge 0$, we have
    $\M(0)\le \M_I(\ep) \le \M_J(\ep)$.

\subsection{Related Work}\label{subsec: related work}
Contamination and robust statistics are well-studied, with a large literature focusing on the Huber $\ep$-contamination model, see e.g., \cite{huberLocation,huberRatio,huber2004robust,hampel2005robust,chen2016general, chen2018robust,zhu2020robuststat}, etc. 
Given a distribution $P$, the data are sampled from a distribution $Q= (1-\ep)P+\ep P'$, where $P'$ is unconstrained in the standard formulation.

Under general forms of contamination, \cite{Donoho1998MD} show that the \emph{population analogue} of the minimax risk is determined by the modulus of continuity (also known as the gauge function). 
In contrast, in our work, we are interested in finite sample results.

The Wasserstein distance and optimal transport
\citep{villani2003topics,villani2008optimal,ambrosio2008gradient,santambrogio2015optimal,figalli2021invitation,peyre2019computational}
has recently seen growing popularity.
Similarly, Wasserstein DRO and contamination have become widely studied, see e.g., \cite{esfahani2015dro,lee2017minimax,blanchet2017distributionally,singh2018minimax,manole2019minimax,weed2019estimation,blanchet2019robust,chen2020distributionally,staib2020learning,chen2020distributionally,shafieezadeh2020wasserstein,hutter2021minimax,duchi2021statistics}.
These works typically focus on solving numerical minimax optimization problems algorithmically, where estimators minimize the least favorable risk over a class of contaminations. In contrast, we aim to show minimaxity in the sense of statistical decision theory.

\cite{zhu2020robuststat} investigate estimation under total variation and  $W_{2,1}$ contaminations. They show that in the infinite-data---population---limit,  projecting the perturbed distribution to $\cP$ is optimal up to constant factors, and the minimax rate is the modulus of continuity. 
They bound the rate of the estimation error using the modulus of continuity, and consider estimation and prediction error for random-design linear regression. 
In the linear regression example, 
we in contrast provide the \emph{exact minimax risk} for prediction error for JC $W_{q,r}$ perturbations, a sharp bound for JC estimation error, for IC contaminations; along with least favorable perturbations.
Moreover, we additionally
provide results for location and density estimation. 
The IC and JC contaminations are related to, but different from, the oblivious and adaptive corruptions from \cite{zhu2020robuststat}, where the latter perturbs a sample from the \emph{empirical}---not population---distribution.
See Section \ref{rxy} for additional discussion and comparisons. 

In closely related work, \cite{Liu2021RobustWE} 
study IC $W_{2,1}$ contaminations, while we study more general $W_{q,r}$
contaminations. 
They study classical and sparse Gaussian location estimation, covariance matrix estimation, and linear regression. 
They fit a Wasserstein generative adversarial network (W-GAN) and use the generator to estimate the parameters of interest. They show that the W-GAN gives minimax rate-optimal squared error, with high probability. 
See Section \ref{rxy} for additional discussion end comparison. 
For the problems that fall into the intersection of our scopes, our results are consistent. 
However, more broadly speaking, we study some different problems (such as density estimation), and provide different types of results (e.g., exact minimax risks for location estimation and prediction error
in linear regression under JC $W_{q,r}$
contaminations; instead of in-probability rates under IC $W_{2,1}$ contaminations. 
See Section \ref{rxy} for additional discussion end comparisons.

Next, we go on to present our results for 
three statistical problems: 
location estimation,
linear regression, and
pointwise nonparametric density estimation under H\"older smoothness.

\section{Location Parameter Estimation}

We study the problem of estimating 
a location parameter, which includes, in particular, normal mean estimation. 
This is one of the most fundamental statistical problems, widely studied,
and present in various versions in major textbooks \citep[e.g.,][etc]{casella2024statistical,lehmann1998theory}. 
Let $Z_1,\dots,Z_n$ be i.i.d.\ 
random vectors on
$\R^p$ from a location family $\mathfrak{f}_\theta(\cdot)=\mathfrak{f}_0(\cdot-\theta)$ with unknown 
location parameter
$\theta\in\Theta:=\R^p$.
Write $E_i\sim \mathfrak{f}_0$, i.i.d. for $i\in  [n]$ 
such that the location family can be represented as $Z_i = \theta + E_i$, $i\in [n]$; 
we denote the distribution of $(Z_i)_{i\in [n]}$ by $P_\theta$.
 We do not assume that we know the noise distribution, 
 only that it has certain bounded moments.

For the special case of Gaussian mean estimation, 
a fundamental result is that
the sample mean is exactly minimax optimal
for estimating $\theta$
under any $\ell_q$, $q\in [1,\infty]$, loss
\citep{lehmann1998theory}.
Inspired by this exact minimaxity result, in this work, we
consider the contaminated case, for more general location families.
We study both JC and IC contaminations, starting with JC. 

\subsection{Joint Contaminations}
We
consider
JC contaminations 
$\cV_{J}(\ep)$
and the loss function 
$\cL\bigl(\htheta(Z'), \mu\bigr) 
= \|\htheta(Z') - \theta\|_q^r$.
We show that 
for JC,
regardless of the perturbation level $\ep$, 
the \emph{sample mean remains exact minimax optimal},
 but with an increased risk.

 Observe that the above conditions do not impose 
any centering condition such  
 that the mean 
 of $E_i$ is zero.
 However, assuming such conditions is reasonable. 
Throughout our work, we will therefore impose certain types of centering conditions chosen
 in a way that 
 is both
mild and simplifies the statements. 
One such condition that we will consider is the following. 

\begin{condition}[Centering]\label{ctr}
    For
$E_1, \ldots E_n \sim \mathfrak{f}_0$,
$c\mapsto\E\|\bar E+c\|_q^r$ is minimized at $c=0_p\in \R^p$.
\end{condition}

This condition, 
 is  achieved in many cases of interest,
 for instance when the distribution of $\bar E$ is symmetric about zero.
Moreover, in our minimax result, we 
will consider
 location families such that  $(\E_{E_1, \ldots E_n \sim \mathfrak{f}_0} \|\bar E\|_q^r)^{1/r} \le s_n$ for some finite hyperparameter $s_n>0$.
Hence,
 the class of clean distributions will be 
\begin{align}\label{pj}
\mathcal{P}
= \Big\{
  &\,Z_i = \theta + E_i, i\in[n], \textnormal{ i.i.d.:}\;
    \theta \in \R^p,\;
    (\E \|\bar E\|_q^r)^{1/r} \le s_n,
  \;\textnormal{Condition \ref{ctr} holds}
\Big\}.
\end{align}

 The minimax risk is defined as the best possible worst-case risk over all location models and bounded $W_{q,r}$ perturbations:
  $$
 \M_J(\ep,q,r)
:=\inf_{\htheta} \sup_{{P_\theta\in\mathcal{P}, \, 
W_{q,r}(P_\theta, Q) \le \ep}}
\E_{Z'\sim Q}\|\widehat \theta(Z')-\theta\|_q^{ r}.
 $$
 
\begin{figure}[t]{}
    \centering
    \includegraphics[width=0.3\columnwidth]{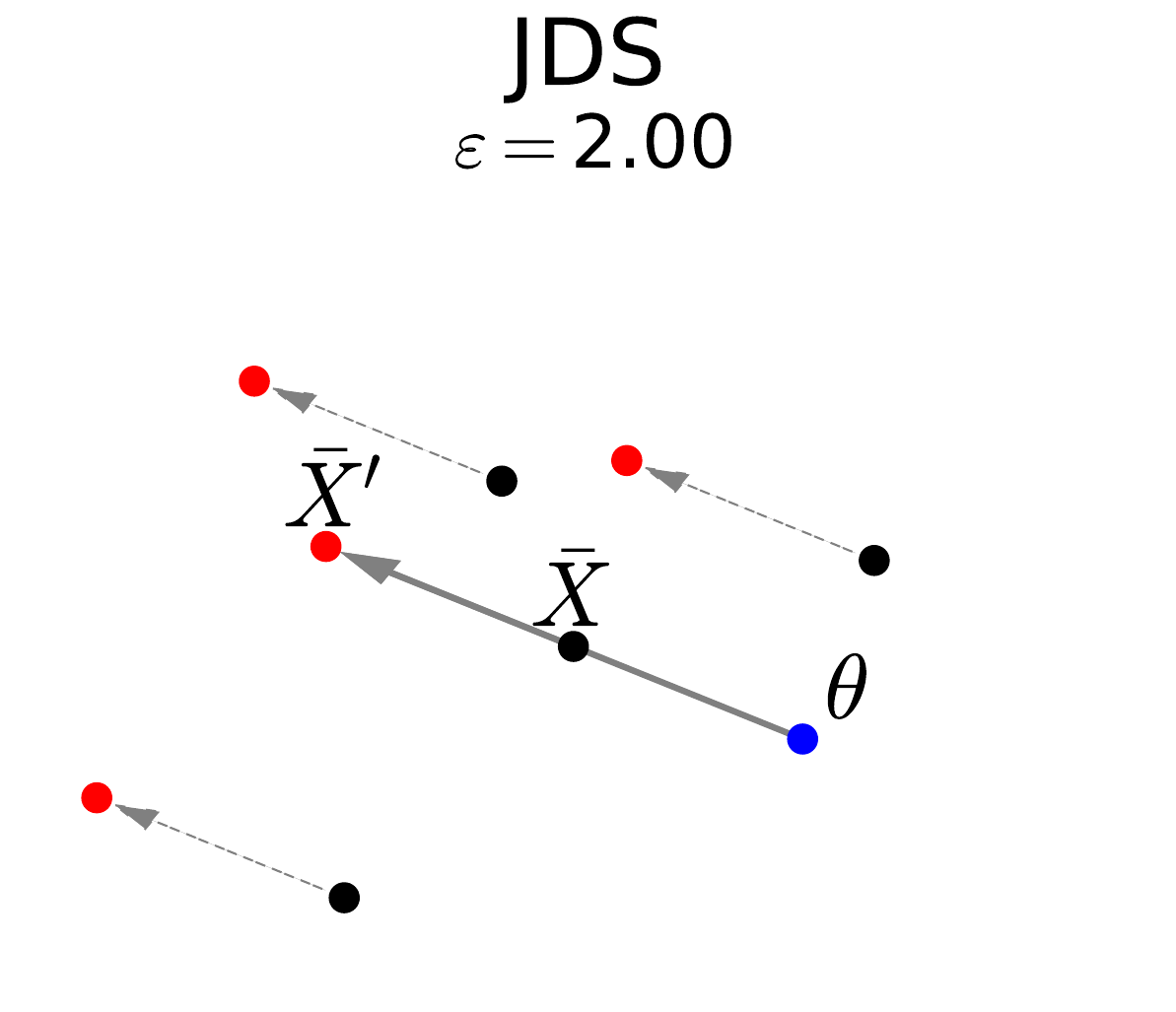}
    \includegraphics[width=0.3\columnwidth]{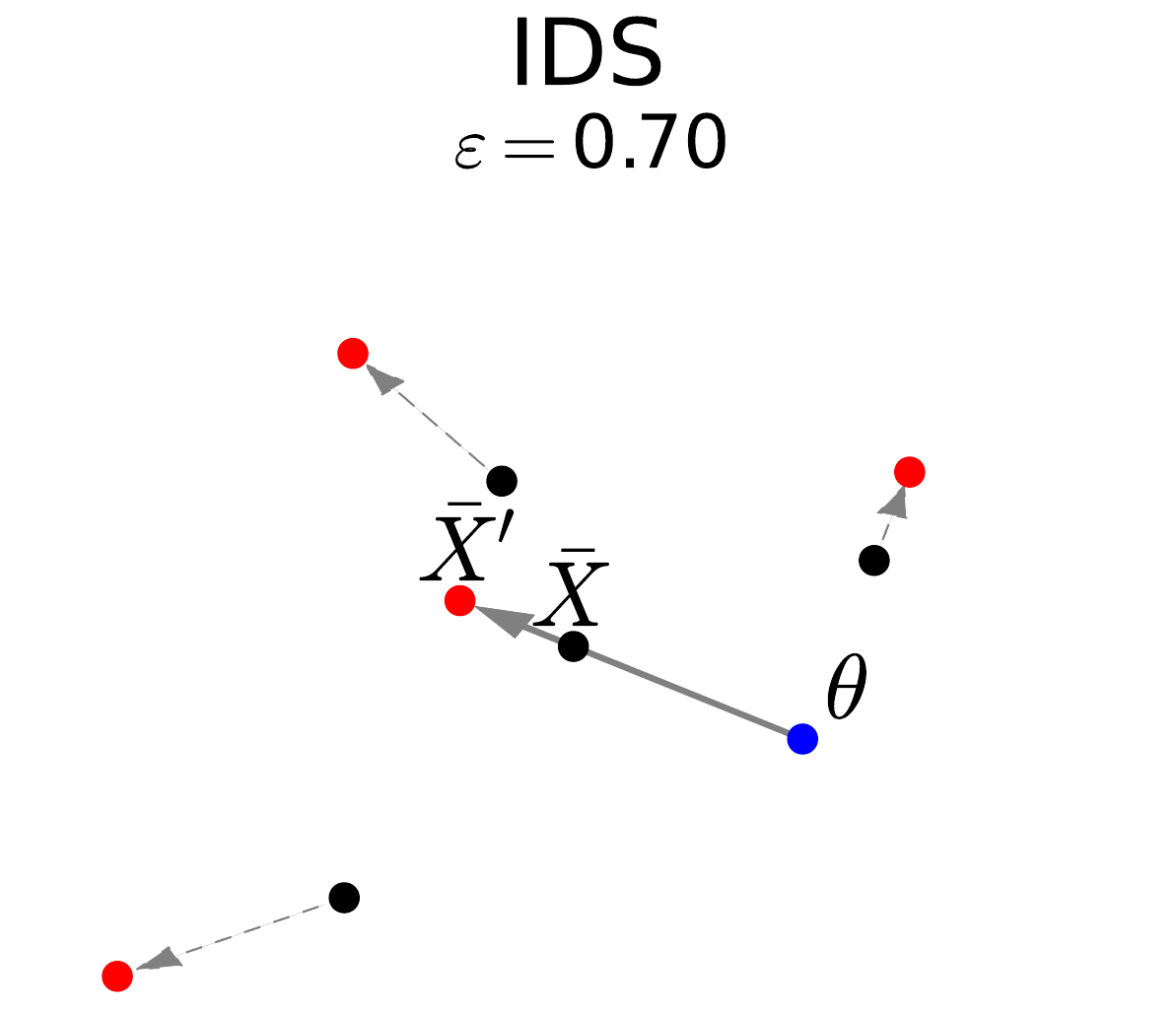}
    \includegraphics[width=0.3\columnwidth]{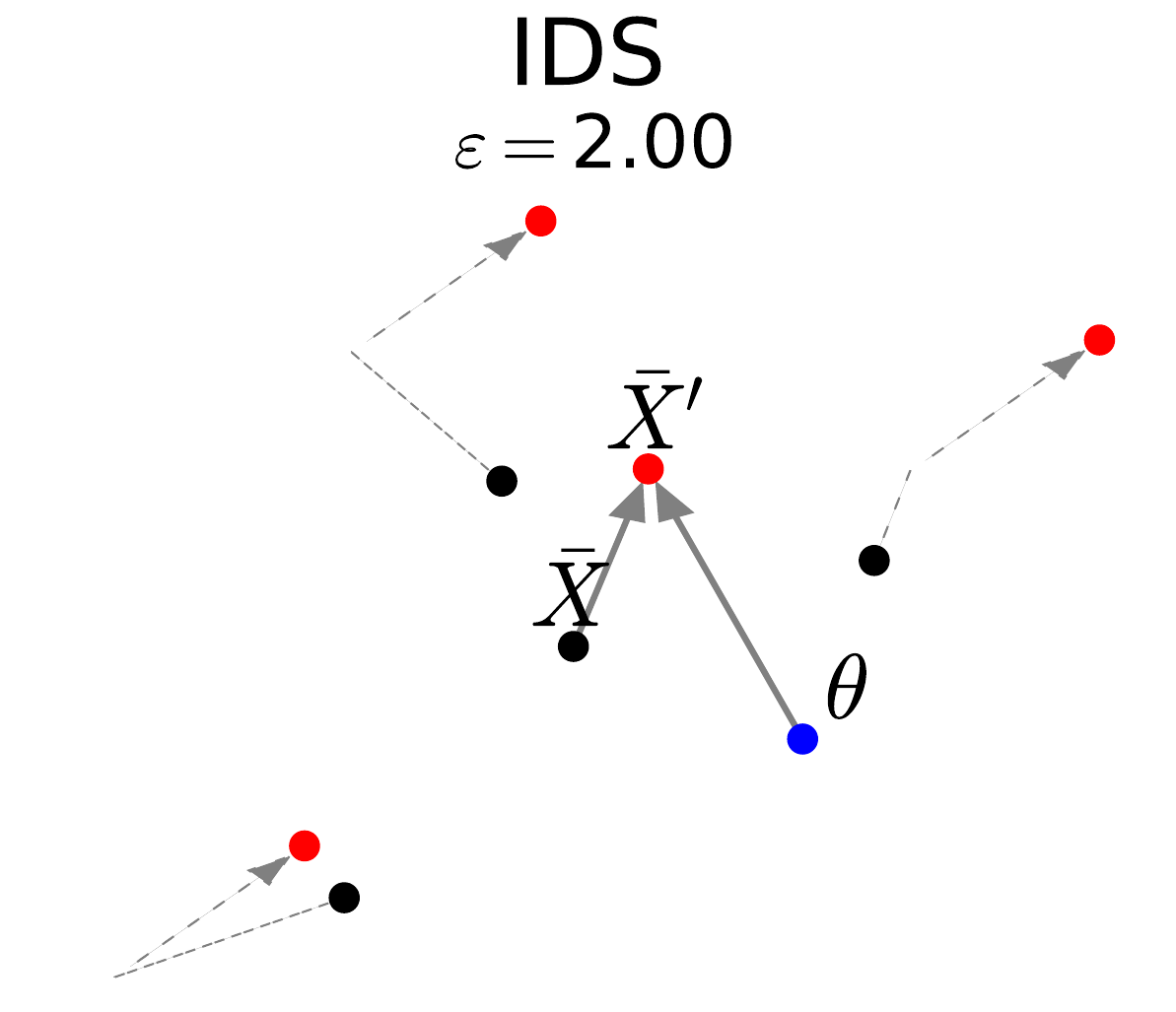}
    \caption{Representation of IC/JC adversarial perturbations for location estimation with $W_{2,2}$ perturbations and $\ell_2^2$ loss, for  $n=3$, as per \eqref{lf}. 
    We plot the true $\theta$ (blue), the unobserved clean observations (black), the observed perturbed observations (red), as well as the corresponding sample means. The second plot uses a small value of $\ep$ where $\zeta>0$ and $\psi=0$, while the third plot uses a larger value of $\ep$ where $\zeta>0$ and $\psi>0$.}
    \label{fig: gaussian perturbation}
\end{figure}

Our next result characterizes finds the value of this minimax risk. 
It also implies that, remarkably, the sample mean is minimax optimal in
for \emph{all perturbation strengths} $\ep$.
\begin{theorem}[Location parameter estimation under JC]\label{thm:JC-Lqr}
Consider the location model where
 $Z_1,\dots$, $Z_n \sim \mathfrak{f}_\theta$ are i.i.d. for some unknown 
 location parameter
$\theta\in\R^p$,
and such that for some $s_n>0$, $(\E \|\bar E\|_q^r)^{1/r} \le s_n$,
under loss $\|\hat\theta-\theta\|_q^{ r}$, where $r\ge 1$, $q\in [1,\infty]$.
 Then under joint $W_{q,r}$ contaminations of size $\ep$ (as per Definition \ref{def: ds}), the minimax risk from Definition \ref{def: minimax risk and est} equals 
\[
\M_J(\ep,q,r)
=
\Big( 
s_n + \ep\Big)^{\!r}.
\]
Moreover, the sample mean $\hat\theta=\bar Z$ is minimax optimal.
\end{theorem}

This presents an intriguing contrast with robustness in the Huber $\ep$-contamination model, where the sample mean is usually not optimal.
 Moreover, 
 by setting $\ep=0$,
 the above result also recovers the statement that the sample mean is exact minimax optimal without any perturbations, achieving risk $\M_J(0,q,r) 
 = 
 s_n$.
 Hence the above result can also be interpreted as the 
 characterizing the cost of robustness 
 by writing it as
 \[
\M_J(\ep,q,r)^{1/r}
=
\M_J(0,q,r)^{1/r}
+ \ep.
\]

 Theorem \ref{thm:JC-Lqr} 
 follows from our more general results on linear regression, see Section \ref{pfthm:JC-Lqr}; and thus we will defer the explanation of the proofs to that section.

\subsection{Independent Contaminations}
\label{subsec:ic-location-general}

Next, we consider the setting of independent contaminations.
Denote by $\mu$ the distribution of $Z_i$ and by $\nu$ the distribution of the observed and perturbed $Z_i'$, $i\in[n]$.
 The minimax risk is defined as 
\begin{equation}\label{mid}
 \M_I(\ep,q,r)
:=\inf_{\htheta} \sup_{{\mu^n\in\mathcal{P}, \, 
W_{q,r}(\mu, \nu) \le \ep}}
\E_{Z'\sim \nu^n}\|\widehat \theta(Z')-\theta\|_q^{ r}.
\end{equation}

We provide both upper and lower bounds on the minimax risk $\M_I(\ep,q, r)$. 
 
\begin{theorem}[Location family IC]
\label{thm:IC-struct}
Consider the location model from Theorem \ref{thm:JC-Lqr}.
For every $\ep\ge0$,
 the minimax risk $\M_I(\ep,q, r)$ under i.i.d. contaminations
 is bounded as 
\begin{equation}\label{eq:IC}
\max\{{\ep^r,(s_n +\ep/n^{1/2})^r }\}
\le \M_I(\ep,q, r) \le (s_n+\ep)^{r}.
\end{equation}

\end{theorem}

In fact, the upper bound 
follows from our result on joint contaminations in Theorem \ref{thm:JC-Lqr}.  
The proof of the lower bound
is similar to the ones for linear regression; see Section \ref{pfthm:IC-struct}. Again, we will discuss these proof techniques in more detail for linear regression. 

Moreover, the lower bounds
imply that
$\M_I(\ep,q,r)\ \ge 
2^{-r} (\ep+s_n)^r$.
 This 
 establishes the minimax rate for the estimation error as $\M_I(\ep,q,r)\asymp (\ep+s_n)^r$, which is the same rate as that for joint perturbations.
 Therefore, up to the rate, joint and independent contaminations coincide. 
 However, as we will show next, in certain cases, they are still clearly distinct and separated.

\subsubsection{Euclidean Wasserstein-2 perturbations}
\label{euw2}

Next, we find the \emph{exact minimax risk}
for Euclidean Wasserstein-2 IC perturbations, i.e., when 
$r = q = 2$. 
This corresponds to the most widely studied loss function in statistics, the squared $\ell_2$ error, and is thus of special significance.  
Our result will also show a clear separation and distinction between IC and GC perturbations. 
In this setting, we study the following class of distributions: 
\begin{align}\label{pj2}
\mathcal{P}'
= \Big\{
  &\,Z_i = \theta + E_i, i\in[n], \textnormal{ i.i.d.:}\;
    \theta \in \R^p,\;
    (\E \|E\|_2^2)^{1/2} \le \sigma,
  \;\textnormal{Condition \ref{ctr} holds}
\Big\}.
\end{align}

The sample mean remains optimal, but its risk exhibits an intriguing phase transition depending on the size of $\ep$. 

\begin{theorem}[IC Location Estimation, $r = q = 2$]
\label{thm: gaussian mean opt}
In the setting of Theorem \ref{thm:IC-struct}, 
with $r=q=2$,
the sample mean estimator $Z'\mapsto\bar Z'$ is \emph{exact minimax optimal},
achieving minimax risk, for $n\ge 2$,
    \begin{align*}
        \M_I(\ep, 2,2) & =\begin{cases}
            \frac{\left(\ep+\sigma\right)^2}{n} & \; \ep^2 \le \frac{\sigma^2}{(n-1)^2}, \\
            \ep^2 + \frac{\sigma^2}{n-1}                   & \; \ep^2 > \frac{\sigma^2}{(n-1)^2}.
        \end{cases}
    \end{align*}
When $n=1$, $\M_I(\ep,2,2) =
            \left(\ep+\sigma\right)^2$.
\end{theorem}

In many cases, we are interested in zero-mean noise, in which case 
 $\sigma^2 = \Tr \Sigma$ is the trace of the covariance matrix $\Sigma  = \mathrm{Cov}{Z_1}$.
The upper bound follows from a nuanced analysis
 of the risk, which in particular expands the squared error and also keeps the 
 cross terms that are not captured by our proof of the more general setting from 
 Theorem \ref{thm:IC-struct}.
Since we believe this proof has interest, we provide it later below.
 
{\bf Least favorable perturbations.}
 The proof of Theorems \ref{thm:IC-struct} and \ref{thm: gaussian mean opt} 
 above also characterize the
 least favorable perturbations.
 These are given, for $i\in[n]$, by
    \begin{align}\label{lf}
    Z_i'=
        Z_i+\begin{cases}
            \ep\sqrt{\frac{n}{\sigma^2}}(\bar Z-\theta) & \text{ for JC}, \\
            \zeta (Z_i-\theta)+\psi\delta                 & \text{ for IC}, \\
        \end{cases}
    \end{align}
    where
    \begin{align}\label{eq: zeta psi}
        \zeta =\min\left(\sqrt{\frac{\ep^2}{\sigma^2}},\frac{1}{n-1}\right), \quad
        \psi =\sqrt{\max\left(0,\ep^2-\frac{\sigma^2}{(n-1)^2}\right)},
    \end{align}
    and $\delta$ is any unit vector.

In the JC setting, the least favorable perturbation in \eqref{lf} shifts the 
sample
mean away from the true parameter, in the direction of the realized noise $\bar Z -\theta$.
This requires coordinating the shift jointly over all datapoints, and makes the contaminated data dependent. 

{\bf Phase transition.}
There is an intriguing phase transition. 
For the IC setting, we see from \eqref{eq: zeta psi} that the least favorable perturbation can take two possible forms.
For small values of $\ep \le \sigma/(n-1)$, 
the perturbation scales every observation away from the true parameter, in the direction of the realized noise $Z_i-\theta$, introducing purely noise and variance.
For larger values of $\ep$, the perturbation focuses on shifting every observation in a fixed direction, as $\psi\gg \zeta$ for $\ep\to\infty$, introducing mostly bias. 
In \cref{fig: gaussian perturbation}, we plot $\zeta$ and $\psi$ for varying values of $\ep$.

\begin{figure}[t]
    \centering
    \begin{subfigure}{0.45\columnwidth}
        \centering
        \includegraphics[width=\columnwidth]{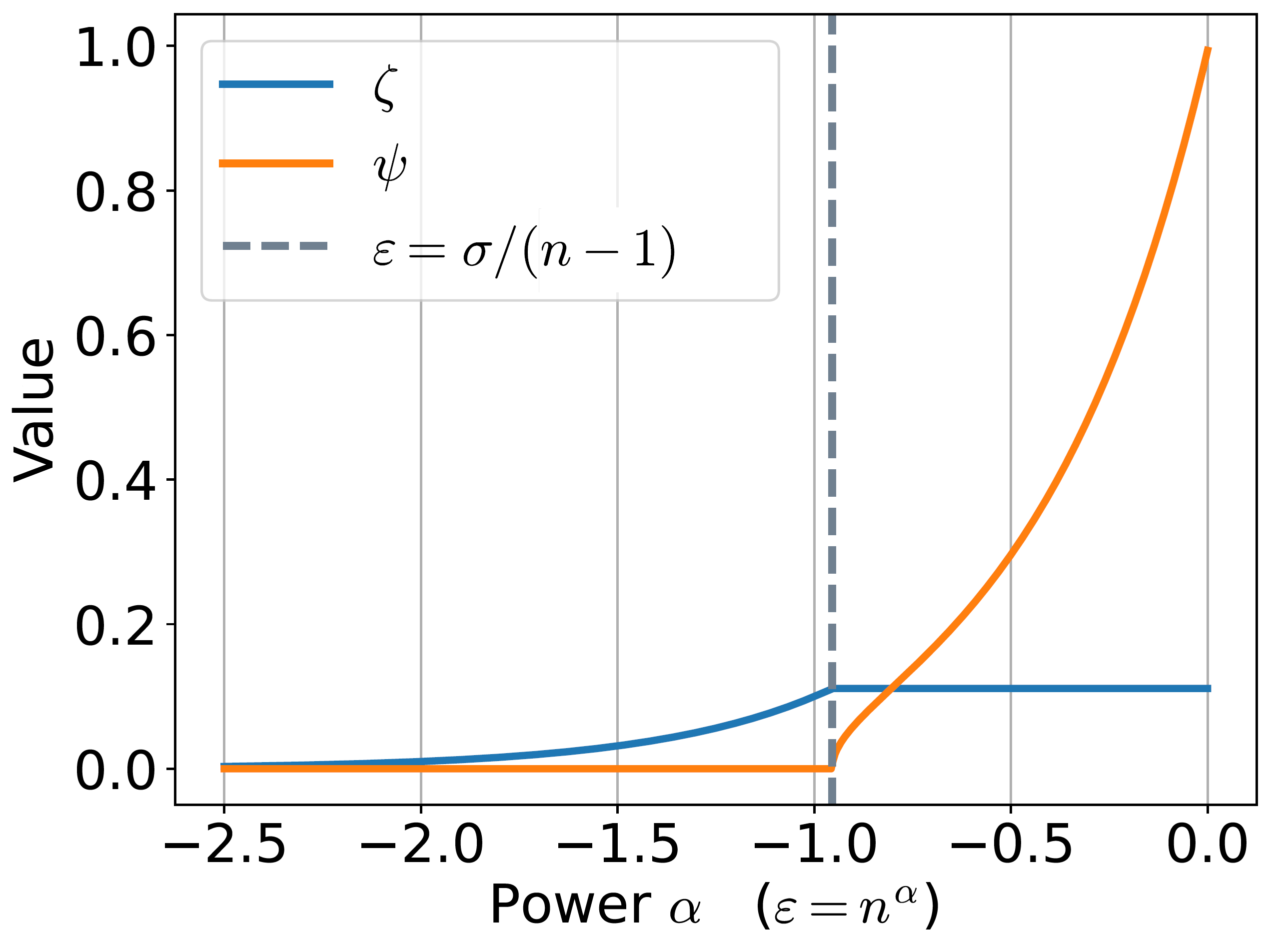}
    \end{subfigure}
    \begin{subfigure}{0.45\columnwidth}
        \centering
        \includegraphics[width=\columnwidth]{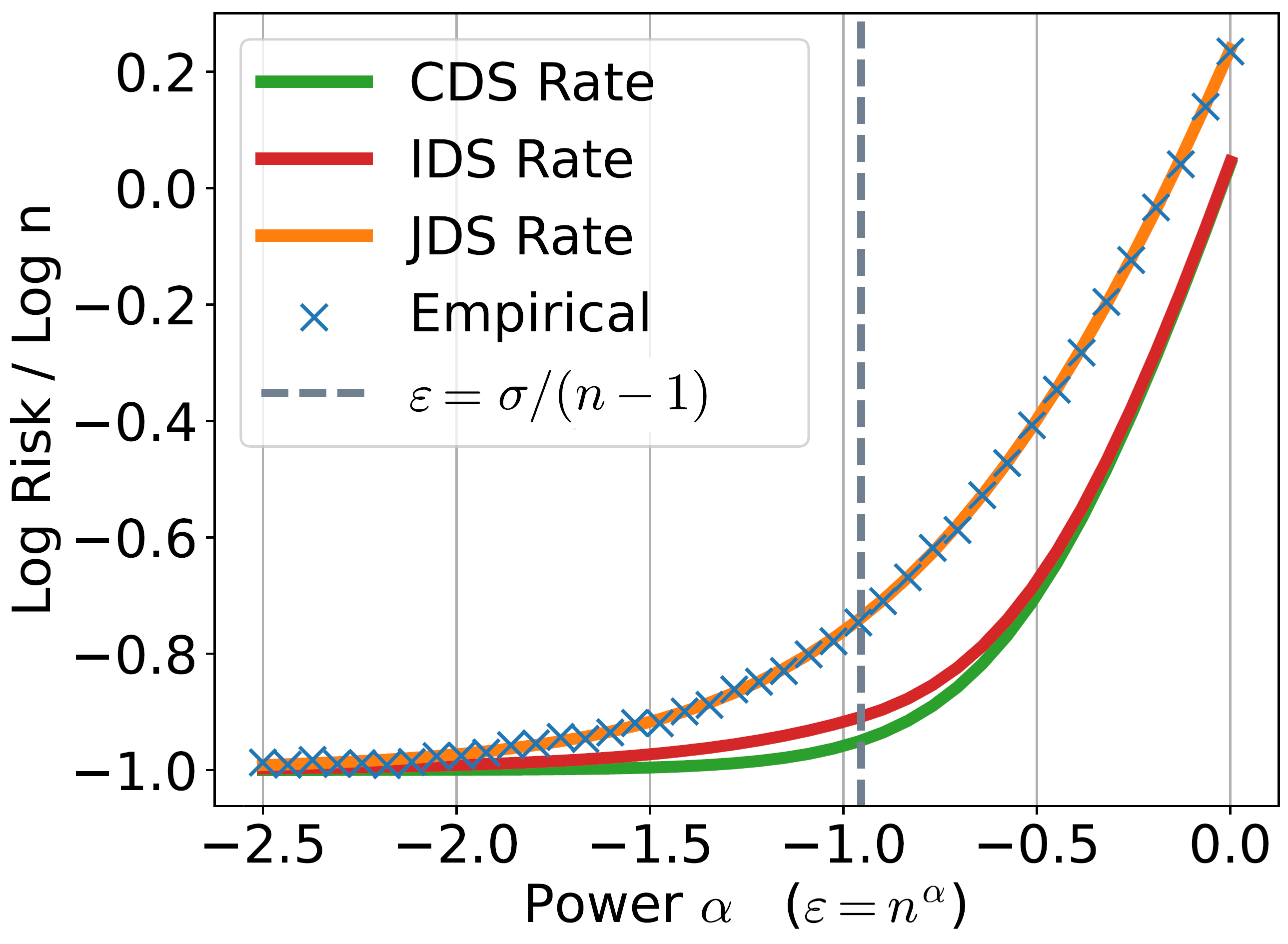}     
    \end{subfigure}
    \caption{\textbf{Left:} Values of $\zeta$ and $\psi$ for IC perturbations for varying $\ep$, as per \eqref{eq: zeta psi}. 
    We see that for small values of $\ep$, $\zeta$ increases while $\psi$ is zero, and past the transition (gray) of $\ep=\sigma/(n-1)$, $\zeta$ remains constant at $1/(n-1)$ while $\psi$ increases with $\ep$.
    \textbf{Right:}
    For the IC Gaussian mean estimation from Section \ref{euw2}, 
    we plot the log squared error divided by $\log n$,
    for $n=10$, $p=3$, and $\Sigma=6^{-1}\mathrm{diag}(1,2,3)$ (so that $\Tr(\Sigma)=1$), and the IC transition represented with the gray dashed line. For small values of $\ep$, we see a rate of $n^{-1}$, as expected.}
    \label{fig: gaussian perturbation risk}
\end{figure}
To provide some intuition of the transition in the IC setting, consider estimating $\theta$ from observations $Z_i\sim \cN(\theta,1)$ for $i\in[n]$ where $\theta=0$.
Then consider the two perturbation classes $Z_{i,1}'=(1+\ep)Z_i$ and $Z_{i,2}'=Z_i+\ep \delta$ for a unit vector $\delta$, for $i\in[n]$. 
These represent the scaling and shifting done by $\psi\delta$ and $\zeta$ in the IC perturbation from \eqref{eq: zeta psi}.
One can verify that $\|Z_{i,1}'-Z_i\|_{\ell_2}=
\|Z_{i,2}'-Z_i\|_{\ell_2}=\ep$; 
hence the $W_{2,2}$ distance is bounded by $\ep$.
Evaluating the risk, we have
$\|\bar Z'_1 - \theta \|_{\ell_2}^2\sim (1+\ep)^2/n$ and  $\|\bar Z'_2 - \theta\|_{\ell_2}^2\sim \ep^2 +  1/n$. For $\ep\lesssim 1/n$, the first---scaling/noise---perturbation class has greater risk, while for larger $\ep$, the second---constant shift/bias---perturbation class has greater risk. 
The transition occurs at $\ep\sim 1/n$, consistent with our theorem.

In the right plot of \cref{fig: gaussian perturbation risk}, we plot the IC and JC rates for varying levels of $\ep$ parameterized as $\ep=n^\alpha$. 
We plot the base-$n$ logarithm of the risk, which represents the exponent of the risk, e.g., $\ep=0$ leads to a risk of $n^{-1}$ and a plotted value of $-1$. 
We also plot the empirical risk of the sample mean estimator under our least favorable adversarial perturbations, and find that it matches the theoretical value. 
See \cref{subsec: gaussian simulations} for further experimental details.

\begin{proof}[Proof of Theorem \ref{thm: gaussian mean opt}.]
{\bf Upper bound.}
For the upper bound, 
choose couplings $(Z_i,Z_i')$ such that $\mathbb{E}{\|Z_i'-Z_i\|_2^2} = W_{2,2}(Z_i',Z_i)^2$ $\le \ep^2$, $i\in[n]$, 
and consider the sample mean $\htheta(Z') = \bar Z'$. 
    Decomposing the risk,
    \begin{align*}
        \M_I\le \mathbb{E}{\|\bar Z' - \theta\|_2^2}
         & = \mathbb{E}{\left\|\bar Z' - \bar Z\right\|_2^2} +  \frac{\sigma^2}{n}+2\mathbb{E}{(\bar Z'-\bar Z)^\top(\bar Z-\theta)}.
    \end{align*}
    For the cross term, since $(Z_i,Z_i')$ and $(Z_j,Z_j')$ are mutually independent and identically distributed for $i\neq j$,
    \begin{align*}
        \mathbb{E}{(\bar Z'-\bar Z)^\top(\bar Z-\theta)}
         & = \frac{1}{n^2}\sum_{i=1}^n \mathbb{E}{(Z_i'-Z_i)^\top(Z_i-\theta)}+\frac{1}{n^2}\sum_{i\neq j} \mathbb{E}{(Z_i'-Z_i)^\top(Z_j-\theta)} \\
         & = \frac{1}{n} \mathbb{E}{(Z_1'-Z_1)^\top(Z_1-\theta)}.
    \end{align*}

    Therefore
    \[\M_I - \frac{\sigma^2}{n}\le \mathbb{E}{\left\|\bar Z'-\bar Z\right \|_2^2}+\frac{2}{n} \mathbb{E}{(Z_1'-Z_1)^\top(Z_1-\theta)}.\]

    Define, for all $i\in[n]$, $D_i=Z_i'-Z_i$ and $E_i=Z_i-\theta$. It follows that $E_i\simiid \mathfrak{f}_0$, $D_i$ are i.i.d., and $\mathbb{E}{\|D_i\|_2^2}\le \ep^2$.
    Denoting $\mathcal{I}\coloneqq  \M_I  - \frac{\sigma^2}{n}$,
    \[n^2\mathcal{I}\le \mathbb{E}{\left\|\sum_{i=1}^n D_i \right \|_2^2}+ 2n \mathbb{E}{D_1^\top E_1}.\]
    Consider the following decomposition, with terms defined below:
    \begin{align*}
        D_i = \underbrace{\mathbb{E}{D_i}}_{\psi\delta} + \underbrace{(\mathbb{E}({D_i\mid E_i}) - \mathbb{E}{D_i})}_{g(E_i)} + \underbrace{(D_i - \mathbb{E}({D_i\mid E_i}))}_{b(E_i,D_i)}.
    \end{align*}
    Thus, we define
    $\psi\in \R$ and
    the deterministic unit vector $\delta$
    by
    $\psi\delta = \mathbb{E}{D_i}$, setting $\delta$ arbitrarily if $\psi=0$.
    Also, $g(E_i) =\mathbb{E}({D_i\mid E_i}) - \mathbb{E}{D_i}$, and $b(E_i,D_i)=D_i - \mathbb{E}({D_i\mid E_i})$, representing the bias of $D_i$. Next, we can bound
    \begin{align*}
        n^2\mathcal{I} & \le \mathbb{E}{\left\|\sum_{i=1}^n (\psi\delta + g(E_i) + b(E_i,D_i))\right \|_2^2}+ 2n \mathbb{E}{(\psi\delta + g(E_1) + b(E_1,D_1))^\top E_1} \\
                    & = \mathbb{E}{\left\|\sum_{i=1}^n (\psi\delta + g(E_i) + b(E_i,D_i))\right \|_2^2}+ 2n \mathbb{E}{(\psi\delta + g(E_1))^\top E_1},
    \end{align*}
    where the last equality uses the fact that $\mathbb{E}{b(E_i,D_i)^\top E_1}=0$ from the law of total expectation.

    We aim to choose $g$, $b$, $\psi$, and $\delta$ to maximize the following optimization problem,
    \begin{align*}
        \sup_{g,b,\psi,\delta} & \quad  \mathbb{E}{\left\|\sum_{i=1}^n (\psi\delta + g(E_i) + b(E_i,D_i))\right \|_2^2}+ 2n \mathbb{E}{(\psi\delta + g(E_1))^\top E_1} \\
        \text{s.t.}            & \quad  \mathbb{E}{\|\psi\delta + g(E_i) + b(E_i,D_i)\|_2^2}\le \ep^2 \text{ for all } i\in[n].
    \end{align*}
    Intuitively, since the expected squared norm of $\psi \delta + g(E_i) + b(E_i,D_i)$ is bounded by $\ep^2$,
    we should set $b(E_i,D_i)=0$.
    Formally, this is shown via Lemma \ref{lemma: optim set to zero} by choosing $T_i=\psi\delta + g(E_i)$ and $V_i=b(E_i,D_i)$ and noticing that
    \begin{align*}
        \mathbb{E}[{(\psi \delta + g(E_i))^\top b(E_i,D_i)}] & =\mathbb{E}[{\mathbb{E}({D_i \mid E_i})^\top (D_i - \mathbb{E}({D_i\mid E_i}))}]                        \\
                                                    & =\mathbb{E}[{\mathbb{E}({D_i \mid E_i})^\top D_i] 
                                                    - \mathbb{E}[{\mathbb{E}({D_i \mid E_i})^\top D_i\mid E_i}}]=0
    \end{align*}
    from the law of total expectation.
    Therefore we have
    \begin{align*}
        n^2\mathcal{I} \le \sup_{g,\psi,\delta} & \quad  \mathbb{E}{\left\|\sum_{i=1}^n (\psi\delta + g(E_i))\right \|_2^2}+ 2n \mathbb{E}{(\psi\delta + g(E_1))^\top E_1} \\
        \text{s.t.}                          & \quad  \mathbb{E}{\|\psi\delta + g(E_i)\|_2^2}\le \ep^2 \text{ for all } i\in[n].
    \end{align*}
    Directly expanding the objective and constraints, using the facts $\mathbb{E}{g(E_i)}=\mathbb{E}{E_i}=0$, $\|\delta\|_2=1$, and $\mathbb{E}{g(E_1)^\top E_1} \le \sqrt{\mathbb{E}{\|g(E_1)\|_2^2 } \mathbb{E}{\|E_1\|_2^2}}$ from the Cauchy-Schwarz inequality,
    \begin{align*}
         & \mathbb{E}{\left\|\sum_{i=1}^n (\psi\delta + g(E_i))\right \|_2^2}+ 2n \mathbb{E}{(\psi\delta + g(E_1))^\top E_1}=(n\psi)^2  + \sum_{i=1}^n \mathbb{E}{\|g(E_i)\|_2^2}+2n \mathbb{E}{g(E_1)^\top E_1} \\
         & \qquad\qquad\le (n\psi)^2  + \sum_{i=1}^n \mathbb{E}{\|g(E_i)\|_2^2}+2n \sqrt{\mathbb{E}{\|g(E_1)\|_2^2} \sigma^2}.
    \end{align*}
    Further,
    $\mathbb{E}{\|\psi\delta + g(E_i)\|_2^2} = \psi^2 + \mathbb{E}{\|g(E_i)\|_2^2}$.
    Thus,
    
    \begin{align*}
        n^2\mathcal{I} \le \sup_{\psi} & \quad n^2\psi^2  + n(\ep^2-\psi^2)+2n \sqrt{(\ep^2-\psi^2)\sigma^2} \quad
        \text{s.t.}               \quad \psi\le \ep.
    \end{align*}
    Solving this using calculus,
    for $n\ge 2$,
    the optimal value of $\psi$ is
    $
        \psi =  \sqrt{\max\left(0,\ep^2-\frac{\Tr \Sigma}{(n-1)^2}\right)}$
    and our optimization objective becomes 
    \begin{align*}
         \M_I  - \frac{\sigma^2}{n} & = \mathcal{I} \le 
        \begin{cases}
        \frac{\ep^2}{n} + \frac{2\ep \sigma}{n} 
             & \ep^2 \le \frac{\sigma^2}{(n-1)^2}, \\
              \ep^2 + \frac{\sigma^2}{n(n-1)} & \ep^2 > \frac{\sigma^2}{(n-1)^2},
        \end{cases}
    \end{align*}
    This leads to the desired upper bound.
    For $n=1$, the maximum is always at $\psi=0$, which leads to the desired claim.
    To obtain equality in the Cauchy-Schwarz inequality used above, we need $g(E_i)=\zeta E_i$, leading to
    \(\zeta  = \sqrt{\frac{\ep^2-\psi^2}{\sigma^2}} = \min\left(\sqrt{\frac{\ep^2}{\sigma^2}},\frac{1}{n-1}\right),\)
    which matches \eqref{eq: zeta psi}.

{\bf Lower bound.}
Fix an arbitrary estimator $\widehat\theta:\mathcal{Z}^n\to\mathbb{R}^p$. Let $Z_i=\theta+E_i$
with i.i.d. $E_i\sim \N(0,\sigma^2)$.
Throughout, we write $\bar E:=\frac1n\sum_{i=1}^n E_i$.

\medskip
\noindent\textit{Feasible pair of contaminations.}
Fix any unit vector $\delta\in\mathbb{R}^p$ and any numbers $\zeta,\psi\ge 0$ such that $\zeta^2 \sigma^2+\psi^2\le \ep^2$. Define two i.i.d.\ contaminated laws $\nu_\pm$ by declaring that, under $\nu_\pm$,
\begin{equation*}
Z_i' \;=\; \theta \pm \psi \delta + (1+\zeta) E_i,
\qquad i=1,\dots,n.
\end{equation*}
For each $i$, couple $Z_i=\theta+E_i$ with $Z_i'$ as above. Then $\mathbb{E}\|Z_i'-Z_i\|_2^2=\mathbb{E}\|\zeta E_i\pm \psi\delta\|_2^2=\zeta^2\mathbb{E}\|E_i\|_2^2+\psi^2=\zeta^2 \sigma^2+\psi^2\le \ep^2$. Hence $W_{2,2}(\nu_\pm,\mathfrak{f}_\theta)\le \ep$, so $\nu_\pm$ are admissible IC contaminations.

For the laws $\nu  =\nu_{\pm}$ and any $\theta$,
\begin{equation*}
\mathbb{E}_{\nu}\|\widehat\theta(Z'_1,\dots,Z'_n)-\theta\|_2^2
=
\mathbb{E}_{\nu}\Big[\mathbb{E}_{\nu}\big(\|\widehat\theta(Z')-\theta\|_2^2\mid \bar Z'\big)\Big]
\;\ge\;
\mathbb{E}_{\nu}\big\|\mathbb{E}_{\nu}[\widehat\theta(Z')\mid \bar Z']-\theta\big\|_2^2,
\end{equation*}
where the inequality is Jensen’s applied conditionally on $\bar Z'$.
 Moreover, since 
 $Z'$ follows a normal distribution, 
 the distribution of $Z'$ given $\bar Z$ 
 does not depend on $\theta$, and so the map $\bar Z' \mapsto g(\bar Z'):=\mathbb{E}_{\nu}[\widehat\theta(Z')\mid \bar Z']$ defines a valid estimator. 
Thus, for the purpose of a lower bound, we may replace $\widehat\theta$ by an estimator $g$ depending only on the sample mean $\bar Z'$.

Under $\nu_\pm$ we have $\bar Z'_\pm=\theta \pm \psi\delta+(1+\zeta)\bar E$. For any $g$ and any $\theta$,
\begin{align*}
\max\!\big\{\mathbb{E}_{\nu_+}\|g(\bar Z'_+)-\theta\|_2^2, \mathbb{E}_{\nu_-}\|g(\bar Z'_-)-\theta\|_2^2\big\}
\;\ge\;
\frac12\Big(\mathbb{E}_{\nu_+}\|g(\bar Z'_+)-\theta\|_2^2+\mathbb{E}_{\nu_-}\|g(\bar Z'_-)-\theta\|_2^2\Big).
\end{align*}
Introduce a Rademacher sign $S\in\{\pm 1\}$ independent of $\bar E$, and set $U:=S \psi\delta+(1+\zeta)\bar E$. Then the average on the right equals $\mathbb{E}\|g(\theta+U)-\theta\|_2^2$. Therefore,
\begin{equation*}
\sup_{\nu\in\{\nu_+,\nu_-\}} \sup_{\theta\in\mathbb{R}^p} \mathbb{E}_{\nu}\|\widehat\theta-\theta\|_2^2
\;\ge\;
\sup_{\theta\in\mathbb{R}^p} \inf_{g:\mathbb{R}^p\to\mathbb{R}^p} \mathbb{E}\|g(\theta+U)-\theta\|_2^2.
\end{equation*}

By Lemma~\ref{lem:equiv-minimax} (Hunt–Stein for the translation group), for any noise $U$ with $\mathbb{E}\|U\|_2^2<\infty$,
\begin{equation*}
\inf_{g} \sup_{\theta} \mathbb{E}\|g(\theta+U)-\theta\|_2^2
\;=\;
\inf_{c\in\mathbb{R}^p} \mathbb{E}\|U+c\|_2^2.
\end{equation*}
Due to our assumptions. 
the minimum is attained at $c=0$ and equals $\mathbb{E}\|U\|_2^2$. Using independence of $S$ and $\bar E$,
\begin{equation*}
\mathbb{E}\|U\|_2^2=
\mathbb{E}\|(1+\zeta)\bar E\|_2^2+\Tr\mathrm{Cov}(S \psi\delta)
=
\frac{(1+\zeta)^2}{n} \sigma^2+\;\psi^2 .
\end{equation*}
Therefore, for every estimator $\widehat\theta$,
\begin{equation*}
\sup_{\nu\in\{\nu_+,\nu_-\}} \sup_{\theta} \mathbb{E}_{\nu}\|\widehat\theta-\theta\|_2^2
\;\ge\;
\frac{(1+\zeta)^2}{n} \sigma^2+\;\psi^2.
\end{equation*}
Thus, we obtained the lower bound
\begin{equation*}
\inf_{\widehat\theta} \sup_{\substack{\mathfrak{f}_0:\ \E\|E_1\|_2^2 = \sigma^2\\ \nu:\ W_{2,2}(\nu,\mathfrak{f}_\theta)\le \ep}} \mathbb{E}_{\nu}\|\widehat\theta-\theta\|_2^2
\;\ge\;
\sup_{\substack{\zeta,\psi\ge 0, \zeta^2 \sigma^2+\psi^2\le \ep^2}}
\left(\frac{(1+\zeta)^2}{n} \sigma^2+\;\psi^2\right).
\end{equation*}

Impose the budget $\psi^2=\ep^2-\zeta^2 \sigma^2$
with $\zeta\in[0,\ep/\sigma]$. The objective reduces to
\begin{equation*}
F_T(\zeta)\;=\;\frac{(1+\zeta)^2}{n} \sigma^2+\;\ep^2-\zeta^2 \sigma^2
\;=\;
\ep^2+\frac{\sigma^2}{n}+ \frac{2\zeta \sigma^2}{n} - \zeta^2 \sigma^2\!\left(1-\frac{1}{n}\right).
\end{equation*}
For $n\ge 2$,
this is a concave quadratic in $\zeta$, maximized at $\zeta^\star=\frac{1}{n-1}$ provided $\frac{1}{n-1}\le \ep/\sigma$; otherwise the maximum is at the boundary $\zeta=\ep/\sigma$. Thus
\begin{equation*}
\max_{\zeta,\psi} \left(\frac{(1+\zeta)^2}{n} \sigma^2+\;\psi^2\right)
=
\begin{cases}
\dfrac1n\bigl(\sigma+\ep\bigr)^2, & \ep^2\le \dfrac{\sigma^2}{(n-1)^2},\\[6pt]
\ep^2+\dfrac{\sigma^2}{ n-1 }, & \ep^2> \dfrac{\sigma^2}{(n-1)^2}.
\end{cases}
\end{equation*}
When $n=1$, the first branch always applies. 
This is exactly the claimed lower bound. 

\end{proof}

\subsubsection{Gaussian Case, Euclidean Wasserstein-1 Contaminations}

Next, we also 
particularize our 
IC bound from Theorem \ref{thm:IC-struct}
for the case
 of Euclidean Wasserstein-1 contaminations 
where 
$q=2$ and $r=1$, 
and for Gaussian mean estimation.
This enables us to compare our results with those from the prior work by \cite{Liu2021RobustWE}. 

In this setting, we study 
normal noise, 
and similarly to  $\mathcal{P}'$ from \eqref{pj2},
we consider
the normal location family: 
\begin{align}\label{pj2g}
\mathcal{P}'_{\mathcal{N}}
= \Big\{
\,Z_i = \theta + E_i, i\in[n], \textnormal{ i.i.d.:}\;
    \theta \in \R^p,\;
E_i \sim \N(0,\Sigma), \Sigma \succ 0_{p\times p}, \E\|E_i\|_2 \le \sigma\Big\}.
\end{align}
The minimax risk
 $\M_{I,\N}(\ep,q,r)$  is defined as $ \M_I(\ep,q,r)$ in \eqref{mid}, but replacing $\mathcal{P}'$ with $\mathcal{P}'_{\mathcal{N}}$.

Our result shows that the sample mean is minimax optimal within a factor of (at least) 1/2. 

\begin{corollary}[IC location under $W_{2,1}$ for spherical Gaussian noise]\label{cor:IC-q2-r1-Gaussian-sph}
In the setting of Theorem \ref{thm:IC-struct}, 
for normal noise
with $W_{2,1}(\nu,\mathfrak{f}_\theta)\le\ep$ and loss $\mathcal L(\hat\theta,\theta)=\|\hat\theta-\theta\|_2$, one has, for all $\ep\ge0$,
\begin{equation}\label{eq:IC-q2-r1-Gauss-spherical-sim}
\frac{1}{2}
\Big\{ \ep + \frac{\sigma}{\sqrt{n}}\Big\}
\ \le\
\M_{I,\N}(\ep,1,2)
\ \le\
\frac{\sigma}{\sqrt{n}}+\ep.
\end{equation}
\end{corollary}
See section \ref{pfthm: gaussian mean opt} for the proof.

Our result characterizes the expected minimax risk within a factor of 1/2, and moreover shows that it is attained by the sample mean.
 Standard normal noise corresponds to $\Sigma = I_p$, 
 in which case
 $c_p=\E\|E_1\|_2=\sqrt{2} \Gamma\!\big(\tfrac{p+1}{2}\big)/\Gamma\!\big(\tfrac{p}{2}\big)$; 
 so that this case can be studied by taking 
 $\sigma = c_p$ above.
 Since  $\sqrt{p-1}\le c_p \le \sqrt{p}$, 
 we conclude that the minimax risk for estimation under standard Gaussian noise and $W_{2,1}$ $\ep$-perturbations satisfies
 \begin{equation*}
\frac{1}{2}
\Big\{ \ep + \sqrt{\frac{p-1}{n}}\Big\}
\ \le\
\M_{I,\N}(\ep,1,2)
\ \le\
\sqrt{\frac{p}{n}}+\ep.
\end{equation*}
 We now compare with Theorem 4 of \cite{Liu2021RobustWE}, which shows that 
 with exponentially high probability,
 a certain WGAN estimator 
 satisfies $\|\hat{\theta}-\theta\|_2 \le C(\sqrt{p/n}+\ep)$ for an unspecified constant $C$.
 Moreover, their Theorem 5 shows that
 for any estimator $\hat \theta$,
 the loss $\|\hat{\theta}-\theta\|_2$ is lower bounded 
with a constant probability 
 by 
 $C(\sqrt{p/n}+\ep)$
 for an unspecified constant $C$; thus providing an in-probability minimax rate.
 In contrast,
 our result does not depend on unspecified constants, 
 while showing that even the sample mean achieves this particular rate (and thus, for this particular setting, the WGAN does not have a clear theoretical advantage).

\section{Linear Regression}
\label{lr}

We now consider the important problem of linear regression. 
We observe an $n\times p$ data matrix $X$,
and an $n\times 1$ outcome vector $Y$.
We work in the linear model, where  $Y\sim P_\theta$
is distributed as 
\begin{equation}\label{lre}
Y=X\theta+E,\qquad X\in\R^{n\times p}  \text{ with} \rank(X)=p,
\end{equation}

where $n \ge p$ and $X$ is of full rank.
Here $E$ is an $n\times 1$ noise vector; 
we will not assume that we know the distribution of $E$,
instead, we will only impose certain moment conditions. 

\subsection{Outcome Contaminations, Fixed Design}
\label{oc-fd}

 We will first consider $X$ as fixed, but we will later also generalize our results to a random design setting, see Section \ref{subsec:XY-JC}. 
Since our contamination models were originally defined for an i.i.d.~sample, we need to adapt them slightly.
As we are not assuming that the coordinates of $Y$ are independent, 
we choose to view $Y$ as a single observation. 
Thus, we apply the contamination definitions from \Cref{def: ds} for a sample size of one, with $Y$ as the observation.
Viewed this way, JC and IC are equivalent. 
For consistency in notation, we denote the minimax risk by $\M_J$. 

To be specific, 
for $q\in[1,\infty]$ and $r\in[1,\infty)$,
and a radius $\rho\ge0$,
the unperturbed data is $(X,Y)$, 
but we observe $(X,Y')$ with $W_{q,r}(Y,Y') \le \rho$.
Here $W_{q,r}$ is computed on $\mathbb R^n$ 
with respect to the 
$\ell_q$ norm.
Letting $Q$ denote the distribution of $Y'$,
 the 
output--space $(q,r)$–Wasserstein ball 
for $Y$
around $Y\sim P_\theta$ is denoted by\footnote{For general $r$ and $q$, it is not completely straightforward to compare this observation model with our previous JC or IC models. 
However, for $r=q$, independent $W_{q,q}$ perturbations of size $\ep$ to the coordinates $Y_i$ imply a joint
$W_{q,q}$ perturbation of size $\rho = n^{1/q}\ep$. This shows that in this special case, the JC perturbations of size $n^{1/q}\ep$  considered in this section are stronger than independent perturbations of size $\ep$.}
\(
\Ball_{\rm J}^{ q,r}(\theta,\rho):=\Big\{Q: W_{q,r}(Q,P_\theta)\le \rho\Big\}.
\)
This
posits that there is a coupling of $Y,Y'$ such that $(\E\|Y-Y'\|_q^r)^{1/r} \le \rho$.

Write  $P_X:=XX^\dagger$ and
for two 
control hyperparameters $s_{\rm pred}>0$, $s_{\rm est}>0$,
 we will consider noise distributions such that 
the following moments of the noise are bounded 
\begin{equation}\label{spe}
\Big(\E\|P_XE\|_q^r\Big)^{1/r} \le s_{\rm pred},\qquad
\Big(\E\|X^\dagger E\|_q^r\Big)^{1/r} \le s_{\rm est}.
    \end{equation}
    
 As indicated below, our results for the estimation error 
 consider the regime where
 $s_{\rm pred}>0$,
 which holds unless we are in the trivial setting that the noise $E$ is completely orthogonal to the column space of $X$.

Again, it will be helpful to ensure a 
certain form of centering for the noise. 
As for the case of location estimation, this will be imposed by assuming that a certain expectation is minimized at zero. 

\begin{condition}[Centering for linear regression]\label{ctrlr}
For any $X$,
the map $c\mapsto\E\|P_XE+c\|_q^r$ is minimized at $c=0_n\in \R^n$.
\end{condition}

As for the case of location models, if we know the distribution of the noise, then this condition does not impose a substantive assumption, and it can be achieved by centering the noise. 
Moreover 
this condition 
holds in many typical cases of interest, 
for instance when the distribution of $P_XE$ is symmetric about zero.

To summarize,
 the class of clean distributions 
$P_\theta$ for $Y$
 will be characterized by
\begin{align}\label{plr}
\mathcal{P}
= \Big\{
  &\,Y = X\theta + E:\;
    \theta \in \R^p,\;
    \big(\E\|P_X E\|_q^r\big)^{1/r} \le s_{\rm pred},\;
    \big(\E\|X^\dagger E\|_q^r\big)^{1/r} \le s_{\rm est}, \\
  &\;\textnormal{Condition \ref{ctrlr} holds}
\Big\}.\nonumber
\end{align}

Furthermore, we consider two losses:
estimation error
\(\|\widehat\theta(Y)-\theta\|_q^{ r},
\)
and prediction error
\(
\|\widehat m(Y)-X\theta\|_q^{ r}\).
 The corresponding minimax risks are defined as 
  $$
 \M_J^{\rm est}(\rho,q,r)
:=\inf_{\htheta} \sup_{{P_\theta\in\mathcal{P}, \, Q\in\Ball_{\rm J}^{ q,r}(\theta,\rho)}}
\E_{Y'\sim Q}\|\widehat \theta(Y')-\theta\|_q^{ r}
 $$
and
 $$
 \M_J^{\rm pred}(\rho,q,r)
:=\inf_{\widehat m} \sup_{{P_\theta\in\mathcal{P},  Q\in\Ball_{\rm J}^{ q,r}(\theta,\rho)}}
\E_{Y'\sim Q}\|\widehat m(Y')-X\theta\|_q^{ r}.
 $$

We find 
bounds on the minimax estimation and prediction error which are 
tight up to 
factors that depend on $X$ and the distribution of the noise. 
They show that that the least squares estimator 
and the associated predictor 
are  nearly minimax optimal. 
Moreover, 
for $q=2$,
the least squares predictor
is \emph{exactly minimax optimal} for prediction error for all $\ep \ge 0$.
For a matrix $M \in \R^{k\times k'}$ and $q,q'\ge 1$, we denote by $\|M\|_{q\to q'}$
the induced operator norm.
Define $\widehat m_{\rm LS}(y):=P_Xy$ and $\widehat\theta_{\rm LS}(y):=X^\dagger y$, for all $y$.

\begin{theorem}[Linear regression under JC]
\label{thm:lmr-master}
Fix $q\in[1,\infty]$, $r\in[1,\infty)$ and $\rho\ge0$. 
Consider the linear regression model from \eqref{lre}.
Then, the minimax prediction risk satisfies
\begin{equation}\label{eq:lmr-master-pred}
{\big(s_{\rm pred}+\rho\big)^{r}
 \le\
\M_J^{\rm pred}(\rho,q,r)
 \le\
\big(s_{\rm pred}+\|P_X\|_{q\to q} \rho\big)^{r},
\qquad}
\end{equation}
and $\widehat m_{\rm LS}$ attains the upper bound.
Further, 
if $s_{\rm pred}>0$,
the minimax estimation risk satisfies
\begin{equation}\label{eq:lmr-master-est}
{\qquad
\bigg(\max\Big\{\,s_{\rm est}+\frac{s_{\rm est}}{s_{\rm pred}} \rho\,,\ \|X^\dagger\|_{q\to q} \rho\Big\}\bigg)^{r}
\ \le\
\M_J^{\rm est}(\rho,q,r)
\ \le\
\big(s_{\rm est}+\|X^\dagger\|_{q\to q} \rho\big)^{r},
\qquad}
\end{equation}
and $\widehat\theta_{\rm LS}$ attains the upper bound.

\end{theorem}

As a consequence
for $q=2$, 
and for noise distributions such that $\big(\E\|P_XE\|_2^r\big)^{1/r} \le s_{\rm pred}$,
we have
\begin{align}
&
\M_J^{\rm pred}(\rho,2,r)
=
{\
\big(s_{\rm pred}
+\rho\big)^r\ }\quad\text{(attained by }\widehat m_{\rm LS}\text{)}
\label{eq:lmr-master-pred-22}.
\end{align}
For $q=r=2$, 
denoting $B_X:=X(X^\top X)^{-2}X^\top$,
$\Sigma = \Cov{E}$,
and the smallest singular value by $s_{\min}$, 
the minimax estimation error 
for noise distributions such that 
$ \sqrt{\Tr(B_X\Sigma)}\le s_{\rm est}$
and
$ \sqrt{\Tr(P_X\Sigma)} \le s_{\rm pred}$
satisfies
\begin{align}
\Big(\max\big\{s_{\rm est}+
\frac{s_{\rm est}}{s_{\rm pred}}
\rho,\ \tfrac{\rho}{s_{\min}(X)}\big\}\Big)^2
\ \le\ 
\M_J^{\rm est}(\rho,2,2)
\ \le\
{\ \big(s_{\rm est}+\tfrac{\rho}{s_{\min}(X)}\big)^2\ }\!.
\label{eq:lmr-master-est-22}\nonumber
\end{align}

Similarly to our discussion above,
 the exact result \eqref{eq:lmr-master-pred-22} for 
 $q = 2$ 
 on the prediction risk can be interpreted as capturing the cost of robustness and can be written in the  form
 $$
\M_J^{\rm pred}(\rho,2,r)^{1/r}
=
\M_J^{\rm pred}(0,2,r)^{1/r}
+\rho,
$$
which presents the effect of the perturbation in a linearized way. 
In the more general case from equation \eqref{eq:lmr-master-pred}, 
a similar consideration leads to rewriting it in a form that bounds the slope of the effect of the perturbation on $\M_J^{\rm pred}(\rho,q,r)^{1/r}$:
$$
1 \le\
\frac{\M_J^{\rm pred}(\rho,q,r)^{1/r}-\M_J^{\rm pred}(0,q,r)^{1/r}}{\rho}
 \le\
\|P_X\|_{q\to q}.
$$
Moreover,
 similarly to the case of estimation error under independent contaminations from Section \ref{subsec:ic-location-general}, 
\eqref{eq:lmr-master-pred}  
 implies that
$
\M_J^{\rm est}(\rho,q,r)
\ \ge\
2^{-r}\big(s_{\rm est}+\|X^\dagger\|_{q\to q} \rho\big)^{r}$.
 This 
 establishes the minimax rate for the estimation error as $\M_J^{\rm est}(\rho,q,r)\asymp \big(s_{\rm est}+\|X^\dagger\|_{q\to q} \rho\big)^{r}$.

Since the lower bound technique  has some elements of novelty in the robustness literature, we present it below in Section \ref{subsec:JC-minimax-ellqr}.
The upper bounds are proved in Section \ref{subsec:JC-q-r}.

\begin{figure}[t]{}
    \centering
    \begin{subfigure}[h]{0.49\columnwidth}
        \centering
        \begingroup
       \newlength{\tikzwidth}
        \setlength{\tikzwidth}{0.49\columnwidth}
        \scalebox{1}{\begin{tikzpicture}[scale = 0.8]
    \coordinate (O) at (0,0);
    \coordinate (Y) at (4,4);
    \coordinate (Y') at (6,4);
    \coordinate (Xtheta) at (2.5,1.2);
    \coordinate (Proj) at (4,1.2);
    \coordinate (Par1) at (-1.5,-0.5);
    \coordinate (Par2) at (-0.5,3);
    \coordinate (Par3) at (6,3);
    \coordinate (Par4) at (5,-0.5);

    \fill[gray!15] (Par1) -- (Par2)-- (Par3) -- (Par4) -- cycle;
    \node[xshift=0.8cm, yshift=-0.5cm] at (Par2) {$\mathrm{Col}(X)$};

    \draw[-{Latex[length=2mm, width=2mm]},thick] (O) -- (Y) node[above] {$Y$};

    \draw[-{Latex[length=2mm, width=2mm]},thick] (O) -- (Xtheta) node[below] {$X\theta$};

    \draw[-,dotted] (Y) -- (Proj) node[below right]{$P_X Y$};
    \draw[-{Latex[length=2mm, width=2mm]},dotted,->-] (Y) -- (Y') node[above] {$Y'$} node[midway, below] {$\sim\!\sqrt{n}\ep$};
    \draw[-{Latex[length=2mm, width=2mm]},dotted,->-] (Xtheta) -- (Proj) node[below] {};

    \draw ($(Proj) + (0,0.2)$) -- ($(Proj) + (0.2,0.2)$) -- ($(Proj) + (0.2,0)$);
\end{tikzpicture}}
        \endgroup
    \end{subfigure}
    \begin{subfigure}[h]{0.49\columnwidth}
         \centering
         \includegraphics[width=0.9\columnwidth]{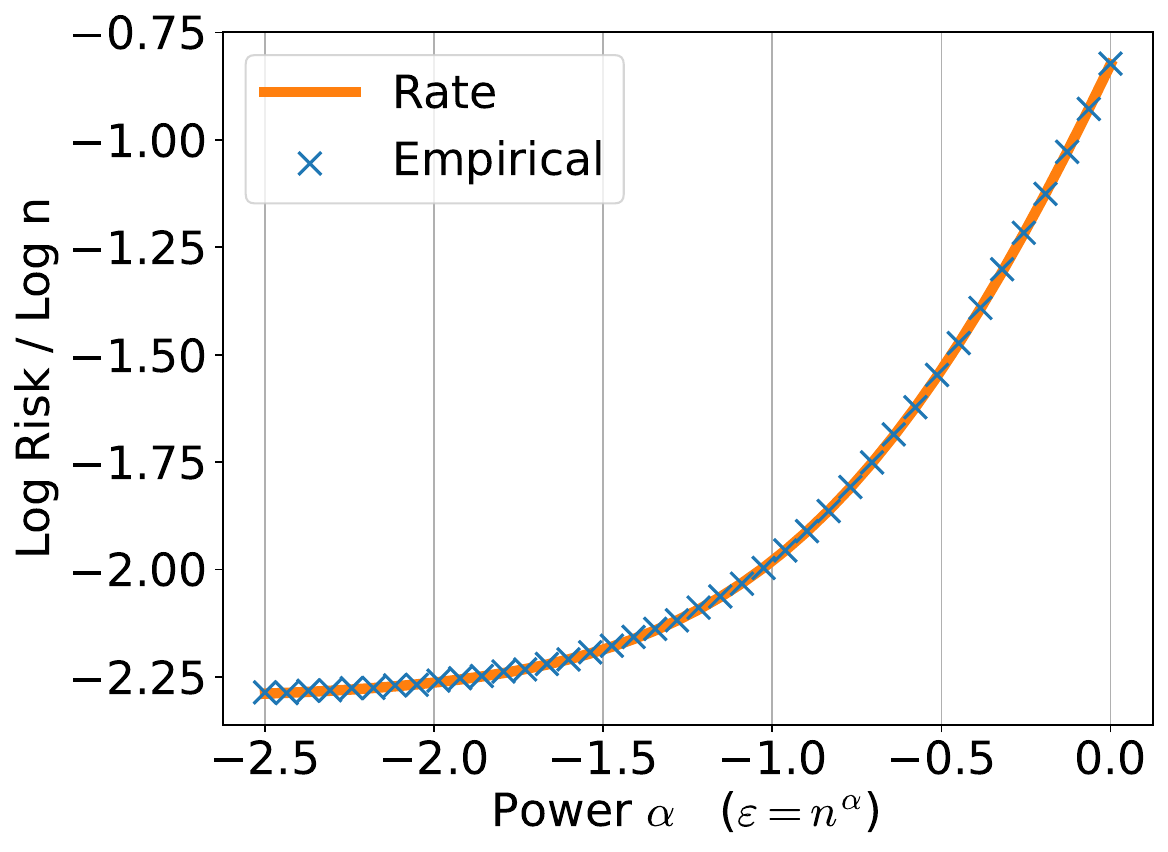}
    \end{subfigure}
    \hfill
    \caption{\textbf{Left:} Graphical representation of linear regression with an adversarial perturbation with the least squares estimate $P_X Y$ of $X\theta$.
    Here, $Y'$ is constructed so that $Y'-Y$ is a scaled multiple of $P_X Y-X\theta$.
    The length is proportional to the perturbation budget, and is indicated for the case $r=q=2$ on the figure.
     \textbf{Right:} Empirical evaluation of the risk in a linear regression problem with $r=q=2$ under homoskedastic error and prediction loss with $n=10$, $p=5$. The empirical performance matches the theoretical rate.}
    \label{fig: lr}
\end{figure}

In the left plot of \cref{fig: lr}, we provide graphical intuition for our least favorable perturbation 
for $q=2$ c
under prediction error. 
The prediction error is governed by the norm of $X(\htheta(Y)-\theta)$. 
Our least favorable perturbation constructs $Y'$ by shifting $Y$ in the direction of the prediction error of the least squares estimator without contamination, $Y'-Y\propto X(\htheta(Y)-\theta)=P_X Y-X\theta$, a construction similar to the JC perturbation for the location estimation problem.
In the right plot of \cref{fig: lr}, we show that the theoretical rate and empirical performance of the least squares estimator under contamination match for $q=r=2$.
For more details on the experiments, see \cref{subsec: lr simulations}.

\subsubsection{Minimax Lower Bounds for Linear Regression}
\label{subsec:JC-minimax-ellqr}

This section presents the
 proofs of the 
lower bounds from Theorem \ref{thm:lmr-master}. 
We begin with 
a construction of a feasible class of contaminations,  
and present reductions that let us pass from arbitrary estimators
to equivariant estimators based on the Hunt-Stein theorem.

Let $\C:=\mathrm{col}(X)$.
 In our proofs for linear regression throughout this paper, 
we will use the following
family of perturbations.

\begin{lemma}[Scaling $+$ signed shift along $\C$]\label{lem:adversary}
Consider $Y\sim P_\theta \in \mathcal{P}$.
Fix $\rho>0$ and let $\zeta,\psi\ge0$, $\delta\in\C$ with $\|\delta\|_q=1$. Let $S$ be a Rademacher random variable independent of everything else. For each $\theta$, set
\(
Y' := Y+\Delta\),
\(\Delta:=\zeta\,P_X\bigl(Y-X\theta\bigr)+S\psi\,\delta=\zeta\,P_XE+S\psi\,\delta.
\)
Then $Q_\theta:=\mathsf{Law}(Y')$ is feasible under JC and
\(
W_{q,r}(Q_\theta,P_\theta)
 \le \zeta\,s_{\rm pred}+\psi.
\)
In particular, $Q_\theta\in\Ball_{\rm J}^{q,r}(\theta,\rho)$ whenever $\zeta\,s_{\rm pred}+\psi\le\rho$.
\end{lemma}

\begin{proof}
Couple deterministically via $Y'=Y+\Delta$. By Minkowski's inequality in $L_r$,
\(
\bigl(\E\|\Delta\|_q^r\bigr)^{1/r}
\le \bigl(\E\|\zeta P_XE\|_q^r\bigr)^{1/r}+\bigl(\E\|S\psi\delta\|_q^r\bigr)^{1/r}
\le\zeta s_{\rm pred}+\psi,
\)
 hence the result follows. 
\end{proof}

{\bf Reduction to equivariant procedures.}
Next, 
we will 
use the classical Hunt–Stein theorem to find a lower bound on the minimax risk via a reduction to equivariant procedures for translation
invariant decision problems (see, e.g., 
\cite{kiefer1966multivariate},
\cite{eaton1989group} or \cite{lehmann1998theory}.

\begin{lemma}[Corollary of Hunt-Stein; Equivariant minimaxity in translation models]\label{lem:equiv-minimax}
Let $p\in\mathbb{N}$ and consider the location model $V=u+W$ on $\R^p$, where
$W$ has a law $\nu$ with $\E\|W\|_q^r<\infty$,
and loss
$L(\hat u, u)=\|\hat u-u\|_q^r$ ($q\in[1,\infty]$, $r\ge 1$).
Let $\mathcal{H}$ be the class of all measurable estimators $m:\R^p\to\R^p$.
Then
\[
\inf_{m\in\mathcal{H}} \sup_{u\in\R^p}\E\|m(V)-u\|_q^r
=
\inf_{c\in \mathbb{R}^p} \E\|W+c\|_q^r.
\]
Moreover, the optimum is achieved for the estimator
$m(v) = v+c$, for all $v$, for the $c$ that minimizes the right hand side.
\end{lemma}

The above result follows from the Hunt-Stein theorem, see e.g., Theorem 
1 of \cite{eaton2021charles} for a modern and accessible statement.
 As is well known, the group of translations on $\R^p$ 
 satisfies the required amenability conditions \citep[see e.g.,][]{bondar1981amenability}.

We are now ready to prove the main lower bound.

\begin{theorem}[Prediction: minimax lower bound for general $(q,r)$ under JC]
\label{thm:minimax-pred}
Fix $q\in[1,\infty]$, $r\ge 1$, and $\rho>0$.
Then
\begin{equation}\label{eq:minimax-pred}
\M_J^{\rm pred}(\rho,q,r)
\ \ge\
\bigl(s_{\rm pred}+\rho\bigr)^{r}.
\end{equation}
Moreover, when $q=2$, one has equality in \eqref{eq:minimax-pred}.
\end{theorem}

\begin{proof}
By the definition of $\M_J^{\rm pred}$ and due to Lemma~\ref{lem:adversary},
for any predictor $\widehat m$,
\[
\sup_{\theta}\sup_{Q\in \Ball_{\rm J}^{ q,r}(\theta,\rho)}
\E_Q\|\widehat m(Y)-X\theta\|_q^r
\ \ge\
\sup_{\theta} \E_\theta\|\widehat m(Y_\theta')-X\theta\|_q^r,
\]
where $Y'=Y_\theta'$ denotes the adversarially perturbed variable of Lemma~\ref{lem:adversary} with 
$\psi=0$ and
$\zeta= \rho/s_{\rm pred}$.
 Now, choose the noise to
 satisfy 
 $P_X^\perp E = 0$,
 and $\Big(\E\|P_XE\|_q^r\Big)^{1/r}= s_{\rm pred}$.
 This choice ensures that the resulting distribution $P_\theta$ is valid
 for all $\theta$, so that $P_\theta \in \mathcal{P}$.
 Moreover, we have 
 $Y_\theta' = P_XY_\theta'$. 

Write $u:=X\theta\in\C$ and $W:=(1+\zeta)P_XE$, so that
$P_XY_\theta'=u+W$ with $W$ independent of $u$ and
$(\E_\theta\|W\|_q^r)^{1/r}=(1+\zeta) s_{\rm pred}=s_{\rm pred}+\rho$.
Hence,
\[
\sup_{\theta} \E_\theta\bigl\|\widehat{m}\bigl(P_XY_\theta'\bigr)-X\theta\bigr\|_q^r
=\sup_{u\in\C} \E_u\|\widehat{m}(u+W)-u\|_q^r.
\]
By the consequence of the Hunt-Stein theorem stated in Lemma~\ref{lem:equiv-minimax},
it follows that we can restrict to equivariant estimators $\widehat m$, and that the right-hand side is bounded below by
$\min_c\E\|W+c\|_q^r$.
 By assumption, the minimum is achieved at $c=0$, and hence the resulting value equals 
$(s_{\rm pred}+\rho)^r$.
This yields \eqref{eq:minimax-pred}.

If $q=2$, equality holds because 
our lower bounds 
show that the least-squares predictor
$\widehat m_{\rm LS}(y)=P_Xy$ has worst-case risk upper bounded by $(s_{\rm pred}+\rho)^2$.
\end{proof}
See Section \ref{subsec:JC-q-r-app} for the proofs of the lower bound for estimation.

\subsection{Random Designs and Joint X,Y-Perturbation}
\label{rxy}

In Section \ref{subsec:XY-JC}, we generalize
 our results to the setting where the feature matrix $X$
 can be random,
 and both the features and the outcome can be contaminated.
 We provide results that are roughly similar 
to the ones presented above, 
 except a bit more involved to state; and therefore we relegate them to the appendix. 

These results have the advantage that they 
enable us to 
more directly 
compare our results with some prior work. 
Specifically, 
as discussed in more detail later,
these results are consistent with the rates from \cite{Liu2021RobustWE}.\footnote{As confirmed via personal communication.} 
Their work studies linear regression under $W_{2,1}$  IC perturbations that can affect both the features and the outcomes, 
under an $\ell_2$ loss.
They obtain the minimax estimation error rate
$\max(\sqrt{p/n}, \ep)$---up to some unspecified constants---and our rate agrees with this under reasonable  conditions, see Section \ref{ic-reg} for more details.
 Specifically, our results can match this when the feature matrix is well-conditioned, which is reasonable because linear regression involves matrix inversion.

However, our results have some additional differences from those in \cite{Liu2021RobustWE} that we think make them valuable:
    Our results are more explicit and do not involve unknown constants. 
   Further, they are applicable to fixed or generally distributed design matrices  $X$, whereas they require normally distributed feature matrices.
    Finally, we have results for arbitrary $W_{q,r}$ perturbations, compared to $W_{2,1}$ in \cite{Liu2021RobustWE}. 

Moreover, conceptually, 
our results
 have a somewhat different message, as we 
argue that even classical methods like linear regression can be robust in certain regimes, while also clarifying the reason why they might not be robust (e.g., poor conditioning of the feature matrices).
Since linear regression a widely used statistical method, we believe that any results about its robustness ought to be of broad interest.

Zhu, Jiao, and Steinhardt
\cite{zhu2020robuststat}
 also study estimation of a regression parameter when the true data follows a linear model, using an IC $W_{2,1}$ perturbation and a $\ell_2$ prediction error that---for well-conditioned population-level feature covariance matrices---are  up to order equivalent to a squared estimation error. 
Their  $W_{2,1}$
 results depend on assuming that $k$th
  moments of the residuals exists, for $k>2$---see e.g., their Theorem 4.4---whereas we do not have such an assumption, 
thus, our rates are not straightforward to compare. 
Moreover, 
our contributions differ quite significantly,
 as we aim to provide bounds without unknown constants (whereas their results have unspecified universal constants), and for 
  for arbitrary $W_{q,r}$ (not just $W_{2,1}$)
  perturbations.

\section{Pointwise Nonparametric Density Estimation}
We now consider the problem of nonparametric density estimation under IC shifts. 
Fix a dimension $p\in\mathbb{N}$.
Following the notation in \cite{tsybakov2009nonparametric}, 
for 
a smoothness parameter 
$s>0$, write $s=l+\alpha$, where $l$ is 
a non-negative integer and $\alpha\in(0,1]$. 
A $p$-dimensional multi-index $\beta = (\beta_1,\ldots,\beta_p) \in \{0,1,\ldots\}^p$ is a vector of non-negative indices,
and we denote
 its order by 
$|\beta| = \sum_{j=1}^p \beta_j$. 
The operator $D^\beta=\partial^{\beta_1}_{x_1} \ldots \partial^{\beta_p}_{x_p}$ denotes taking partial derivatives of order $\beta$.
The $p$–dimensional Hölder class is
\begin{equation*}
\Sigma_p(s,L)
~\coloneqq~
\Bigl\{ f:\R^p\to\R \ \Big|\
|D^\beta f(x)-D^\beta f(y)|\le L\|x-y\|_2^\alpha \ \text{for all }x,y\in\R^p, |\beta|=l
\Bigr\}.
\end{equation*}
Let $\cP_p(s,L)$ be the class of probability densities on $\R^p$ belonging to $\Sigma_p(s,L)$:
\begin{equation*}
\cP_p(s,L)
~\coloneqq~
\Bigl\{f:\R^p\to[0,\infty)\ \Big|\ \int_{\R^p}f(x) dx=1, f\in\Sigma_p(s,L)\Bigr\}.
\end{equation*}

We will consider IC perturbations $\tilde f$ of $f$ such that $W_{q,r}(f,\tilde f)\le \ep$.
We observe $X_1',\dots,X_n'$ $\simiid \tilde f$,
where\footnote{Additional moment assumptions, such as the existence of a suitable $r$-th moment, which are used in other parts of the paper, are not required here due to our use of a bounded kernel in the estimation method.} 
$\tilde f\in\cP_p(s',L')$
is allowed to have a different smoothness level $s'$ (typically lower than $s$), 
as well as a different, typically higher, Lipschitz constant $L'$. 

We fix $x_0\in\R^p$.
For a convex nondecreasing $\Phi:\R_{\ge 0}\to\R_{\ge 0}$ with $\Phi(0)=0$, 
define the IC minimax risk for estimating the true density $f(x_0)$ at the point $x_0$ based on the contaminated data $X_1',\dots,X_n'\simiid \tilde f$:
\begin{equation}\label{demi}
\M_{I}(\ep;n, q,r, s,L; s',L', \Phi)
~\coloneqq~
\inf_{\widehat f}
\sup_{{f\in\cP_p(s,L), \tilde f\in\cP_p(s',L'),\, W_{q,r}(f,\tilde f)\le \ep}}
\E_{\tilde f}\Big[\Phi\big(|\widehat f(x_0)-f(x_0)|\big)\Big].
\end{equation}
We will abbreviate the above risk as 
$\M_{I}(\ep, s, p, r)$.
Below $\asymp$
will denote equivalence up to problem hyperparameters
other than $n$ or $\ep$. 
Further, we will consider 
loss functions $\Phi$ 
that satisfy the doubling condition 
$\Phi(2x)\le D_\Phi\Phi(x)$ for all $x\ge0$, i.e., they do not grow faster than exponential. 
This includes our main loss functions of interest,  
such as $\Phi(x)=x^r$, $r\ge 1$, for all $x\ge 0$.

Our main result shows that the 
rate of the 
minimax risk
is determined by the maximum of two terms:
the standard rate and
a term dependent on $\varepsilon$.

\begin{theorem}[Density estimation under Wasserstein contamination]\label{thm: pointwise density bounds}
Let $p\in\mathbb{N}$, $s>0$, $s'>0$, $L,L'>0$, 
$r\in[1,\infty)$, and let $\Phi$ be convex,
nondecreasing, $\Phi(0)=0$, and satisfy $\Phi(2x)\le D_\Phi\Phi(x)$ for all $x\ge0$. Then for all $n\ge1$ and $\ep\in(0,1]$,
the minimax risk \eqref{demi} for $s$-H\"older smooth density estimation in $p$ dimensions 
from
 $\ep$-bounded 
$W_{q,r}$ i.i.d.~Wasserstein contaminated data
has the rate 
\begin{equation*}
\M_{I}(\ep, s, p, r)
\ \asymp\
\Phi\!\Big(n^{-\frac{s}{2s+p}} \vee\ \ep^{\frac{s}{ s+1+p/r }}\Big).
\end{equation*}
\end{theorem}

Moreover, the upper bound is achieved by kernel density estimation with suitably smooth and bounded kernels (as detailed in the proof) and bandwidth \(
h^\star \asymp
  n^{-\frac{1}{2s+p}}
   \vee 
  \ep^{\frac{1}{ s+1+p/r }}.
  \)
This shows that, compared to the unperturbed case,
the minimax risk and the required level of smoothing in the estimator does not change if $\ep$ is sufficiently small that
\(
\ep \lesssim n^{-\frac{s+1+p/r}{2s+p}}.
\)
However, the risk is dominated by the contaminations above that level, and the required level of smoothing is also determined by the magnitude of contaminations. 
For
 the special case of square loss where 
$\Phi(u)=u^2$, for all $u$, we obtain that the rate is 
\(
\M_{I}(\ep, s, p, r)
\asymp n^{-\frac{2s}{2s+p}} \vee \ep^{\frac{2s}{ s+1+p/r }}.
\)

Intriguingly, the rate 
\emph{does not depend on the $\ell_q$ norm used in the perturbation,
nor 
on the sampling smoothness $s'$}.
 An examination of the proof shows that
these only enter the analysis via constants, e.g.,
 $s'$ 
appears only through the 
uniform boundedness of $\tilde f$ used in the variance bound.

In terms of proofs, 
the analysis leverages
various special properties of 
Wasserstein distance;
for instance,
a
representation of the effect of kernel smoothing on the perturbed density via a linear interpolation of the identity map with the optimal transport map, 
combined with the property that the probability flows induced by optimal transport maps have bounded densities.
Also, we leverage
the dynamic Benamou-Brenier formulation
 of optimal transport 
\citep{brenier2004extended}
in the lower bound
in order to show that the perturbations we construct respect the required budget.
Since we believe these proofs have elements of novelty and interest, we present them below.

\subsection{Upper Bound}

We first bound the pointwise error by isolating three contributions: 
the smoothing bias, the stochastic fluctuation, and the adversarial shift controlled by $W_{q,r}$.
Let $r^\star=r/(r-1)$ (with $r^\star=\infty$ when $r=1$).

{\bf Kernel assumptions in $\R^p$.}
Let $K:\R^p\to\R$ be bounded with $\int_{\R^p}K=1$, and moment cancellations $\int_{\R^p} u^\beta K(u) du=0$ for all multiindices $\beta$ with $1\le|\beta|\le l$. 
Assume 
$K\in L^2$, 
$\|\nabla K\|_{L^{r^\star}(\R^p;\|\cdot\|_{q^\star})}<\infty$,
and 
$\int \|u\|_2^s |K(u)|du<\infty$.

{\bf Estimator in $\R^p$.}
For a scalar bandwidth $h\in(0,1]$, 
define the scaled kernel $K_h(u)=h^{-p}K(u/h)$ and define
 the kernel density estimator 
$\widehat f_h(x_0)=n^{-1}\sum_{i=1}^n K_h(x_0-X_i')$.

\begin{theorem}[Upper bound in $\R^p$ with heterogeneous smoothness]\label{thm:upper-Wr-general-Phi-hetero-d}
With the above notations, 
there exists $C=C(p,s,L,s',L',\Phi)$ such that for all $n\ge1$, $\ep\in(0,1]$, $h\in(0,1]$, $r\in[1,\infty)$, and any $f\in\cP_p(s,L)$, $\tilde f\in\cP_p(s',L')$ with $W_{q,r}(f,\tilde f)\le \ep$,
\begin{equation*}
\E_{\tilde f}\Big[\Phi\big(|\widehat f_h(x_0)-f(x_0)|\big)\Big]
\ \le\
C \Phi\!\Big(h^s+\ep h^{-(1+p/r)}+\sqrt{1/(n h^p)}+1/(n h^p)\Big).
\end{equation*} 
Consequently, choosing \(
h^\star \asymp
  n^{-\frac{1}{2s+p}}
   \vee 
  \ep^{\frac{1}{ s+1+p/r }},
  \)
\begin{equation*}
\inf_{h\in(0,1]} \sup_{\substack{f\in\cP_p(s,L), \tilde f\in\cP_p(s',L')\\ W_{q,r}(f,\tilde f)\le\ep}}
\E_{\tilde f}\Big[\Phi\big(|\widehat f_h(x_0)-f(x_0)|\big)\Big]
\ \lesssim\
\Phi\!\Big(n^{-\frac{s}{2s+p}} \vee\ \ep^{\frac{s}{ s+1+p/r }}\Big).
\end{equation*}
\end{theorem}

\begin{proof}
Let $m_h(x_0;g)=\int K_h(x_0-y)g(y) dy$ denote the effect of smoothing by the scaled kernel on a true density at $x_0$. 
Multiindex Taylor expansion with Hölder remainder and the 
order-$l$ moment cancellations yield $|m_h(x_0;f)-f(x_0)|\le C h^s$ uniformly over $f\in\cP_p(s,L)$, see e.g., \cite{tsybakov2009nonparametric}.

For the adversarial shift, apply Lemma~\ref{lem:Wr-shift-d} 
and use uniform boundedness of $f,\tilde f$ over $\cP_p(s,L)$ and $\cP_p(s',L')$:
\(
|m_h(x_0;\tilde f)-m_h(x_0;f)|\le C \ep
\|\nabla K_h\|_{L^{r^\star}(\R^p;\|\cdot\|_{q^\star})}
\le C \ep h^{-(1+p/r)}.
\)

For the fluctuation, $Y_i=K_h(x_0-X_i')-\E_{\tilde f}K_h(x_0-X_i')$ satisfy $|Y_i|\le C h^{-p}$
due to the boundedness of $K$,
and
\(
\Var{Y_i}\le \|\tilde f\|_\infty \|K_h\|_{L^2}^2 \le C(s',L') h^{-p}.
\)
Hence $E=n^{-1}\sum_i Y_i$ is sub-gamma with variance factor $v\le C/(n h^p)$ and scale $b\le C/(n h^p)$. Lemma~\ref{lem:Phi-calibration} gives
\(
\E\Phi(|E|)\le C_\Phi \Phi\big(C(\sqrt{1/(n h^p)}+1/(n h^p))\big).
\)

Putting together the smoothing bias, the adversarial shift, and the stochastic term, and using that $\Phi$ is convex 
and nondecreasing, 
we obtain the claimed $\Phi$–risk bound.

Optimizing $h$ by balancing $h^s$ with $\sqrt{1/(n h^p)}$, $1/(n h^p)$, and $\ep h^{-(1+p/r)}$ yields the displayed rate.
\end{proof}

\begin{lemma}[Adversarial shift bound in $\R^p$]\label{lem:Wr-shift-d}
Let $p\in\mathbb{N}$, $r\in[1,\infty)$.
Fix $x_0\in\R^p$ with $\nabla K\in L^{r^\star}(\R^p;\|\cdot\|_{q^\star})$. For any bounded densities $f,\tilde f$ on $\R^p$, when $r>1$
\begin{equation*}
\big| (K_h\!*\tilde f)(x_0)-(K_h\!*f)(x_0) \big|
\ \le\
 \big(\max(\|f\|_\infty,\|\tilde f\|_\infty)\big)^{1/r^\star}  \|\nabla K_h\|_{L^{r^\star}(\R^p;\|\cdot\|_{q^\star})} W_{q,r}(f,\tilde f).
\end{equation*}
When $r=1$,
\[
\big|(K_h*\tilde f)(x_0)-(K_h*f)(x_0)\big|
\ \le\ 
\|\nabla K_h\|_{L^\infty(\R^p;\|\cdot\|_{q^\star})}\, W_{q,1}(f,\tilde f),
\]
Moreover $\|\nabla K_h\|_{L^{r^\star}(\R^p;\|\cdot\|_{q^\star})}
=h^{-(1+p/r)}\|\nabla K\|_{L^{r^\star}(\R^p;\|\cdot\|_{q^\star})}$.
\end{lemma}

\begin{proof}
First, consider $r>1$.
Let $q\in (1,\infty)$.
Since the cost function $\|x-y\|_q^r$, $r>1$, $q>1$  is strictly convex,
and the density $f$ induces an absolutely continuous measure,
the optimal transport map $T$ pushing $f$ to $\tilde f$, 
exists and is unique almost surely with respect to the density $f$,
see e.g., 
\cite{GangboMcCann1996},
or
Theorem 10.28 or 10.38, and surrounding discussion in \citet{villani2008optimal}. 
Writing 
$\varphi(x)=K_h(x_0-x)$, 
$\Delta(y)=T(y)-y$ and $S_t(y)=(1-t)y+tT(y)$ being the linear interpolation of the identity map with the optimal transport map,
 we obtain
 that $(K_h\!*\tilde f)(x_0)-(K_h\!*f)(x_0)$, the effect of smoothing on the perturbed density equals: 
\begin{equation*}
\int\big(\varphi(T(y))-\varphi(y)\big)f(y) dy
=\int_0^1\!\!\int \nabla\varphi\big(S_t(y)\big)^\top
 \Delta(y)  f(y) dy dt.
\end{equation*}
Since $S_t$ is injective for $t\in (0,1)$ due to Lemma 4.23 of \cite{santambrogio2015optimal}, where we use that our cost function is strictly convex, 
the density of $\nu_t:=(S_t)_\#(f dy)$
exists; denote it by $\rho_t$.
For each $t\in(0,1)$,
\begin{equation*}
\int \big|\nabla\varphi(S_t(y))^\top \Delta(y)\big|  f(y) dy
\ \le\
\Big(\int \|\nabla\varphi(z)\|_{q^\star}^{r^\star} \rho_t(z) dz\Big)^{1/r^\star}\Big(\int \|\Delta(y)\|_q^r f(y) dy\Big)^{1/r}.
\end{equation*}
By
Lemma \ref{maxp},
$\|\rho_t\|_\infty
\le \max(\|f\|_\infty,\|\tilde f\|_\infty)$. 
Thus, 
\begin{align*}
&\big| (K_h\!*\tilde f)(x_0)-(K_h\!*f)(x_0) \big|
\ \le\
\Big(\int_0^1\!\!\int \|\nabla\varphi\|_{q^\star}^{r^\star} \rho_t\Big)^{\!1/r^\star}
\Big(\int_0^1\!\!\int \|\Delta\|_q^r f\Big)^{\!1/r}\\
&\ \le\ \big(\max(\|f\|_\infty,\|\tilde f\|_\infty)\big)^{1/r^\star}
\|\nabla\varphi\|_{L^{r^\star}(\R^p;\|\cdot\|_{q^\star})} W_{q,r}(f,\tilde f).
\end{align*}

Next,
consider $r>1$ but 
$q\in \{1,\infty\}$.
Pick any sequence $q_k\in(1,\infty)$ with $q_k\to q$ and let $q_k^\star$ be the dual exponents. For each $k$, apply the already‑proved inequality with $q_k$.
Now pass to the limit $k\to\infty$.
By equivalence of norms on $\mathbb{R}^p$, for all vectors $u$ and all $k$,
$\|u\|_{q_k^\star}\le p^{|1/q_k^\star-1/q^\star|}\|u\|_{q^\star}$ and $p^{|1/q_k^\star-1/q^\star|}\to1$. Dominated convergence yields
$\|\nabla K_h\|_{L^{r^\star}(\|\cdot\|_{q_k^\star})}\to \|\nabla K_h\|_{L^{r^\star}(\|\cdot\|_{q^\star})}$.
Likewise, by equivalence of the ground norms,
$W_{q_k,r}(f,\tilde f)\to W_{q,r}(f,\tilde f)$, with the two distances sandwiching each other by factors $p^{|1/q_k-1/q|}\to1$.

Taking limits gives the desired $q\in\{1,\infty\}$ bound with the same constant. 
This completes the proof for all $q\in[1,\infty]$.

For $r=1$,
write $\mu=f\,dy$ and $\nu=\tilde f\,dy$. By definition,
\[
(K_h*\tilde f)(x_0)-(K_h*f)(x_0)=\int_{\R^p}\varphi(y)\,(\tilde f(y)-f(y))\,dy
=\int_{\R^p}\varphi\,d(\nu-\mu).
\]
Due to the Kantorovich--Rubinstein duality on the metric space $(\R^p,\ell_q)$,
\[
\Big|\int \varphi\,d(\nu-\mu)\Big|\ \le\ \mathrm{Lip}_q(\varphi)\,W_{q,1}(\mu,\nu), \quad 
\mathrm{Lip}_q(\psi):=\sup_{x\ne y}\frac{|\psi(x)-\psi(y)|}{\|x-y\|_q}.
\]
Now, 
due to the fundamental theorem of calculus,
\[|\varphi(y)-\varphi(x)|
\le \int_0^1 \|\nabla\varphi(\gamma(t))\|_{q^\star}\,dt\,\|y-x\|_q
\le \Big(\sup_{z\in\R^p}\|\nabla\varphi(z)\|_{q^\star}\Big)\|y-x\|_q,
\]
hence $\mathrm{Lip}_q(\varphi)\le \sup_{z}\|\nabla\varphi(z)\|_{q^\star}$. 
Since $\varphi(y)=K_h(x_0-y)$,
 it follows that $\mathrm{Lip}_q(\varphi)$ $= $ 
$\|\nabla K_h\|_{L^\infty(\R^p;\|\cdot\|_{q^\star})}$,
completing the proof for $r=1$.

Finally, $\|\nabla K_h\|_{L^{r^\star}(\R^p;\|\cdot\|_{q^\star})}$
$=$
$h^{-1-p(1-1/r^\star)}\|\nabla K\|_{L^{r^\star}(\R^p;\|\cdot\|_{q^\star})}$ by scaling.
\end{proof}

\subsection{A Matching Lower Bound}

\begin{theorem}[Lower bound in $\R^p$ with heterogeneous smoothness]\label{thm:lower-Wr-Phi-hetero-d}
Fix $p\in\mathbb{N}$, 
$s>0$, $s'>0$, $L,L'>0$, $r\in[1,\infty)$, and any convex nondecreasing $\Phi$ with $\Phi(0)=0$. In pointwise estimation of $f(x_0)$ over $f\in\cP_p(s,L)$ under i.i.d.\ $W_{q,r}$–contamination with sampling density $\tilde f\in\cP_p(s',L')$ and budget $\ep\in(0,1]$,
\begin{equation*}
\M_{I}(\ep, s, p, r)\
\gtrsim\
\Phi\!\Big(n^{-\frac{s}{2s+p}} \vee\ \ep^{\frac{s}{ s+1+p/r }}\Big),
\end{equation*}
where the implied constant depends only on $(p,s,L)$, $(s',L')$, the choice of kernel template below, and $\Phi$.
\end{theorem}

\begin{proof}
 The proof defines two densities
 $f_+,f_-$
 whose Wasserstein distance is bounded but 
 for which the value
 $|f_+(x_0)-f_-(x_0)| $
 is large. 
 This is performed by starting from a normal density
 for which the Lipschitz constant is suitably bounded and which is truncated to a suitable box. 
 Then we add to it two perturbations and control the required quantity.

{\bf A base normal density.}
Write $s=l+\alpha$ with $l$ being a non-negative integer and $\alpha\in(0,1]$, and similarly $s'=l'+\alpha'$.
Let $\mathcal{G}=\phi_\sigma$ be the centered Gaussian density with covariance $\sigma^2 I_p$. 
 For any multiindex $\beta$, $D^\beta \phi_\sigma(x)=\sigma^{-|\beta|-p}P_\beta(x/\sigma)\phi_\sigma(x)$ where $P_\beta$ is a polynomial depending only on $(\beta,p)$.
 Hence, for the $j$-th unit basis vector $e_j$, 
 $\mathrm{Lip}(D^\beta\phi_\sigma)=\|D^{\beta+e_j}\phi_\sigma\|_\infty$ for some $j$, so $\mathrm{Lip}(D^\beta\phi_\sigma)\lesssim_{d,\beta}\sigma^{-|\beta|-1-d}$. 
Taking $\sigma$ sufficiently large, we 
can ensure 
$\mathrm{Lip}(D^\beta\phi_\sigma) \le L/4$
 and the analogous bound with $(s',L')$ in place of $(s,L)$. Thus 
$\mathcal{G}\in\cP_p(s,L/4)\cap\cP_p(s',L'/4)$. 

Fix a box $J$ centered at $x_0$ and a number $h_{\max}\in(0,1]$ such that $J$ contains
\[
I_h := [x_{01}-\tfrac{h}{2}, x_{01}+\tfrac{3h}{2}]\times\prod_{j=2}^p[x_{0j}-\tfrac{h}{2}, x_{0j}+\tfrac{h}{2}]
\]
for $h=h_{\max}$.
Since $\mathcal{G}$ is continuous and strictly positive, $c_\ast:=\inf_{x\in J}\mathcal{G}(x)>0$.

{\bf Defining the perturbations.}
Let  
$K_0:\R\to[0,\infty)$ be 
a smooth and compactly supported kernel. 
defined
by $K_0(u)=\exp(-1/(1-u^2)) \mathbf 1_{\{\|u\|_\le 1\}}$, $u\in \R$,
and define
$
W_\perp(u_2,\dots,u_p) = \prod_{j=2}^p K_0(2u_j)$, 
$B(u_1) = K_0(2u_1)-K_0(2u_1-1)$.
For $u=(u_1,\ldots,u_p)\in\R^p$,
 define the un-normalized perturbation
$$T_{\mathrm{unit}}(u) := B(u_1) \cdot  W_\perp(u_2,\ldots,u_p).$$
Then $T_{\mathrm{unit}}\in C^\infty_c(\R^p)$, 
$\mathrm{supp}(T_{\mathrm{unit}})\subset[-\tfrac12,\tfrac32]\times[-\tfrac12,\tfrac12]^{p-1}$, and
\(
\int_{\R} B(u_1) du_1 = \tfrac12\!\int K_0-\tfrac12\!\int K_0 = 0.
\)
Hence, for every fixed $u_\perp=(u_2,\ldots,u_p)$, $\int_{\R} T_{\mathrm{unit}}(u_1,u_\perp) du_1=0$. In particular $\int_{\R^p} T_{\mathrm{unit}}=0$, and $T_{\mathrm{unit}}(0)=K_0(0)\prod_{j=2}^p K_0(0)>0$. 
For each multiindex $\beta$ let $M_\beta=\|D^\beta T_{\mathrm{unit}}\|_\infty$ and $L_\beta=\mathrm{Lip}(D^\beta T_{\mathrm{unit}})$; 
these are finite and do not depend on  $n,\ep$.

Fix $a>0$ small enough to satisfy
\begin{equation}\label{cnd}
a \|T_{\mathrm{unit}}\|_\infty\ \le\ \frac{c_\ast}{2L h_{\max}^s},\qquad
a (L_\beta+2M_\beta)\ \le\ \frac34\ \ \text{for all } \beta \text{ with } |\beta|=l.
\end{equation}
Define $T:=a T_{\mathrm{unit}}$. For $h\in(0,h_{\max}]$ set $\tau:=L h^s$ and 
define the two perturbations $f_\pm$ through 
\[
g_h(x) := \tau  T\!\big((x-x_0)/h\big),\qquad f_\pm := \mathcal{G}\pm g_h.
\]
Then $\int_{\R^p} g_h=0$ and $\mathrm{supp}(g_h)\subset I_h\subset J$.

We also have
$\|g_h\|_\infty\le \tau a \|T_{\mathrm{unit}}\|_\infty\le L h_{\max}^s a \|T_{\mathrm{unit}}\|_\infty\le c_\ast/2$ by \eqref{cnd}. 
Hence $f_\pm\ge \mathcal{G}-\|g_h\|_\infty\ge c_\ast/2>0$ on $J$; outside $J$, $f_\pm=\mathcal{G}\ge0$. Moreover $\int f_\pm=1$, showing that $f_\pm$ are valid probability densities.

{\bf Verifying that the perturbations satisfy the required Hölder conditions.}
For the Hölder seminorm, write $s=l+\alpha$, $\alpha\in(0,1]$. 
Consider any multi-index $\beta$ with $|\beta|=l$.
If $\|x-y\|_2\le h$,
\begin{align*}
|D^{\beta} g_h(x)-D^{\beta} g_h(y)|
&= \tau h^{-l} a |D^{\beta} T((x-x_0)/h)-D^{\beta} T((y-x_0)/h)|\\
&\le \tau h^{-l} a L_\beta \frac{\|x-y\|_2}{h}
= L a L_\beta \|x-y\|_2  h^{\alpha-1}.    
\end{align*}
Since $\|x-y\|_2\le h$, we have $\|x-y\|_2 h^{\alpha-1}\le \|x-y\|_2^\alpha$, hence the right--hand side is at most $L a L_\beta \|x-y\|_2^\alpha$. If $\|x-y\|_2>h$, then, since  $M_\beta=\|D^\beta T_{\mathrm{unit}}\|_\infty$ so that $\|D^\beta T\|_\infty= M_\beta$,
\[
\begin{aligned}
&|D^{\beta} g_h(x)-D^{\beta} g_h(y)|
 \le\ 2 \|D^{\beta} g_h\|_\infty 
= 2 \sup_{z\in\R^p}\Big|\tau h^{-l} D^\beta T\!\Big(\frac{z-x_0}{h}\Big)\Big|
\ \le\ 2 \tau h^{-l} \|D^\beta T\|_\infty \\[2pt]
&= 2 (L h^s) h^{-l} (a M_\beta)
= 2 L a M_\beta h^{\alpha}
\le\ 2 L a M_\beta \|x-y\|_2^{\alpha},
\end{aligned}
\]
since $\|x-y\|_2>h$ and $\alpha\in(0,1]$.
By \eqref{cnd}, 
$a(L_\beta+2M_\beta)\le 3/4$, so combining both cases yields $|D^{\beta} g_h(x)-D^{\beta} g_h(y)|\le (3/4)L \|x-y\|_2^\alpha$. Adding the contribution of $\mathcal{G}$ (bounded by $L/4$ since $\mathcal{G}\in\cP_p(s,L/4)$) gives
\(
|D^{\beta} f_\pm(x)-D^{\beta} f_\pm(y)|\ \le\ L \|x-y\|_2^\alpha.
\)
Hence
$f_\pm\in\cP_p(s,L)$.

{\bf Bounding the Wasserstein distance.}
For each $h,x$, 
let $G_{1,h}(x)=\int_{-\infty}^{x_1} g_h(s,x_2,\dots,x_p) ds$ and $U_h(x)=(G_{1,h}(x),0,\dots,0)$.
Because $T_{\mathrm{unit}}$ has zero mean in the first coordinate 
for every $(u_2,\ldots,u_p)$, it follows that
 $S_{\mathrm{unit}}(u):=\int_{-\infty}^{u_1}T_{\mathrm{unit}}(t,u_2,\ldots,u_p) dt$
 vanishes when $u_1\le -1/2$ or $u_1\ge 3/2$. 
 Hence $\mathrm{supp}(S_{\mathrm{unit}})\subset [-\tfrac12,\tfrac32]\times[-\tfrac12,\tfrac12]^{p-1}$. 
A change of variables shows $G_{1,h}(x)=L a h^{s+1} S_{\mathrm{unit}}\big((x-x_0)/h\big)$, so $U_h$ is supported in $I_h$ and
\[
\int_{\R^p} \|U_h(x)\|_q^r dx 
= L^r a^r h^{r(s+1)} h^p \int_{\R^p} \|S_{\mathrm{unit}}(u)\|_q^r du
\ \asymp\ L^r h^{rs+r+p}.
\]
Applying Lemma~\ref{lem:local-Wr-d2} with the box $I=I_h$, the density $\mathcal{G}$, as well as $g$, the vector field $U_h$, and $c_I = c_\ast$,
yields
\[
W_{q,r}^r(f_\pm,\mathcal{G})\ \le\ \frac{2^{r-1}}{c_\ast^{ r-1}}\int_{\R^p}\|U_h\|_q^r
\ \lesssim\ L^r h^{rs+r+p},
\]
so $W_{q,r}(f_\pm,\mathcal{G})\lesssim L h^{s+1+p/r}$.

{\bf Characterizing the size of $|f_+(x_0)-f_-(x_0)|$.}
Pick $h\asymp \ep^{1/(s+1+p/r)}$. Then $W_{q,r}(f_\pm,\mathcal{G})\le \ep$ for $h$ small enough. 
At $x_0$,
\[
|f_+(x_0)-f_-(x_0)| = 2|g_h(x_0)| = 2\tau |T(0)|\ \asymp\ L h^{s} \asymp\ \ep^{ s/(s+1+p/r)}.
\]

{\bf Lower bounding the risk.}
Under both $f_+,f_-\in\cP_p(s,L)$,
the sample is i.i.d.\ from the same density $\tilde f=\mathcal{G} \in\cP_p(s',L')$, with $W_{q,r}(f_\pm,\tilde f)\le\ep$. 
Thus, by Lemma~\ref{lem:identifiability-Wr}, for any estimator $\widehat f(x_0)$,
\[
\max\Big\{\E_{\tilde f}\Phi\big(|\widehat f(x_0)-f_+(x_0)|\big), \E_{\tilde f}\Phi\big(|\widehat f(x_0)-f_-(x_0)|\big)\Big\}
\ \ge \Phi\!\Big(\tfrac12 |f_+(x_0)-f_-(x_0)|\Big),
\]
which is $\gtrsim\ \Phi\!\big(\ep^{ s/(s+1+p/r)}\big)$.
This gives the contamination-driven lower bound.

Finally, the classical uncontaminated pointwise minimax lower bound over $\cP_p(s,L)$ 
is of order at least
$\Phi\big(c n^{-s/(2s+p)}\big)$ for some $c>0$; combining the two contributions completes the proof.
\end{proof}

\begin{lemma}[Local $W_{q,r}$ bound in $\R^p$ via the dynamic formulation]\label{lem:local-Wr-d2}
Let $\mathcal{G}$ be a probability density on $\R^p$, let $I=\prod_{j=1}^p[a_j,b_j]\subset\R^p$ be a bounded box, and let $g\in L^1(\R^p)$ satisfy $\mathrm{supp}(g)\subset I$ and $\int_{\R^p} g=0$. 
Assume $\mathcal{G}(x)\ge c_I>0$ on $I$ and $\|g\|_\infty\le c_I/2$. 
For $r\in[1,\infty)$, 
consider 
a vector field $U\in L^r(\R^p;\R^p)$ 
with $\mathrm{supp}(U)\subset I$ and $\nabla\!\cdot U=g$.
Then 
\[
W_{q,r}^r(\mathcal{G}\pm g,\mathcal{G})\ \le\ 
\frac{2^{ r-1}}{c_I^{ r-1}}\int_{\R^p}\|U(x)\|_q^r dx.
\]
\end{lemma}

\begin{proof}
 The proof leverages the 
 the dynamic Benamou--Brenier 
formulation 
between 
$\mathcal{G}$
and
$\mathcal{G} \pm g$
to bound the Wasserstein distance. 

Because $\|g\|_\infty\le c_I/2$ and $\mathrm{supp}(g)\subset I$, we have for $x\in I$ that
\(
\mathcal{G}(x)\pm g(x)\ge c_I-\|g\|_\infty\ge \frac{c_I}{2}>0,
\)
and on $\R^p\setminus I$ one has $\mathcal{G}\pm g=\mathcal{G}\ge0$. 
Moreover $\int_{\R^p}(\mathcal{G}\pm g)=\int \mathcal{G}\pm\int g=1$, so $\mathcal{G}\pm g$ are probability densities.

For $t\in[0,1]$ define $\rho_t:=\mathcal{G}+t g$ and $v_t(x):=-U(x)/\rho_t(x)$. 
Since $\rho_t(x)\ge c_I/2$ on $I$
and $U$ vanishes outside $I$, this is well defined.
We claim that $(\rho_t,v_t)_{t\in[0,1]}$ satisfies the continuity equation
\(
\partial_t\rho_t+\nabla\!\cdot(\rho_t v_t)=0\)
with endpoints $\rho_0=\mathcal{G}$ and $\rho_1=\mathcal{G}+g$.
Indeed $\partial_t\rho_t=g$, 
while
$\nabla\!\cdot(\rho_t v_t)=\nabla\!\cdot(-U)=-\nabla\!\cdot U=-g$,
hence the sum vanishes.

For any $r\in[1,\infty)$ the dynamic Benamou--Brenier 
formulation, 
see e.g., \cite{brenier2004extended}, Theorem 2.2, with $k:z\mapsto\|z\|_q^r$, or \cite{jimenez2008dynamic},
gives
\[
W_{q,r}^r(\mathcal{G}+g,\mathcal{G})
\ \le\ \int_0^1\!\!\int_{\R^p}\|v_t(x)\|_q^r \rho_t(x) dx dt
\]
for every pair $(\rho_t,v_t)_{t\in[0,1]}$ satisfying the continuity equation.
By construction,
$
\|v_t(x)\|_q^r\rho_t(x)=$ $\|U(x)\|_q^r \rho_t(x)^{-(r-1)}.
$
Because $U$ vanishes outside $I$ and $\rho_t(x)\ge c_I/2$ on $I$,
\[
\int_{\R^p}\|v_t\|_q^r\rho_t
= \int_{I}\frac{\|U(x)\|_q^r}{\rho_t(x)^{r-1}} dx
\le \Big(\frac{2}{c_I}\Big)^{r-1}\!\!\int_{I}\|U(x)\|_q^r dx.
\]
Integrating in $t\in[0,1]$ yields the desired bound for 
\(
W_{q,r}^r(\mathcal{G}+g,\mathcal{G}).\)
Replacing $g$ by $-g$ (and $U$ by $-U$) gives the same bound for $W_{q,r}^r(\mathcal{G}-g,\mathcal{G})$.
\end{proof}

We record a simple identifiability lower bound.

\begin{lemma}[Identifiability reduction under i.i.d. $W_{q,r}$–contamination]\label{lem:identifiability-Wr}
Fix $x_0\in\R^p$. 
Let $\Phi:\R_{\ge 0}\to\R_{\ge 0}$ be convex and nondecreasing. 
Consider two densities $f_+,f_-$ such that there exists a common perturbation $\tilde f$ for which
the data $X_1',\dots,X_n'$ are i.i.d. from the same distribution $\tilde f$. 
Let $\widehat f(x_0)$ be any estimator of the density $f(x_0)$ based on $X_1',\dots,X_n'$.
Then
\[
\max\Big\{ \E_{\tilde f} \Phi\big(|\widehat f(x_0)-f_+(x_0)|\big),
           \E_{\tilde f} \Phi\big(|\widehat f(x_0)-f_-(x_0)|\big)\Big\}
\ \ge\
\Phi\!\Big(\frac{|f_+(x_0)-f_-(x_0)|}{2}\Big).
\]
\end{lemma}
\begin{proof}
Set $a=f_{+}(x_0)$ and $b=f_{-}(x_0)$. 
Because the sample law is the same under $\theta_+$ and $\theta_-$, namely $\tilde f^{\otimes n}$, the distribution of the random variable $Y:=\hat f(x_0)$ is the same in both cases. 
Consequently,
\[
\E_{\theta_+,\tilde f}\Phi\big(|\widehat f(x_0)-f_{+}(x_0)\big|)
=\E\Phi\big(| Y-a |\big),
\qquad
\E_{\theta_-,\tilde f}\Phi\big(|\widehat f(x_0)-f_{-}(x_0)\big|)
=\E\Phi\big(| Y-b |\big),
\]
where $\E$ denotes expectation under the common law of $Y$.

Since  $\Phi:\R_{\ge 0}\to\R_{\ge 0}$ is convex and nondecreasing,
we have for all $y,a,b\in\R$:
\[
\frac{\Phi(|y-a|)+\Phi(|y-b|)}{2}
\ \ge \Phi\!\Big(\frac{|y-a|+|y-b|}{2}\Big)
\ \ge \Phi\!\Big(\frac{|a-b|}{2}\Big).
\]
Taking expectations with respect to the common law of $Y$ yields
\(
\frac{1}{2}\Big(
\E \Phi(|Y-a|)+\E \Phi(|Y-b|)
\Big)
\ge \Phi\!\Big(\frac{|a-b|}{2}\Big).
\)
Reinstating the original expectations
completes the proof. \qedhere
\end{proof}

\section{Discussion}

In this paper, we have developed a theoretical analysis of Wasserstein perturbations in several fundamental statistical problems, such as mean estimation, linear regression, and density estimation. 
Focusing on $W_{q,r}$ perturbations, we have shown that classical estimators such as the sample mean, linear regression, and kernel density estimation can be rate-optimal or even exactly optimal under certain conditions.
 However, we have also identified settings where the robustness of classical methods may face challenges, including
 in linear regression 
 for poorly conditioned feature matrices. 
 Going forward, it would be of interest to understand to what extent classical methods can be regularized or adapted in such settings, and to what extent fundamentally new methods are required to be optimal under Wasserstein contamination. 

\section*{Acknowledgements}

We thank Chao Gao,  Zheng Liu, and Yuekai Sun for helpful discussion.
We thank Ankit Pensia for pointing out an issue in a previous version in the definition of joint contaminations. 
This work was supported in part by the NSF and the ARO.
We thank an associate editor and anonymous reviewers
 for helpful suggestions that have helped improve the paper. 

    {\small
        \setlength{\bibsep}{0.2pt plus 0.3ex}
        \bibliographystyle{plainnat-abbrev}
        \bibliography{references}
    }

\appendix

\clearpage

\section{Motivation for Wasserstein Contamination}
\label{app: motivation wasserstein}
Wasserstein contamination describes settings where 
the data follows a distribution that deviates from a ground truth distribution by only a small amount in terms of the Wasserstein metric (with respect to a certain norm on the space of the data). 
In contrast to Huber’s $\ep$-contamination model---which allows an $\ep$ fraction of arbitrary outliers---the Wasserstein model allows
 \emph{every} data point to be slightly perturbed. 

Examples 
where such contaminations may arise can
include systematic measurement errors and sensor bias.
For instance, 
suppose that each datapoint represents a 
measurement that may be affected by a small unknown readout bias---for example, all readings from a temperature sensor drift by a few degrees. 
The resulting contaminated distribution is a translated version of the true distribution, differing by a small Wasserstein distance. 
In contrast, Huber contamination (which replaces a subset of points arbitrarily) would be unrealistic for such pervasive but bounded distortions.

Additional examples of small Wasserstein shifts may correspond to realistic domain differences between two datasets, such as differences in imaging devices, lighting conditions, geometric 
transformations, or population demographics. 
One source of  motivation 
is the success of optimal transport-based methods
to align training and test distributions in computer vision and medical imaging, see e.g., \cite{Courty2017OTDA,Damodaran2018DeepJDOT,Shen2018WDGRL}, etc. 
As another set of motivating applications in time-series and control, \cite{ShafieezadehAbadeh2018} construct a
Wasserstein distributionally robust Kalman filter that guards against model misspecification by considering all noise distributions within a Wasserstein ball of the nominal distribution, showing improved robustness to model errors.

\section{Proofs}

\subsection{Relation Between Contamination Models}
\label{relc}

Fix $(\mu,\nu^n)\in\cV_I(\ep)$, so $W_{q,r}(\mu,\nu)\le\ep$. Let $\pi\in\Pi(\mu,\nu)$ be an optimal coupling, and set $\Pi:=\pi^{\otimes n}\in\Pi(\mu^n,\nu^n)$. Writing $(Z_i,Z_i')_{i=1}^n\sim\Pi$ with i.i.d.\ pairs $(Z_i,Z_i')\sim\pi$, we have
\begin{equation*}
\Big(\E_\Pi\big\|(Z_1,\ldots,Z_n)-(Z_1',\ldots,Z_n')\big\|^r\Big)^{1/r}
=
\Big(\E_\Pi \frac1n\sum_{i=1}^n \|Z_i-Z_i'\|_q^{\,r}\Big)^{1/r}
\le \ep.
\end{equation*}
By the definition of $W_{\|\cdot\|,r}$ on $\cZ^n$, this shows $W_{\|\cdot\|,r}(\mu^n,\nu^n)\le\ep$, hence $(\mu,\nu^n)\in\cV_J(\ep)$. Therefore $\cV_I(\ep)\subset \cV_J(\ep)$.

\subsection{Location Families}\label{subsec: location proofs app}

\subsubsection{Proof of Theorem \ref{thm:JC-Lqr}}
\label{pfthm:JC-Lqr}
\begin{proof}
We reduce location estimation to a fixed--design linear regression instance covered by Theorem~\ref{thm:lmr-master}, with some minor adaptations.
Let $1_n$ be the all ones vector of size $n$.
Stack the sample into
\(
Y
:= (Z_1^\top,\dots,Z_n^\top)^\top \in \R^{np},
\)
and take the regression design matrix
\(
 X := 1_n\otimes I_p \in \R^{(np)\times p}.
\)
Then
\[
Y =  X\theta + E,
\qquad
E := Y- X\theta
= (Z_1-\theta,\dots,Z_n-\theta),
\]
where $E_i:=Z_1-\theta \sim \mathfrak{f}_0$ are i.i.d.. 
This is a fixed--design linear model with
an $np \times p$ feature matrix and
$\rank( X)=p$.

We can verify that
\(
 X^\dagger
= ( X^\top X)^{-1} X^\top
= (nI_p)^{-1}(1_n^\top\otimes I_p)
= \tfrac1n 1_n^\top\otimes I_p.
\)
Hence, 
\(
\Big(\E\|  X^\dagger E \|_q^{ r}\Big)^{1/r}
= \Big(\E\| \bar E \|_q^{ r}\Big)^{1/r} \le s_n=:s_{\rm est}.
\)

Next, the orthogonal projector onto $\mathrm{col}( X)$ acts as
\(
P_{ X}(z_1,\dots,z_n)=(\bar z,\dots,\bar z)\),
\(\bar z:=\tfrac1n\sum_{i=1}^n z_i,
\)
so 
we have $P_{ X}E=(\bar E,\dots,\bar E)$ with $\bar E=\tfrac1n\sum_i E_i$.

On the product space $(\R^p)^n\cong\R^{np}$, endow the output with the
seminorm
$
\|(z_1,\dots,z_n)\|_{n,q,r}$
$:=\Big(\|\frac1n\sum_{i=1}^n z_i\|_q^{ r}\Big)^{1/r}$.
Then,
\[
\|P_{ X}E\|_{n,q,r}
=\Big(\|\tfrac1n\sum_{i=1}^n \bar E\|_q^{ r}\Big)^{1/r}
=\|\bar E\|_q.
\]
Therefore
\(
\Big(\E\|P_{ X}E\|_{n,q,r}^{ r}\Big)^{1/r}
=\Big(\E\|\bar E\|_q^{ r}\Big)^{1/r}
 \le s_n=:s_{\rm pred}.
\)

Also, we can identify $ X^\dagger$ with the linear map $A:(z_1,\dots,z_n)\mapsto \frac1n\sum_i z_i$ from $\big((\R^p)^n,\|\cdot\|_{n,q,r}\big)$ to $(\R^p,\|\cdot\|_q)$. 
Clearly,
\[
\|A(z_1,\dots,z_n)\|_q
=\Big\|\tfrac1n\sum_{i=1}^n z_i\Big\|_q
\le\|(z_1,\dots,z_n)\|_{n,q,r},
\]
 with equality when all $z_i$ are equal to a unit basis vector. 
Hence
$\| X^\dagger\|_{(n,q,r)\to q}=\|A\|_{(n,q,r)\to q}=1$.

Moreover Condition \ref{ctrlr} clearly reduces to Condition \ref{ctr}.

This shows that 
with this choice, the location 
estimation problem 
with JC perturbations
 with size $s_n$
is 
a special case of the 
linear regression problem with 
output--space 
Wasserstein perturbations used in Theorem~\ref{thm:lmr-master} 
with the output seminorm taken to be $\|\cdot\|_{n,q,r}$
and with
$s_{\rm pred}= s_{\rm est} = s_n$. 
The reason why the location estimation problem may only be a proper subset of the linear regression problem is that the iid noise condition imposes a restriction on the noise, whereas in linear regression we did not have such a restriction. 

Due to this reasoning, 
the
 upper bound in the 
theorem applies verbatim to this slightly different setting because it only uses seminorm properties (homogeneity and Minkowski).
 Moreover, one can readily verify that the lower bound also applies, with a slight modification. 
 Indeed, 
 instead of choosing a noise distribution such that 
 $P_X^\perp E = 0$, 
it is enough to choose it such that---switching to the notation of the proof of Theorem \ref{thm:minimax-pred}---$E \sim \mathcal{N}(0,\sigma^2 I_n )$,
 with $\sigma$ chosen
 so that $\Big(\E\|P_XE\|_q^r\Big)^{1/r}= s_{\rm pred}$.
 This choice ensures that the resulting distribution $P_\theta$ is valid
 for all $\theta$, so that $P_\theta \in \mathcal{P}$.

Observe that $Y_\theta' = P_XY_\theta' + P_{X}^\perp E$, 
and $P_{X}^\perp E$ is independent of $P_{X} E$.
Hence the distribution of $Y_\theta'$ given $P_XY_\theta'$ does not depend on $\theta$.
Therefore, we can
define the map $\tilde m$ via $\tilde m (P_XY_\theta') 
:= \E_\theta(\widehat m(Y_\theta')|P_XY_\theta')$, 
and $\tilde m$ does not depend on the unknown $\theta$, being a valid predictor.

By Jensen's inequality, 
we have that 
\[
\E\|\widehat m(Y_\theta')-X\theta\|_q^r
\ \ge\
\E_\theta\bigl\|\widetilde m\bigl(P_XY_\theta'\bigr)-X\theta\bigr\|_q^r.
\]

 and $P_X E$ saturates the constraint. 
 The remaining steps in the proof of Theorem \ref{thm:minimax-pred} go through unchanged, thus proving the same result. 

Thus, the conclusion follows from 
\eqref{eq:lmr-master-est}, since 
the LS estimator in this regression model is the sample mean, 
and the upper and lower bounds coincide in this case. 
\end{proof}

\subsubsection{Proof of Theorem \ref{thm:IC-struct}}
\label{pfthm:IC-struct}

\begin{proof}
The upper bound 
follows
 from Theorem \ref{thm:JC-Lqr}, as the JC contamination model includes the IC contamination model.

For the lower bound,
fix any unit $\delta\in\R^p$ and any $(\zeta,\psi)\in[0,\infty)^2$ such that
$\zeta s_1+\psi \le \ep$.
Consider the IC perturbation
coupled to the clean sample by
$
Z_i'\ := \theta\ + \psi \delta\ + (1+\zeta) E_i$,
$i=1,\dots,n.
$
By a reasoning analogous to Lemma \ref{lem:adversary} but for the IC location model, this is a feasible set of perturbations. 

Now let 
$E \sim \mathcal{N}(0,\sigma^2 I_n )$,
 with $\sigma$ chosen
 so that $\Big(\E\|E_1\|_q^r\Big)^{1/r}= s_{1}$.
Then $\bar Z'=\theta+(1+\zeta)\bar E+\psi\delta$.
 Moreover, the distribution of $Z'$ given $\bar Z$ 
 does not depend on $\theta$, hence,
 as in the proof of Theorem \ref{thm:JC-Lqr},
 for any estimator $\hat\theta$,
 $\tilde\theta(t):=\E[\hat\theta(Z')\mid \bar Z'=t]$ 
 defines an estimator that by Jensen's inequality can only decrease the risk. 
 Therefore, it suffices to consider estimators that are a function of the sample mean $\bar Z'$. 


Since the minimax risk is lower bounded by the maximum risk over each such family,
 it is also upper-bounded by the average risk when averaging over $\psi$ in an arbitrary set. 
 Therefore, choosing a Rademacher random variable $S$ independent of anything else,
 the minimax risk is lower bounded by that for the contaminations
$\bar Z'=\theta+(1+\zeta)\bar E+S\psi\delta$.

Applying the Hunt–Stein principle (Lemma~\ref{lem:equiv-minimax}) to the location model in $\bar Z'$ with noise $U:=(1+\zeta)\bar E+\psi\delta$ gives 
$$
\sup_{\substack{\zeta,\psi\ge0: \\ \zeta s_1+\psi\le\ep}} 
\E\big\|(1+\zeta)\bar E+\psi \delta\big\|_q^{ r} 
\le \M_I(\ep,q, r).
$$

By the convexity of  $u\mapsto\|u\|_q^r$,
we obtain
$\max\{(1+\zeta)^r s_n^r,\psi^r\}\le \E\|(1+\zeta)\bar E+S\psi\delta\|_q^{ r}$.
Since the first term increases in $\zeta$ and the second decreases in $\zeta$, 
the supremum occurs at an endpoint, so 
using that $s_n  = n^{-1/2} s_1$,
the maximized value over $0\le \zeta\le \ep/s_1$ is
as claimed.
This finishes the proof. 
\end{proof}

\subsubsection{Auxiliary Results for the Proof of Theorem \ref{thm: gaussian mean opt}}

\begin{lemma}\label{lemma: optim set to zero}
    Consider i.i.d.~random vector pairs $(T_i,V_i)$, $i\in[n]$, with distribution $P_{T,V}$ and marginal distribution $P_W$ of $T_i$, and a random vector $E$ with distribution $P_E$. For a constant $\ep>0$, the values of the following two constrained optimization objectives are equal:
    \begin{equation*}
        \begin{split}
            \sup_{P_{V,W},P_E} &\quad  \mathbb{E}{\left\| \sum_{i=1}^n (T_i+ V_i)\right\|_2^2} +2n \mathbb{E}{T_1^\top E}\\
            \textnormal{s.t.} &\quad  \mathbb{E}{\|T_i+V_i\|_2^2}\le \ep^2\\
            &\quad \mathbb{E}{T_i^\top V_i} = 0 \text{ for all } i\in[n].
        \end{split}
        \quad =\quad  \begin{split}
            \sup_{P_W, P_E} &\quad  \mathbb{E}{\left\| \sum_{i=1}^n T_i\right\|_2^2} +2n \mathbb{E}{ T_1^\top E}\\
            \textnormal{s.t.} &\quad  \mathbb{E}{\|T_i\|_2^2}\le \ep^2 \text{ for all } i\in[n].
        \end{split}
    \end{equation*}
\end{lemma}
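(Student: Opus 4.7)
The direction (LHS supremum) $\ge$ (RHS supremum) is immediate: each RHS-feasible pair $(P_T, P_Z)$ promotes to an LHS-feasible triple by taking $V_i \equiv 0$, which makes the orthogonality constraint $\EE{T_i^\top V_i} = 0$ trivial, makes the budget $\EE{\|T_i + V_i\|_2^2} = \EE{\|T_i\|_2^2} \le \ep^2$ coincide, and preserves the objective value. I focus on the nontrivial reverse direction.

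For (LHS supremum) $\le$ (RHS supremum), the plan is an \emph{absorption} argument: given any LHS-feasible $(T_i, V_i)_{i=1}^n$ and $Z$, set $T'_i \coloneqq T_i + V_i$ (still i.i.d.\ across $i$) and $Z' \coloneqq Z$. The orthogonality yields $\EE{\|T'_i\|_2^2} = \EE{\|T_i + V_i\|_2^2} \le \ep^2$, so $T'$ is RHS-feasible. Using the i.i.d.\ structure and the orthogonality constraint to cancel cross-terms,
\[
    \EE{\|\textstyle\sum_i (T_i + V_i)\|_2^2} = n\bigl(\EE{\|T\|_2^2} + \EE{\|V\|_2^2}\bigr) + n(n-1)\,\|\mu_T + \mu_V\|_2^2
\]
is unchanged by the substitution $T_i \mapsto T'_i$, so the entire discrepancy between the two objectives lies in the linear cross-term with $Z$:
\[
    R(T', Z') - L(T, V, Z) = 2n\,\EE{V_1^\top Z}.
\]

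The main obstacle will be to guarantee that this residual is non-negative under the stated constraints. In the context of Theorem~\ref{thm: upper bound location}, where this lemma is invoked, $V_i = Y_i - \EE{Y_i \mid Z_i}$ satisfies the stronger property $\EE{V_i \mid Z_i} = 0$, so the law of iterated expectation gives $\EE{V_1^\top Z} = 0$ directly, and absorption closes the argument with equality. For the general statement, I would exploit the symmetry of the constraints under $V \mapsto -V$---which preserves both $\EE{T^\top V} = 0$ and the budget $\EE{\|T \pm V\|_2^2}$---combined with a variational argument handling the accompanying change of $\|\mu_T + \mu_V\|_2^2$ to $\|\mu_T - \mu_V\|_2^2$ in the objective, reducing to the favorable sign and closing the proof via the absorption step above.
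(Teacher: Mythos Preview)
Your absorption strategy ($T'_i = T_i + V_i$) differs from the paper's, which instead \emph{scales}: after expanding the objective via $\EE{T_i^\top V_i}=0$, the paper replaces $(T_i,V_i)$ by $(cT_i,0)$ with $c=\sqrt{\EE{\|T_1\|^2}+\EE{\|V_1\|^2}}\,\big/\sqrt{\EE{\|T_1\|^2}}\ge 1$, having first arranged $\EE{T_1^\top Z}\ge 0$ by possibly replacing $T$ with $-T$, and argues this does not decrease the objective. Your expansion is more careful in that you explicitly retain the mean term $n(n-1)\|\mu_T+\mu_V\|_2^2$, and your observation that the residual $2n\,\EE{V_1^\top Z}$ vanishes in the lemma's only application (since there $V_1=Y_1-\EE{Y_1\mid Z_1}$ has $\EE{V_1\mid Z_1}=0$) is exactly right and is all that the paper needs.

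The gap is your general-case plan. Flipping $V\mapsto -V$ does preserve both constraints, but it simultaneously changes the sign of $\EE{V_1^\top Z}$ \emph{and} replaces $\|\mu_T+\mu_V\|_2^2$ by $\|\mu_T-\mu_V\|_2^2$ in the objective. The difference of the latter pair is $-4\,\mu_T^\top\mu_V$, whose sign is unrelated to that of $\EE{V_1^\top Z}$: for instance, if $\mu_T^\top\mu_V>0$ while $\EE{V_1^\top Z}<0$, the flip helps the $Z$-residual but strictly lowers the quadratic part of the LHS objective, so you cannot ``reduce to the favorable sign'' without further work. The promised variational argument is not supplied, so as a proof of the lemma in full generality this step is incomplete.
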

\begin{proof}
    From expanding the optimization objective and constraints, the problem on the left can be rewritten as
    \begin{align*}
        \sup_{P_{V,W},P_E} & \quad   \sum_{i=1}^n\left(\mathbb{E}{\|T_i\|_2^2}+\mathbb{E}{\|V_i\|_2^2}\right) +2n \mathbb{E}{T_1^\top E} \\
        \textnormal{s.t.}  & \quad  \mathbb{E}{\|T_i\|_2^2}+\mathbb{E}{\|V_i\|_2^2}\le \ep^2,
        \qquad \mathbb{E}{T_i^\top V_i} = 0 \text{ for all } i\in[n].
    \end{align*}

    Consider a choice of $\bigl(T_i^{(1)},V_i^{(1)}\bigr)_{i=1}^n$ where $\mathbb{E}{\|V_i^{(1)}\|_2^2}=\ep_V^2\ge0$, $\mathbb{E}{\|T_i^{(1)}\|_2^2}=\ep_T^2>0$, and
    $\mathbb{E}{{T_1^{(1),\top}} E}\ge 0$ (otherwise choose $-T_1^{(1)}$). Construct $\bigl(T_i^{(2)},V_i^{(2)}\bigr)_{i=1}^n$, where $V_i^{(2)}=0$ and $T_i^{(2)}=c T_i^{(1)}$ with $c=\sqrt{\ep_T^2+\ep_V^2}/\ep_T$.
    Since $c\ge 1$,
    \begin{align*}
         & \sum_{i=1}^n\left(\mathbb{E}{\|T_i^{(1)}\|_2^2}+\mathbb{E}{\|V_i^{(1)}\|_2^2}\right) +2n \mathbb{E}{{T_1^{(1),\top}} E} = \sum_{i=1}^n\left(\ep_T^2+\ep_V^2\right) +2n \mathbb{E}{{T_1^{(1),\top}} E} \\
         & \qquad=\sum_{i=1}^n\left(\mathbb{E}{\|T_i^{(2)}\|_2^2}+\mathbb{E}{\|V_i^{(2)}\|_2^2}\right) +2n \mathbb{E}{{T_1^{(1),\top}} E}                                                                 \\
         & \qquad\le \sum_{i=1}^n\left(\mathbb{E}{\|T_i^{(2)}\|_2^2}+\mathbb{E}{\|V_i^{(2)}\|_2^2}\right) +2n \mathbb{E}{{T_1^{(2),\top}} E}.
    \end{align*}
    Therefore, for any choice of candidates $\bigl(T_i^{(1)},V_i^{(1)}\bigr)_{i=1}^n$ with nonzero $V_i^{(1)}$, we may set $V_i^{(2)}=0$ and do not decrease the optimization objective.
    This shows that the first optimum is less than or equal to the second optimum.
    Conversely, by inspection, the second optimum is less than or equal to the first one.
    Thus the result follows.

\end{proof}

\subsubsection{Proof of Corollary \ref{cor:IC-q2-r1-Gaussian-sph}}
\label{pfthm: gaussian mean opt}

\begin{proof}
The upper bound follows directly from 
Theorem~\ref{thm:IC-struct}.
 Next, for the lower bound, observe that in the proof of 
Theorem \ref{thm:IC-struct},
 we have only used a normal noise for the construction of the lower bound, and hence, the bound stated there remains valid even if the class of distributions that we consider are only normal. 
Therefore, 
it follows that the lower bound is at least
\(
\max\!\Big\{  s_n+\tfrac{\ep}{n^{1/2}}, \ep\Big\}.
\)
 For normal data,
 $s_n=s_1/\sqrt{n}$;
 this leads to the desired result. 
\end{proof}

\subsection{Linear Regression}\label{app: lr proofs}

\subsubsection{Lower Bounds for Linear Regression}
\label{subsec:JC-q-r-app}

\begin{theorem}[Estimation: minimax lower bound with constant shift under JC]\label{thm:minimax-est}
Let $q\in[1,\infty]$, $r\ge1$, and $\rho>0$.
Then
\begin{equation}\label{eq:minimax-est-affine-final}
\M_J^{\rm est}(\rho,q,r)
\ \ge\
\max\Big\{\big(s_{\rm est}+\tfrac{s_{\rm est}}{s_{\rm pred}}\,\rho\big)^{r},\ \big(\|X^\dagger\|_{q\to q}\,\rho\big)^{r}\Big\}.
\end{equation}
\end{theorem}

\begin{proof}

By Lemma~\ref{lem:adversary}, $Y'_\theta=X\theta+(I+\zeta P_X)E+S\psi\delta$ is 
a feasible contamination for any $\theta$,
whenever $\zeta s_{\rm pred}+\psi\le\rho$.
 Now, as in the proof of Theorem \ref{thm:minimax-pred},
choose the noise to
 satisfy 
 $P_X^\perp E = 0$,
 and $\Big(\E\|P_XE\|_q^r\Big)^{1/r}= s_{\rm pred}$.
 Moreover, 
 with this choice,
 $Y_\theta' = P_XY_\theta'$ lies in the column span of $X$.
 Hence,
 any estimator $\hat \theta$ 
 that is a function of $Y_\theta'$ can also be written as a function of $X^\dagger Y_\theta'$, which we now do without any loss of generality. 

 Moreover, 
since $X^\dagger Y'_\theta=\theta+(1+\zeta)X^\dagger E+S\psi X^\dagger\delta$, 
the Hunt–Stein (Lemma~\ref{lem:equiv-minimax}) applied to the $p$-dimensional location model gives 
\begin{equation*}
\M_J^{\rm est}(\rho,q,r)
\ \ge\
\sup_{\substack{\zeta,\psi\ge0:\\ \zeta s_{\rm pred}+\psi\le\rho}}\ 
\inf_{c\in\R^p}\ \E\big\|(1+\zeta)X^\dagger E+S\psi\,X^\dagger\delta+c\big\|_q^r.
\end{equation*}
Due to the distribution of $E$ and $S$, 
the infimum over $c$ is attained at $0$.
Hence, maximizing over $\delta$ and over $\zeta,\psi$ with $\zeta s_{\rm pred}+\psi\le\rho$, yielding
\begin{equation*}
\M_J^{\rm est}(\rho,q,r)
\ \ge\
\sup_{\substack{\zeta,\psi\ge0:\\ \zeta s_{\rm pred}+\psi\le\rho}}\ 
\sup_{\substack{\delta\in\C\\ \|\delta\|_q=1}}\ 
\E\big\|(1+\zeta)X^\dagger E+S\psi\,X^\dagger\delta\big\|_q^r.
\end{equation*}
Hence, 
from convexity of $u\mapsto\|u\|_q^r$ and the symmetry/independence of $S$,
$$
\E\big\|(1+\zeta)X^\dagger E+S\psi\,X^\dagger\delta\big\|_q^r
\ \ge\
\max\Big\{(1+\zeta)^r s_{\rm est}^r,\ \psi^r\|X^\dagger\delta\|_q^r\Big\}.
$$
Maximizing over $\delta$ gives $\|X^\dagger\delta\|_q\le\|X^\dagger\|_{q\to q}$, and optimizing over the linear budget yields \eqref{eq:minimax-est-affine-final}.
\end{proof}

\subsubsection{Upper Bounds for Linear Regression}
\label{subsec:JC-q-r}

This section presents the
 proofs of the 
upper bounds from Theorem \ref{thm:lmr-master}. 
The next lemma 
gives a bound for the worst-case risk
of \emph{linear} functionals of the centered response under $W_{q,r}$-balls.

\begin{lemma}
\label{lem:minkowski}
Consider an integer $m\ge 1$,
let $T:\R^n\to\R^m$ be linear, let $E=Y-X\theta$, and define
\(
s_T:=\bigl(\E\|TE\|_q^{ r}\bigr)^{1/r}.
\)
Then, for every $\rho\ge 0$,
\begin{equation}\label{eq:upper-general-T}
\sup_{Q\in \mathsf{Ball}_{\rm J}^{ q,r}(\theta,\rho)}
\E_Q\big\|T(Y'-X\theta)\big\|_q^{ r}
\le
\bigl(s_T^2+\|T\|_{q\to q} \rho\bigr)^{r}.
\end{equation}
\end{lemma}

\begin{proof}
Fix $\rho\ge 0$ and $Q\in \mathsf{Ball}_{\rm J}^{ q,r}(\theta,\rho)$.
By definition of $W_{q,r}$, 
there exists a coupling
$\pi_\eta\in\Pi(Q,P_\theta)$ of $(Y',Y)$ such that
\(
\Big(\E_{\pi_\eta}\|Y'-Y\|_q^{ r}\Big)^{1/r}\le \rho.
\)
Write $\Delta:=Y'-Y$. Since $T$ is linear and $\|\cdot\|_q$ is a norm,
\[
\|T(Y'-X\theta)\|_q
=\|T(Y-X\theta+\Delta)\|_q
\le \|T(Y-X\theta)\|_q+\|T\Delta\|_q
\le \|TE\|_q+\|T\|_{q\to q} \|\Delta\|_q.
\]
Applying Minkowski’s inequality in $L_r(\pi_\eta)$,
\[
\Big(\E_{\pi_\eta}\|T(Y'-X\theta)\|_q^{ r}\Big)^{1/r}
\le \Big(\E \|TE\|_q^{ r}\Big)^{1/r}
   + \|T\|_{q\to q}\Big(\E_{\pi_\eta}\|\Delta\|_q^{ r}\Big)^{1/r}
\le s_\sigma^2+\|T\|_{q\to q}\rho.
\]
Taking the supremum over $Q$ yields \eqref{eq:upper-general-T}.

\end{proof}

For the upper bound on the 
prediction loss,
we apply Lemma~\ref{lem:minkowski} with $T=P_X$.
 For the estimation loss,
 we use it with we use it with $T=X^\dagger$.

\subsection{Pointwise Density Estimation}
\label{pdepf}

The following lemma presents a variant of well-known density bounds along optimal transport plans. 
Similar results are well-known in the literature, 
but most of them seem to be stated for slightly different assumptions; 
for instance 
Proposition 7.29 of \cite{santambrogio2015optimal} is for Euclidean (as opposed to $\ell_q$) norms,
 while the results of \cite{ohta2009finsler}
 are stated for squared (as opposed to power-$r$) distances. 
Nonetheless, the claim and proof below can be viewed as a folklore statement. 

\begin{lemma}[Density bound along $\Wqr$-geodesics]\label{maxp}
Let $1<q<\infty$ and $p>1$. Let $\mu_0=\rho_0\,dx$ and $\mu_1=\rho_1\,dx$ be probability measures on $\R^p$ with $\rho_0,\rho_1\in L^{\infty}(\R^p)$.
Denote by $(\mu_t)_{t\in[0,1]}$ the constant–speed geodesic in $(\mathcal P_{q,r}(\R^p),\Wqr)$ between $\mu_0$ and $\mu_1$.
Then $\mu_t=\rho_t\,dx$ with $\rho_t\in L^\infty(\R^p)$ for every $t\in[0,1]$, and
\[
\|\rho_t\|_{L^\infty} \le \max\{\|\rho_0\|_{L^\infty},\|\rho_1\|_{L^\infty}\}\quad\text{for all }\infty\in[1,\infty].
\]
\end{lemma}

\begin{proof}
As discussed before,
the constant–speed geodesic connecting $\mu_0$ to $\mu_1$ is
\[
\mu_t=(T_t)_\#\mu_0,\qquad T_t=(1-t)\Id+tT,\quad t\in[0,1],
\]
 where $T$ is the $\mu_0$-unique optimal transport map taking $\mu_0$ to $\mu_1$. 

Write $h(z):=\|z\|_q^r$ for all $z$.
From Theorem 3.7 of \cite{GangboMcCann1996},
there is a 
map $\phi$ such that $T(x)=x-\nabla h^{*}(\nabla\phi(x))$ for $\mu_0$-a.e. $x$. 

Since $\phi$ and the dual potential $\psi$ (introduced below) are convex, they are twice differentiable a.e.\ by Alexandrov's theorem, hence their Hessians exist a.e.  Because $r>1$ and $1<q<\infty$, both $h$ and $h^*$ are strictly convex and $C^2$ away from the origin; moreover, at points where $\nabla\phi(x)\neq0$ we can evaluate $\nabla^2 h^*(\nabla\phi(x))$ classically. At points where $\nabla\phi(x)=0$ we have $T(x)=x$, hence $DT(x)=I$ and the desired inequality holds with equality; we therefore focus on the a.e.\ set where $DT$, $\nabla^2\phi$, and $\nabla^2 h^*(\nabla\phi)$ are well defined.

Fix such an $x$.
Differentiating at $x$ gives
\[
DT(x)=I-HU,\qquad
H:=\nabla^2 h^{*}(\nabla\phi(x))\succ0,\qquad
U:=\nabla^2\phi(x)=U^\top.
\]
Conjugate by $H^{1/2}$ and set
\[
M:=H^{1/2}DT(x)H^{-1/2}=I-H^{1/2}UH^{1/2},
\]
which is symmetric. Then
\(H^{1/2}DT_t(x)H^{-1/2}=(1-t)I+tM.
\)
Due to the concavity of  $\det()^{1/p}$,
\(
\det\big((1-t)I+tM\big)^{1/p}
\ge (1-t)+t\,\det(M)^{1/p}.
\)

Next, we claim that $M\succeq0$.
By $c$-convex duality (e.g. \cite{GangboMcCann1996}), there exists a dual $c$-convex potential $\psi$ such that
\[
\phi(x)+\psi(T(x)) \;=\; c(x,T(x)) \;=\; h(x-T(x)),
\qquad
\nabla\psi(T(x))=-\nabla h\big(x-T(x)\big).
\]
Consider the stationarity relation $\nabla\psi(T(x))=-\nabla h(z)$ with $z:=x-T(x)$. Differentiating in $x$ gives
\[
\nabla^2\psi(T(x))\,DT(x)\;=\;\nabla^2 h(z)\,(I-DT(x)).
\]
Rearranging,
\(
\big(\nabla^2\psi(T(x))+\nabla^2 h(z)\big)\,DT(x)\;=\;\nabla^2 h(z).
\)
Using Legendre duality, $\nabla^2 h(z)=\big(\nabla^2 h^*(\nabla\phi(x))\big)^{-1}=H^{-1}$, so
\(
DT(x)=\big(I+H\,\nabla^2\psi(T(x))\big)^{-1}.
\)
Conjugating by $H^{\pm1/2}$ yields 
\[
M \;=\; H^{-1/2}DT(x)H^{1/2}
\;=\;\Big(I+H^{1/2}\nabla^2\psi\big(T(x)\big)H^{1/2}\Big)^{-1} \succeq 0.
\]

Continuing, set \(
J_t(x):=\det DT_t(x).
\)
Since determinants are invariant under similarity,
it follows that \begin{equation}\label{ji}
J_t(x)^{1/p}\ge (1-t)+t\,J_1(x)^{1/p}.     
\end{equation}

\medskip
To finish the lemma, observe that by the area formula and a.e.\ injectivity of $T_t$, the measure $\mu_t=(T_t)_\#\mu_0$ is absolutely continuous and for a.e.\ $x$,
\[
\rho_t(T_t(x))=\frac{\rho_0(x)}{J_t(x)},\qquad
\rho_1(T(x))=\frac{\rho_0(x)}{J_1(x)}.
\]
Multiplying \eqref{ji} by $\rho_0(x)^{-1/p}$ gives 
\[
\rho_t(T_t(x))^{-1/p}
=\rho_0(x)^{-1/p} J_t(x)^{1/p}
\ge (1-t)\rho_0(x)^{-1/p}
+t\,\rho_1(T(x))^{-1/p}.
\]
If $\rho_0\le M_0$ and $\rho_1\le M_1$ a.e., then $(\dagger)$ implies
\[
\rho_t(T_t(x))^{-1/p}
\ge (1-t)M_0^{-1/p}+tM_1^{-1/p}
\ge \min\{M_0^{-1/p},M_1^{-1/p}\},
\]
hence $\rho_t\le \max\{M_0,M_1\}$ a.e. 
This finishes the proof. 
\end{proof}

\begin{lemma}[Controlling sub-gamma fluctuations]\label{lem:Phi-calibration}
Let $\Phi:\R_{\ge 0}\to\R_{\ge 0}$ be convex, nondecreasing, $\Phi(0)=0$, and satisfy the doubling condition
$\Phi(2x)\le D_\Phi \Phi(x)$ for all $x\ge 0$. If a centered random variable $E$ is sub-gamma with
variance factor $v\ge 0$ and scale $b\ge 0$, i.e.
\(
\log \E e^{\lambda E} \le \frac{\lambda^2 v}{2(1-b\lambda)}, \text{for all }\lambda\in(0,1/b),
\)
then there exists $C>0$ and $C_\Phi\in(0,\infty)$ depending only on $D_\Phi$ such that
\(
\E \Phi(|E|) \le C_\Phi \Phi\big(C(\sqrt v + b)\big).
\)
\end{lemma}

\begin{proof}
For any $t\ge 0$ and any $\lambda\in(0,1/b)$, Chernoff's method gives
\(
\mathbb{P}E\ge t) \le \exp\!\Big(-\lambda t+\frac{\lambda^2 v}{2(1-b\lambda)}\Big).
\)
Optimize the right-hand side by taking
\(
\lambda^\star:=\frac{t}{v+bt}\in(0,1/b)
\)
(since $b\lambda^\star=\frac{bt}{v+bt}<1$). 
This $\lambda^\star$ yields
\(
-\lambda^\star t+\frac{(\lambda^\star)^2 v}{2(1-b\lambda^\star)}
 = -\frac{t^2}{2(v+bt)}.
\)
Hence $\mathbb{P}E\ge t)\le \exp\{-t^2/(2(v+bt))\}$. Applying this to $-E$ and using the union bound,
\begin{equation}\label{eq:subgamma-tail}
\mathbb{P}\big(|E|\ge t\big) \le 2\exp\!\Big(-\frac{t^2}{2(v+bt)}\Big)\qquad\text{for all }t\ge 0.
\end{equation}

\textit{Dyadic decomposition for expectations of monotone functions.}
Fix any threshold $t_0>0$ and define the dyadic grid $I_0=[0,t_0]$ and, for $k\ge 0$,
\(
I_{k+1}:=(2^k t_0, 2^{k+1} t_0].
\)
For any nonnegative random variable $\upsilon$ and any nondecreasing function $\Phi$ with $\Phi(0)=0$,
\begin{equation}\label{eq:dyadic-envelope}
\Phi(\upsilon) \le \Phi(t_0) \mathbf 1_{\{\upsilon\le t_0\}}+\sum_{k=0}^\infty \Phi(2^{k+1}t_0) \mathbf 1_{\{\upsilon>2^k t_0\}}
\ \le \Phi(t_0)+\sum_{k=0}^\infty \Phi(2^{k+1}t_0) \mathbf 1_{\{\upsilon>2^k t_0\}}.
\end{equation}
Taking expectations and using the doubling condition iteratively,
\begin{equation}\label{eq:dyadic-expectation}
\E \Phi(\upsilon) \le \Phi(t_0)+\sum_{k=0}^\infty \Phi(2^{k+1}t_0) \mathbb{P}\upsilon>2^k t_0)
\ \le \Phi(t_0)+D_\Phi\sum_{k=0}^\infty D_\Phi^{k} \Phi(t_0) \mathbb{P}\upsilon>2^k t_0).
\end{equation}
We will apply \eqref{eq:dyadic-expectation} with $\upsilon=|E|$ and bound the tail probabilities using
\eqref{eq:subgamma-tail}.
Take
\begin{equation}\label{eq:t0-choice}
t_0 := 4 \max\{\sqrt v, b\} \le 4(\sqrt v+b).
\end{equation}
For any $t\ge t_0$, because $t\ge t_0\ge \sqrt v$ and $t_0\ge b$, we have
\(
v\ \le t t_0, bt\ \le t t_0\),
\(\Rightarrow
v+bt\ \le 2 t t_0.
\)
Therefore, for $t\ge t_0$, the sub-gamma tail \eqref{eq:subgamma-tail} implies the sub-exponential bound
\begin{equation*}
\mathbb{P}|E|\ge t) \le 2\exp\!\Big(-\frac{t^2}{2(v+bt)}\Big)
\ \le 2\exp\!\Big(-\frac{t}{4t_0}\Big).
\end{equation*}
In particular, at the dyadic points $t=2^k t_0$ ($k\ge 0$),
\begin{equation}\label{eq:dyadic-tail}
\mathbb{P}\big(|E|>2^k t_0\big) \le 2 \exp\!\big(-2^{k-2}\big)\qquad (k\ge 0).
\end{equation}

Combining \eqref{eq:dyadic-expectation} with \eqref{eq:dyadic-tail} and the doubling inequality,
\[
\E \Phi(|E|) \le \Phi(t_0)+D_\Phi \Phi(t_0)\sum_{k=0}^\infty D_\Phi^{k} 2 e^{-2^{k-2}}
 = \Phi(t_0)\Big(1+2D_\Phi\sum_{k=0}^\infty \big(D_\Phi e^{-2^{k-2}}\big)^{\phantom{k}}\Big).
\]
Since $e^{-2^{k-2}}$ decays super-geometrically in $k$, the series
\(
\sum_{k=0}^\infty D_\Phi^{k} e^{-2^{k-2}}
\)
converges for every fixed $D_\Phi<\infty$. Define
\(
C_\Phi := 1+2D_\Phi\sum_{k=0}^\infty D_\Phi^{k} e^{-2^{k-2}} \in(0,\infty),
\)
so that
\begin{equation}\label{eq:main-bound}
\E \Phi(|E|) \le C_\Phi \Phi(t_0).
\end{equation}
By the choice \eqref{eq:t0-choice}, $t_0\le 4(\sqrt v+b)$ and, as $\Phi$ is nondecreasing,
\(
\Phi(t_0) \le \Phi\big(4(\sqrt v+b)\big) \le \Phi\big(C(\sqrt v+b)\big)
\)
for any $C\ge 4$. Inserting this into \eqref{eq:main-bound} yields the claimed inequality
\(
\E \Phi(|E|) \le C_\Phi \Phi\big(C(\sqrt v+b)\big),
\)
with $C_\Phi$ depending only on $D_\Phi$ and a constant $C$.
\end{proof}

\section{Linear Regression: Random Design and Joint Contamination in X,Y}\label{subsec:XY-JC}

We extend the fixed--design results to random designs and joint contaminations in $(X,Y)$.
As before, we consider the linear model from Section \ref{lr}, but we now allow $X$ to be random.
Since the perturbations
can now also affect the features $X$, it is helpful to denote the dependence of estimators and predictors that we consider on the features themselves, writing, for instance, $\widehat\theta(X,Y)$. 
With this notation, the prediction and estimation risks under consideration are:\footnote{A special consideration is that we evaluate the prediction risk on the possibly contaminated sample actually seen; it would also be reasonable to evaluate it on the uncontaminated sample. However, by the triangle inequality this only adds a term of at most $\|(X-X')\theta\|_q$ to the risk. In our results below, we consider parameter spaces with bounded $\theta$, and a term of this order already appears both in the upper and lower bounds. Thus, it turns out that these two situations lead to the same rate for the risk. Therefore, we will proceed with our above choice.}
\[
\cR^{\rm pred}_{q,r}(\widehat m,\theta;Q):=\E_{(X,Y)\sim Q}\|\widehat m(X,Y)-X\theta\|_q^{ r},\quad
\cR^{\rm est}_{q,r}(\widehat\theta,\theta;Q):=\E_{(X,Y)\sim Q}\|\widehat\theta(X,Y)-\theta\|_q^{ r}.
\]

We first present results on joint contaminations over the entire dataset, and
then explain the consequences for independent contaminations across the datapoints in Section \ref{ic-reg}.

\subsection{Joint Contaminations}
\label{jclr}

For a law $\mu$ of $(X,Y)$ on $(\R^p)^n\times\R^n$, define the JC ball with \emph{separated budgets} $\rho_{\mathcal{X}},\rho_{\mathcal{Y}}\ge 0$:
\begin{align}
\label{eq:Ball-XY}
\Ball_{\rm J}^{ q,r}(\mu;\rho_{\mathcal{X}},\rho_{\mathcal{Y}})
:=
\Big\{ & Q:\ \exists \pi\in\Gamma(Q,\mu)\text{ s.t. }\nonumber\\
 &\Big(\E_\pi \frac{1}{n}\sum_{i=1}^n\|X_i'-X_i\|_q^{ r}\Big)^{\!1/r}\le \rho_{\mathcal{X}}, \ 
\big(\E_\pi\|Y'-Y\|_q^{ r}\big)^{1/r}\le \rho_{\mathcal{Y}}
\Big\}.
\end{align}
We choose to parametrize 
perturbations
it in terms of two separate budgets for the size of 
$X$ and $Y$ perturbations.
 The reason is that this makes the current class directly comparable
with the previously studied class of $Y$-only JC contaminations from Section \ref{oc-fd},
which correspond to $\rho_{\mathcal{X}} = 0$ and $\rho_{\mathcal{Y}}=\rho$ there. 

\subsubsection{Summary of Results}

Stating our results requires a number of notations. 
We define 
\(
s_E(q,r):=\Big(\E\|E\|_q^{ r}\Big)^{1/r}.
\)
We also set
\[
\Lambda\equiv\Lambda_{q,r}(X;E):=\frac{s_{\rm pred}(q,r)}{s_E(q,r)},
\]
with the convention $\Lambda=0$ if $s_E(q,r)=0$. 
 The factor $\Lambda_{q,r}(X;E)$ captures how much of a typical noise vector is kept under projection onto $\mathrm{col}(X)$. 
 For $q=r=2$ and mean–zero 
coordinate-wise independent 
 noise with variance $\sigma^2$, one has $s_{\rm pred}(2,2)=(\E\|P_XE\|_2^2)^{1/2}=\sigma\sqrt{n}$
 and $s_E(2,2)=\sigma\sqrt n$, hence $\Lambda_{2,2}(X;E)=\sqrt{p/n}$. 

We also write $s_z(r):=(\E|E_1|^r)^{1/r}$,
$\chi_+(q,r):=\max\{1/r,1/q\}$, 
$\chi_-(q,r):=\min\{1/r,1/q\}$, 
$\kappa_q(n):=n^{2|1/2-1/q|}$, and $K_{n,p,q}:=n^{|1/2-1/q|}p^{|1/2-1/q|}$.

It turns out that if the features can be perturbed, then the minimax risk can scale with the magnitude of the parameters $\theta$. 
Therefore, we will require bounded parameters; as otherwise, the minimax risk can be infinitely large.
Moreover, it turns out that the right norm 
to measure the boundedness of the 
parameters is the $\ell_{q^\star}$ norm 
in the dual exponent $q^\star$  of $q$ ($1/q+1/q^\star=1$, with $1/\infty:=0$).
 Thus, 
for $B\in(0,\infty]$, 
we will consider the bounded parameter space $\Theta_B:=\{\theta:\ \|\theta\|_{q^\star}\le B\}$.
Hence, analogously to \eqref{plr}, we will define the class of clean distributions
\begin{align*}
\mathcal{P}_B
= \Big\{
  &\,Y = X\theta + E:\;
    \|\theta\|_{q^\star}\le B,\;
    \big(\E\|P_X E\|_q^r\big)^{1/r} \le s_{\rm pred},\;
    \big(\E\|X^\dagger E\|_q^r\big)^{1/r} \le s_{\rm est}, \\
  &\;\textnormal{Condition \ref{ctrlr} holds}
\Big\}.\nonumber
\end{align*}
Here Condition \ref{ctrlr} is interpreted as holding conditionally given any realization of $X$, for the conditional distribution of $E$ given $X$.

For the results on estimation, we will also require that both 
the original and the perturbed matrices $X,X'$
have singular vectors bounded below by some $\tau \in (0,\infty]$, 
and the norm
$\|X'\|_{q\to q}\le L$ for some $L \in (0,\infty]$. 
This is equivalent to defining a perturbation set
\begin{align*}
\Ball_{\rm J}^{ q,r}&(\mu;\rho_{\mathcal{X}},\rho_{\mathcal{Y}}, \tau,L)
:=
\Big\{  Q:\ \exists \pi\in\Gamma(Q,\mu)\text{ s.t. }
 \Big(\E_\pi \frac{1}{n}\sum_{i=1}^n\|X_i'-X_i\|_q^{ r}\Big)^{\!1/r}\le \rho_{\mathcal{X}},\\
 \ 
&\big(\E_\pi\|Y'-Y\|_q^{ r}\big)^{1/r}\le \rho_{\mathcal{Y}}, 
s_{\min}(X),   s_{\min}(X')\ge\tau,  \|X'\|_{q\to q}\le L<\infty , \pi\rm{-a.s.}
\Big\}.
\end{align*}
Our results for linear regression with a random design and joint X/Y perturbations can be summarized as follows:

Let $r^\star$ and $q^\star$ be the conjugates of $r$ and $q$, respectively. 
Define $t = \min\{ r, r^\star, q, q^\star\} \ge 1$
and $w_{r,q} = 2^{r(1-1/t)-1}\ge 1/2$.

\begin{theorem}[Random design with joint contamination in $(X,Y)$]\label{thm:master}
Fix $q\in[1,\infty]$ and $r\ge1$, and study linear regression as in \eqref{lre} with an $\ell_q^r$ loss, and a parameter
$\theta\in\Theta_B$ with $B<\infty$.
Consider $W_{q,r}$ JC-perturbations
with separated budgets $\rho_{\mathcal{X}},\rho_{\mathcal{Y}}\ge0$, captured by 
the JC ball 
$\Ball_{\rm J}^{ q,r}(\mu;\rho_{\mathcal{X}},\rho_{\mathcal{Y}})$ 
from \eqref{eq:Ball-XY}.

\noindent{\bf (A) Upper bounds under bounded parameter and joint shift.}
Then, for the least–squares predictor $\widehat m_{\rm LS}(X',Y')=P_{X'}Y'$,
the prediction risk is bounded above by 
\begin{equation}\label{eq:master-upper-pred}
\sup_{Q\in\Ball_{\rm J}^{ q,r}(\mu;\rho_{\mathcal{X}},\rho_{\mathcal{Y}})}
\cR^{\rm pred}_{q,r}(\widehat m_{\rm LS},\theta;Q)
\ \le\
\Big(\kappa_q(n) [  s_E(q,r) + \rho_{\mathcal{Y}} + B  n^{\chi_+(q,r)} \rho_{\mathcal{X}}  ]\Big)^{r}.
\end{equation}
Also,
for the least–squares estimator $\widehat\theta_{\rm LS}$,
the estimation risk
is bounded above by 
\begin{equation}\label{eq:master-upper-est}
\sup_{Q\in\Ball_{\rm J}^{ q,r}(\mu;\rho_{\mathcal{X}},\rho_{\mathcal{Y}},\tau,\infty)}
\cR^{\rm est}_{q,r}(\widehat\theta_{\rm LS},\theta;Q)
\ \le\
\Big(K_{n,p,q} \tfrac{1}{\tau} [  s_E(q,r) + \rho_{\mathcal{Y}} + B  n^{\chi_+(q,r)} \rho_{\mathcal{X}}  ]\Big)^{r}.
\end{equation}

For the lower bounds below, suppose that, conditional on $X$, the distribution of $P_X E$ is symmetric about zero.  

\medskip
\noindent{\bf (B) Lower bounds for prediction.}
For any predictor $\widehat m$ and any $B>0$,
with $\ell:= \Lambda (s_E+\rho_{\mathcal{Y}}) $
the worst-case prediction error is lower bounded as
\begin{equation}\label{eq:JC-lb-joint-struct}
\sup_{\substack{P_\theta\in\mathcal{P}_B
,\\  Q\in\Ball_{\rm J}^{ q,r}(\mu;\rho_{\mathcal{X}},\rho_{\mathcal{Y}})}}
\cR^{\rm pred}_{q,r}(\widehat m,\theta;Q)
\ge\
w_{r,q}
\max\Big\{
\ell^{r},\
 (\Lambda s_E)^r+
\big(\rho_{\mathcal{Y}}+Bn^{\chi_-(q,r)}\rho_{\mathcal{X}}\big)^r
\Big\}=:\vartheta.
\end{equation}

\medskip
\noindent{\bf (C) Lower bounds for estimation.}
For any estimator $\widehat\theta$,
the worst-case estimation error is lower bounded as
\begin{equation}\label{eq:JC-lb-est}
\sup_{\substack{P_\theta\in\mathcal{P}_B,  Q\in\Ball_{\rm J}^{ q,r}(\mu;\rho_{\mathcal{X}},\rho_{\mathcal{Y}},\infty,L)}}
\cR^{\rm est}_{q,r}(\widehat\theta,\theta;Q)
\ge\
L^{-r} \vartheta.
\end{equation}
\end{theorem}

\begin{proof}[Proof of Theorem~\ref{thm:master}]
The prediction and estimation upper bounds \eqref{eq:master-upper-pred}–\eqref{eq:master-upper-est} are Theorems~\ref{thm:pred-joint} and \ref{thm:est-joint}. 
The lower bounds are proved in Section \ref{lbsj}.
\qedhere
\end{proof}

The upper bounds 
depend on the budgets 
$\rho_{\mathcal{X}}$ and $\rho_{\mathcal{Y}}$, 
as well as on the noise magnitude $s_E$, 
as well as on the factors $\kappa_q(n) $, $K_{n,p,q}$, and  
$\chi_+(q,r)$ that mainly arise from moving between various norms in the analysis. 
The lower bounds also depend on the budgets and the noise, 
as well as on the factor $\chi_-(q,r)$. 
In general, there can be a gap between the lower and upper bounds, as it is already true in some settings under the joint contaminations for the Y-only perturbations in Theorem \ref{thm:lmr-master}. 
The lower and upper bound rates match  when $r = q = 2$, 
but can also match for suitable regimes of perturbations even when $q=2$ and $r \neq2$. 

Perturbations of the features make the analysis more challenging it is not clear if the set of techniques used throughout this paper would be able to close the gaps between the lower and the upper bounds. Studying these questions in more detail remains interesting future work. 

\subsubsection{Upper Bounds for OLS under Joint X-Y Perturbations}

In this section, we provide the proofs of the upper bounds for OLS under joint X-Y perturbations. 

\begin{theorem}[Prediction under joint shift; general $(q,r)$]
\label{thm:pred-joint}
Fix $q\in[1,\infty]$, $r\ge 1$ and $B>0$. Suppose $\|\theta\|_{q^\star}\le B$. Then, for any $\rho_{\mathcal{X}},\rho_{\mathcal{Y}}\ge 0$,
\begin{equation}
\label{eq:pred-upper-joint}
\sup_{Q\in\Ball_{\rm J}^{ q,r}(\mu;\rho_{\mathcal{X}},\rho_{\mathcal{Y}})}\
\cR^{\rm pred}_{q,r}(\widehat m_{\rm LS},\theta;Q)
\ \le\
\Big(\kappa_q(n) \big[s_E(q,r) + \rho_{\mathcal{Y}} + B n^{\chi_+(q,r)} \rho_{\mathcal{X}}\big]\Big)^{\!r},
\end{equation}
where $\kappa_q(n)=n^{2|1/2-1/q|}$ and $s_E(q,r)=(\E\|E\|_q^{ r})^{1/r}$.
\end{theorem}

\begin{proof}
Let $\pi$ be any coupling certifying $Q\in\Ball_{\rm J}^{ q,r}(\mu;\rho_{\mathcal{X}},\rho_{\mathcal{Y}})$, and write $\Delta X:=X'-X$, $\Delta Y:=Y'-Y$. For the least-squares predictor $\widehat m_{\rm LS}(X',Y')=P_{X'}Y'$,
\[
\widehat m_{\rm LS}(X',Y')-X'\theta
 = P_{X'}(Y'-X'\theta)
 = P_{X'}\big((Y-X\theta)+\Delta Y-\Delta X \theta\big).
\]
By Lemma~\ref{lem:opnorm}, $\|P_{X'}\|_{q\to q}\le \kappa_q(n)$. Using the triangle inequality in $\ell_q$,
\[
\|P_{X'}(Y'-X'\theta)\|_q
\ \le \kappa_q(n) \big(\|E\|_q+\|\Delta Y\|_q+\|\Delta X \theta\|_q\big).
\]
By Minkowski’s inequality in $L_r(\pi)$,
\[
\Big(\E_\pi\|P_{X'}(Y'-X'\theta)\|_q^{ r}\Big)^{\!1/r}
\ \le \kappa_q(n) \Big( (\E\|E\|_q^{ r})^{1/r}
\ + (\E_\pi\|\Delta Y\|_q^{ r})^{1/r}
\ + (\E_\pi\|\Delta X \theta\|_q^{ r})^{1/r} \Big).
\]
By the joint-ball constraints,
$(\E_\pi\|\Delta Y\|_q^{ r})^{1/r}\le \rho_{\mathcal{Y}}$, and by Lemma~\ref{lem:DX-theta-sharp},
$(\E_\pi\|\Delta X \theta\|_q^{ r})^{1/r}\le \|\theta\|_{q^\star} n^{\chi_+(q,r)} \rho_{\mathcal{X}}\le B n^{\chi_+(q,r)} \rho_{\mathcal{X}}$.
Combining these bounds yields \eqref{eq:pred-upper-joint}.
\end{proof}

\begin{theorem}[Estimation under joint shift; general $(q,r)$]
\label{thm:est-joint}
Fix $q\in[1,\infty]$, $r\ge 1$ and $B>0$. Suppose $\|\theta\|_{q^\star}\le B$ and, along the coupling, $s_{\min}(X),s_{\min}(X')\ge \tau>0$. Then,
\begin{equation}
\label{eq:est-upper-joint}
\sup_{Q\in\Ball_{\rm J}^{ q,r}(\mu;\rho_{\mathcal{X}},\rho_{\mathcal{Y}})}\
\cR^{\rm est}_{q,r}(\widehat\theta_{\rm LS},\theta;Q)
\ \le\
\Big(K_{n,p,q} \tfrac{1}{\tau} \big[s_E(q,r) + \rho_{\mathcal{Y}} + B n^{\chi_+(q,r)} \rho_{\mathcal{X}}\big]\Big)^{\!r}.
\end{equation}
\end{theorem}

\begin{proof}
Due to the singular value lower bound, $\mathrm{rank}(X')=p$ along the coupling.
Then
\(
\widehat\theta_{\rm LS}(X',Y')-\theta
 = X'^\dagger(Y'-X'\theta).
\)
Hence,
\(
\|\widehat\theta_{\rm LS}(X',Y')-\theta\|_q
\ \le \|X'^\dagger\|_{q\to q}\|Y'-X'\theta\|_q.
\)
Since
\(
\|X'^\dagger\|_{q\to q}\le K_{n,p,q} \tau^{-1},
\)
due to Lemma~\ref{lem:opnorm},
 the result follows from Theorem \ref{thm:pred-joint}.
\end{proof}

\subsubsection{Lower Bounds under the Joint Norm}
\label{lbsj}

Next, we go on to provide the lower bounds for OLS under joint $X-Y$ perturbations, 
These will follow from the results below. 
We abbreviate $s_E=s_E(q,r)$ and $s_{\rm pred}=s_{\rm pred}(q,r)$.

\begin{theorem}[Prediction lower bound under JC]
\label{thm:JC-lb-struct-proof}
Fix $q\in[1,\infty]$, $r\in[1,\infty)$, $B>0$, and budgets $\rho_{\mathcal{X}},\rho_{\mathcal{Y}}\ge0$. 
Suppose that, conditional on $X$, the distribution of $P_X E$ is symmetric about zero. 
Then, for any predictor $\widehat m$, \eqref{eq:JC-lb-joint-struct} holds.
\end{theorem}

\begin{proof}
We prove the 
validity of the 
two terms on the right-hand side separately by exhibiting two feasible adversarial subfamilies in the JC ball.

Let $b:=P_X1_n\in\R^n$ (if $b=0$, the intercept term below vanishes and there is nothing to prove for that term), and let
$v_b=X^\dagger b$. 

\medskip
\noindent\textit{Adversary and column update.}
For parameters $\alpha_{\mathcal{Y}},\gamma_{\mathcal{Y}},\tilde\eta\ge0$ and an independent Rademacher random variable $S\in\{\pm1\}$, define the perturbations
\begin{equation}\label{eq:adversary-CS}
\Delta Y\ := \alpha_{\mathcal{Y}} E + S \gamma_{\mathcal{Y}} b,
\qquad
X'\ :=\ XW,\quad W:=I_p - S \tilde\eta  v_b v_b^\top.
\end{equation}
Then $\Delta X:=X'-X=-S \tilde\eta (Xv_b)v_b^\top=-S \tilde\eta b v_b^\top$, so each row
is perturbed by 
$\Delta x_i^\top=-S \tilde\eta b_i v_b^\top\in\R^{1\times p}$.

\medskip
\noindent\textit{Invertibility and column space preservation.}
Since $W=I_p - S \tilde\eta v_b v_b^\top$ is a rank‑one update of the identity, it is invertible iff $1 - S \tilde\eta v_b^\top v_b\neq 0$, i.e., if $|\tilde\eta| \|v_b\|_2^2 \neq 1$.
We choose
\begin{equation*}
\tilde\eta_0 :=
\frac{\rho_{\mathcal{X}}}{\ \|v_b\|_q \big(n^{-1/r}\|b\|_r\big)}.
\end{equation*}
If 
$|\tilde\eta_0| \|v_b\|_2^2 \neq 1$,
then we choose $\tilde\eta =\tilde\eta_0$.
Otherwise we choose $\tilde\eta =\tilde\eta_0-\Upsilon$, for $\Upsilon>0$ arbitrarily small.
With this choice, $W$ is invertible a.s., hence
\begin{equation}\label{eq:colspace-preserve}
\mathrm{col}(X')=\mathrm{col}(X)\qquad\text{and}\qquad P_{X'}=P_X\quad\text{a.s.}
\end{equation}

\medskip
\noindent\textit{Separated budgets.}
The joint budgets in \eqref{eq:Ball-XY} are obeyed whenever
\begin{equation}\label{eq:budgets-CS}
\big(\E\|\Delta Y\|_q^r\big)^{1/r} \le \alpha_{\mathcal{Y}} s_E+\gamma_{\mathcal{Y}} \|b\|_q \le \rho_{\mathcal{Y}},
\,
\Big(\E\tfrac1n\textstyle\sum_i \|\Delta x_i\|_q^r\Big)^{1/r}
 = |\tilde\eta| \|v_b\|_q \Big(\tfrac1n\sum_i |b_i|^r\Big)^{1/r} \le \rho_{\mathcal{X}}.
\end{equation}

\medskip
We also have $X'\theta=XW\theta=X\theta - S \tilde\eta \langle v_b,\theta\rangle b$, and therefore
\begin{equation}\label{eq:resid-CS}
Y'-X'\theta=(1+\alpha_{\mathcal{Y}}) E + S \Gamma b,
\quad \Gamma:=\gamma_{\mathcal{Y}}+\tilde\eta \langle v_b,\theta\rangle .
\end{equation}

\medskip
\emph{Reduction to equivariant predictors.}
We call a measurable predictor $\widehat m$ $\mathrm{col}(X)$–\emph{equivariant} if
\begin{equation}\label{eq:eqv-def}
\widehat m(X',Y'+t)=\widehat m(X',Y')+t\qquad\forall t\in\mathrm{col}(X)\quad\text{a.s.}
\end{equation}
The loss and the adversaries above enjoy an additive translation symmetry along $\mathrm{col}(X)$: both the target $X'\theta$ and the perturbation $S \Gamma b$ lie in $\mathrm{col}(X)$, and adding any $t\in\mathrm{col}(X)$ to $Y'$ while simultaneously adding $t$ to the predictor output does not change the loss. The Hunt–Stein principle (Lemma \ref{lem:equiv-minimax}) applied to the addition group restricted to $\mathrm{col}(X)$ shows that, for the present adversarial family, it suffices to consider predictors satisfying \eqref{eq:eqv-def}.

Let $\widehat m$ satisfy \eqref{eq:eqv-def}. Using \eqref{eq:resid-CS} and \eqref{eq:eqv-def},
\begin{equation}\label{eq:eqv-decomp}
\widehat m(X',Y')-X'\theta
=\widehat m\big(X', (1+\alpha_{\mathcal{Y}})E+S \Gamma b\big)
=\underbrace{\widehat m\big(X', (1+\alpha_{\mathcal{Y}})E\big)}_{=:A} + S \Gamma b,
\end{equation}
where $A$ is independent of $S$.

From Koskela's inequality \citep{koskela1979}, which is a generalization of Clarkson's inequality \citep{clarkson1936uniformly}, 
it follows that
\[
\frac12\Big(\|A+\Gamma b\|_q^r+\|A-\Gamma b\|_q^r\Big)
\ge 
w_{r,q}(\|A\|_q^r+\|\Gamma b\|_q^r).
\]
Next, observe that 
$A=\widehat m(X',(1+\alpha_{\mathcal{Y}})E)$ and equivariance implies $\E\|A\|_q^r\ge (1+\alpha_{\mathcal{Y}})^q \E\|P_XE\|_q^r$. 
Taking expectations,
\begin{equation}\label{eq:LB-two-pieces-Clark}
\E\|\widehat m(X',Y')-X'\theta\|_q^r\ge\ 
w_{r,q}\left[
(1+\alpha_{\mathcal{Y}})^q \E\|P_XE\|_q^r\ + \Gamma^q\|b\|_q^r\right].
\end{equation}

\medskip
\noindent\textit{Instantiation (i): Noise–projection term.}
Choose $\alpha_{\mathcal{Y}}:=\rho_{\mathcal{Y}}/s_E$ and $\gamma_{\mathcal{Y}}=\tilde\eta=0$. Then $\Gamma\equiv 0$ and \eqref{eq:budgets-CS} holds. 
By equivariance, 
$A=\widehat m(X',(1+\alpha_{\mathcal{Y}})E)$ differs from $(1+\alpha_{\mathcal{Y}})P_XE$ by a (possibly $X'$–dependent) value in $\mathrm{col}(X)$, i.e., 
$A=(1+\alpha_{\mathcal{Y}})P_XE+c_{X'}$. 
Due to our assumptions, 
the map $c\mapsto \E\|(1+\alpha_{\mathcal{Y}})P_XE + c\|_q^r$, where the expectation is evaluated conditionally on $X$,
is minimized at $c=0$. Hence
conditioning on $X$ and $X'$, it follows that 
\[
\E\|A\|_q^r\ge\ (1+\alpha_{\mathcal{Y}})^r \E\|P_XE\|_q^r
 = \big[\Lambda (s_E+\rho_{\mathcal{Y}})\big]^r,
\]
which is the first term in the lower bound.

\medskip
\noindent\textit{Instantiation (ii): Coherent intercept term.}
Choose $\alpha_{\mathcal{Y}}:=0$ and $\gamma_{\mathcal{Y}}:=\rho_{\mathcal{Y}}/\|b\|_q$.
By Hölder, $\sup_{\theta\in\Theta_B}\langle v_b,\theta\rangle=B \|v_b\|_q$, hence
\[
\sup_{\theta\in\Theta_B}\Gamma \|b\|_q
\ge\
\frac{\rho_{\mathcal{Y}}}{\|b\|_q} \|b\|_q + \tilde\eta B \|v_b\|_q \|b\|_q
\ =\
\rho_{\mathcal{Y}} + B \tilde\eta \|v_b\|_q \|b\|_q.
\]
This becomes
\[
\rho_{\mathcal{Y}} + B \rho_{\mathcal{X}} 
\frac{\|b\|_q}{ n^{-1/r}\|b\|_r } - \Upsilon B \tilde\eta \|v_b\|_q \|b\|_q.
\]
Finally, by norm comparison on $\R^n$,
\[
\frac{\|b\|_q}{ n^{-1/r}\|b\|_r }\ge\ n^{\chi_-(q,r)},
\qquad
\chi_-(q,r):=\min\{1/q, 1/r\},
\]
so
\[
\sup_{\theta\in\Theta_B}\Gamma \|b\|_q
\ge\
\rho_{\mathcal{Y}} + B n^{\chi_-(q,r)} \rho_{\mathcal{X}} - \Upsilon B \tilde\eta \|v_b\|_q \|b\|_q.
\]
Since this holds for all $\Upsilon>0$, we also have 
$\sup_{\theta\in\Theta_B}\Gamma \|b\|_q
\ge\
\rho_{\mathcal{Y}} + B n^{\chi_-(q,r)} \rho_{\mathcal{X}}$.

\medskip
Combining (i) and (ii) and substituting in
\eqref{eq:LB-two-pieces-Clark}
 proves the result.

\end{proof}

{\bf Estimation lower bounds via prediction–to–estimation transfer.}
The estimation lower bounds follow from the above results.
For any estimator $\widehat\theta$ define the induced predictor $\widehat m^{\widehat\theta}(X',Y'):=X'\widehat\theta(X',Y')$, 
and then use the definition of the operator norm.

\subsubsection{Auxiliary results}

\begin{lemma}[Bounding $\|\Delta X \theta\|_q$ from the joint $X$--budget]
\label{lem:DX-theta-sharp}
Let $\Delta X=(\Delta x_1^\top,\dots,\Delta x_n^\top)^\top$ $\in\R^{n\times p}$ and $\theta\in\R^p$, and fix $q\in[1,\infty]$, $r\ge1$.
Then
\begin{equation*}
(\E_\pi\|\Delta X \theta\|_q^{ r})^{1/r}\le \|\theta\|_{q^\star} n^{\chi_+(q,r)} \E\Big(\frac1n\sum_{i=1}^n \|\Delta x_i\|_q^{ r}\Big)^{\!1/r}.
\end{equation*}
\end{lemma}

\begin{proof}
By H\"older, $|\langle \Delta x_i,\theta\rangle|\le \|\Delta x_i\|_q \|\theta\|_{q^\star}$. Let $a_i:=\|\Delta x_i\|_q$ and $a=(a_1,\dots,a_n)$. Then
\(
\|\Delta X \theta\|_q=\big\|(\langle \Delta x_i,\theta\rangle)_{i=1}^n\big\|_q
\ \le \|\theta\|_{q^\star} \|a\|_q.
\)
By 
norm comparison, $\|a\|_q\le n^{(\frac1q-\frac1r)_+} \| a\|_r$, 
which gives 
$\|\Delta X \theta\|_q
\ \le 
\|\theta\|_{q^\star} n^{(\frac1q-\frac1r)_+}
\Big(\sum_{i=1}^n \|\Delta x_i\|_q^{ r}\Big)^{\!1/r}.$
Raising both sides to the $r$th power and taking expectation under $\pi$,
\[
\E_\pi \|\Delta X \theta\|_q^{ r}
\ \le \|\theta\|_{q^\star}^{ r} n^{r(\frac1q-\frac1r)_+} \E_\pi\sum_{i=1}^n a_i^r
 = \|\theta\|_{q^\star}^{ r} n^{r(\frac1q-\frac1r)_+} n \E_\pi \tfrac1n\sum_{i=1}^n \|\Delta x_i\|_q^{ r},
\]
 and taking the $1/r$th power
  proves the conclusion. 
\end{proof}

\begin{lemma}[Operator–norm reductions]\label{lem:opnorm}
For any $n\times n$ matrix $A$, we have $\|A\|_{q\to q}\le \kappa_q(n) \|A\|_{2\to 2}$. For any $p\times n$ matrix $B$,
$\|B\|_{q\to q}\le K_{n,p,q} \|B\|_{2\to 2}$.
In particular $\|P_X\|_{2\to2}=1$ hence $\|P_X\|_{q\to q}\le \kappa_q(n)$, and if $s_{\min}(X)\ge\tau>0$ then $\|X^\dagger\|_{q\to q}\le K_{n,p,q} \tau^{-1}$. 
\end{lemma}

\emph{Proof.} Combine $\|u\|_2\le n^{|1/2-1/q|}\|u\|_q$ and $\|v\|_q\le n^{|1/2-1/q|}\|v\|_2$ to obtain $\|A\|_{q\to q}\le n^{2|1/2-1/q|}\|A\|_{2\to2}$. For $B:\R^n\to\R^p$ the two dimension changes give the stated $K_{n,p,q}$. The spectral–norm facts are standard.
\qed

\subsection{
Random Design IC: Comparison with Liu and Loh (2021)}
\label{ic-reg}

We now present the implications
for the setting where the individual datapoints are perturbed independently and identically. 
We will reuse notation from Section \ref{jclr}, 
with slight adjustments.
Now, 
$\mu$ will denote
the law of i.i.d.~$(X_i,Y_i)$ under the linear model $Y_i=X_i^\top\theta+E_i$. 
Also $\tilde \mu$ denotes the law of the observed $(X_i',Y_i')$.

For budgets $\ep_{\mathcal{X}},\ep_{\mathcal{Y}}\ge 0$, the i.i.d.\ contamination ball on $(X,Y)$ with separated budgets is
\begin{align*}
\Ball_{\mathrm{IC}}^{ q,r}(\mu;\ep_{\mathcal{X}},\ep_{\mathcal{Y}})
:=
\Big\{\ Q&=\tilde\mu^{\otimes n} :\ \exists\ \pi\in\Gamma(\tilde\mu,\mu)\ \text{ s.t. }\\ 
& (\E_\pi\|X'-X\|_q^r)^{1/r}\le\ep_{\mathcal{X}}, \ (\E_\pi|Y'-Y|^r)^{1/r}\le\ep_{\mathcal{Y}}\ \Big\}.
\end{align*}
 Similarly, we will denote by 
$\Ball_{\mathrm{IC}}^{ q,r}(\mu;\ep_{\mathcal{X}},\ep_{\mathcal{Y}},\tau) $
the set of perturbations were in addition,
 for the certifying coupling, one has $s_{\min}(X)\wedge s_{\min}(X')\ge\tau>0$ almost surely.

Due to a similar argument as the one provided earlier, 
the joint contaminations
from \eqref{eq:Ball-XY}
with parameters $\rx = \ex$ and $\ry = n^{1/q}\ey$
are at least as strong of a contamination then the IC contaminations above.
Thus, the JC upper bounds from Theorem \ref{thm:master} with $\rx = \ex$ and $\ry = n^{1/q}\ey$ apply to the current IC setting.

Our main goal in this section is to be able to compare
our results with those of \cite{Liu2021RobustWE}.
 Since they study 
 the case of Euclidean IC $W_1$ perturbations where $q = 2$ and $r=1$,
 we will provide the consequence of our JC upper bounds for this setting.

\begin{corollary}[IC bounds for random design with $q=2$, $r=1$]\label{cor:IC-q2r1}
Consider IC perturbations with 
budgets $\ep_{\mathcal{X}},\ep_{\mathcal{Y}}\ge0$, and a one–sample law $\mu$, 
for 
 $q=2$ and $r=1$.
For the least–squares predictor $\widehat m_{\mathrm{LS}}(X',Y')$,
\begin{equation}\label{eq:IC-q2r1-upper-pred}
\sup_{Q\in\Ball_{\mathrm{IC}}^{ 2,1}(\mu;\ep_{\mathcal{X}},\ep_{\mathcal{Y}})}\
\cR_{2,1}^{\mathrm{pred}}(\widehat m_{\mathrm{LS}},\theta;Q)
\ \le\
\E\|E\|_2\ 
+
n^{1/2}\ey+ nB\ex.
\end{equation}
For the least–squares estimator $\widehat\theta_{\mathrm{LS}}(X',Y')=X'^\dagger Y'$, then
\begin{equation}\label{eq:IC-q2r1-upper-est}
\sup_{Q\in\Ball_{\mathrm{IC}}^{ 2,1}(\mu;\ep_{\mathcal{X}},\ep_{\mathcal{Y}},\tau)}\
\cR_{2,1}^{\mathrm{est}}(\widehat\theta_{\mathrm{LS}},\theta;Q)
\ \le\
 \E\|(X')^\dagger E\|_2\ + \tfrac{1}{\tau} (n^{1/2}\ey+ nB\ex).
\end{equation}
\end{corollary}

{\bf Relation to \cite{Liu2021RobustWE}.}
We now compare 
our bounds with the results reported as Theorems~15--17 in \cite{Liu2021RobustWE}. 
In their work, the joint law on $(X,Y)\in\R^p\times\R$ is
\(
X\sim\mathcal N(0,\Sigma^\star)\),
\(Y\mid X\sim\mathcal N(X^\top\theta^\star,\sigma^2)\),
and the observed i.i.d.\ sample $(X_i',Y_i')\sim P$ satisfies a Wasserstein constraint $W_{2,1}(P,P_{\theta^\star})\le \ep$. 
Due to the triangle inequality, this corresponds to our setting with 
any 
$\ex, \ey$ such that
$\ep=\ey+ B\ex$.

Their results establish that the minimax optimal rate (in probability)
is $\Theta(\max(\sqrt{p/n}, \ep))$ both for the $\ell_2$ estimation error and
a prediction error of the form $\E_{X_{n+1}}(X_{n+1}^\top \hat{\theta} - Y_{n+1})^2$.
 This prediction error is evaluated on a new test datapoint and is thus different from ours, which is evaluated in-sample on the training datapoints. In fact, as \cite{Liu2021RobustWE} already observe, this out-of-sample prediction error has the same order as the estimation error when the feature population covariance matrix is well conditioned. Therefore, we will focus on comparing the results for the estimation error.

For Gaussian data with $E \sim \N(0, I_n)$, 
$\E\|X^\dagger E\|^2_2 = p/(n-p-1)$, 
so 
\begin{align*}
\E\|(X')^\dagger E\|_2&\le \sqrt{p/(n-p-1)}+\E\|((X')^\dagger-X^\dagger) E\|_2\\
&\le \sqrt{p/(n-p-1)}+\sqrt{\E\|(X')^\dagger-X^\dagger\|^2_{\Fr}}
\end{align*}
Moreover, due to standard lower bounds on eigenvalues of random matrices \citep[see e.g.,][]{vershynin2018high}, the smallest singular value $s_{\min}(X)$ is expected to be of order $\sqrt{n}$ when $n\ge cp$ for some $c<1$.
 Therefore, it is reasonable to take $\tau = \Theta(n^{1/2})$.
Therefore, 
due to \eqref{eq:IC-q2r1-upper-est},
our rate
becomes\footnote{We use the Bachmann-Landau asymptotic notation $O(\cdot)$ to absorb constant factors. } $O(\max(\sqrt{\E\|(X')^\dagger-X^\dagger\|^2_{\Fr}},\sqrt{p/(n-p)}, \ey, n^{1/2}B\ex))$.
This 
recovers that in \cite{Liu2021RobustWE} when 
the following conditions hold: 
$n\ge cp$ for some $c<1$,
$\sqrt{\E\|(X')^\dagger-X^\dagger\|^2_{\Fr}}
=O(\max(\sqrt{p/n}, B\ex+\ey))$,
and $\sqrt{n} B\ex
= O(\max(\sqrt{p/n}, \ey))
$.

It is reassuring that we obtain the same rates under these conditions. 
Moreover, it is reasonable that our results come under additional conditioning of the original and perturbed feature matrices, because we are using linear regression, which can behave poorly if the data is ill-conditioned.
The condition that the perturbation of $X$ must be small in order to recover the above rate is also reasonable, since linear regression can become unstable if the feature matrix is strongly perturbed towards singularity. 
In contrast, \cite{Liu2021RobustWE} use alternatives forms of empirical risk minimization, which may provide some additional regularization to mitigate the effects of ill-conditioned data. 
In future work, it would be of interest to determine whether certain forms of regularization can also make linear regression more stable.

\section{Simulations}\label{sec: simulations}
To evaluate our theoretical results, we perform simulations to empirically approximate the minimax risk. 
For all simulations, we repeat our results over $5000$ trials and report the mean value. 
We provide all code to replicate our experiments at: \url{https://github.com/patrickrchao/Distribution-Shift-Experiments}.

In situations where we do not 
know the minimax optimal estimators and contaminations,
we consider a set of estimators $\htheta_1,\htheta_2,\ldots,\htheta_{m}$ and a set of perturbation distributions $\nu_1,\nu_2,\ldots,\nu_{l}$ for a perturbation level $\ep$. We compute an empirical loss $\ell_{i,j}=\cL(\htheta_i,\nu_j)$, averaged over the $5000$ trials, for each pair of estimator and perturbation. We then plot $\min_{i} \max_j \ell_{i,j}$, as an approximation to the minimax risk. 
We consider a variety of perturbations and estimators to attempt to adequately capture the minimax risk.

For the following sections, we provide the specific hyperparameters chosen, e.g., the dimension $p$, and the list of estimators and perturbations evaluated.
 Choosing $r = q = 2$, 
we represent each perturbation in the form of the perturbed value $X_i'=f(X_i,\theta)$ where $\mathbb{E}{\|X_i'-X_i\|_2^2}\le \ep^2$, e.g., $X_i'=X_i+\ep \delta$.
Further, we let $e_1$ be the first standard basis vector, with the value equal to unity in the first index and zeros otherwise.

\subsection{Location Simulations}\label{subsec: gaussian simulations}
We provide details for the plot in \cref{fig: gaussian perturbation risk}. In the location estimation problem, we know the exact IC and JC minimax risks, as well as the minimax optimal estimators. We verify that we empirically obtain the same values. 
We sample $X_i\simiid \cN(\theta,\Sigma)$ for $i\in[n]$, where $n=10$, $p=3$, $\theta=0$, and $\Sigma=\mathrm{diag}(1/6,2/6,3/6)$ (so that $\Tr[\Sigma]=1$), and observe $X_i'$, where $W_{2,2}(X_i,X_i')\le \ep$. We provide the perturbations in \cref{table: gaussian perts} and estimators in \cref{table: gaussian estimators}.

{
\renewcommand{\arraystretch}{1.2}

\begin{table}[t]
    \centering
    \caption{Location simulation perturbations. Perturbation 3 uses the values of $\zeta$ and $\psi$ in \cref{eq: zeta psi}.}
    \begin{tabular}{l c @{\hskip 1cm} l }
        \toprule
        Number  & Type & Perturbation \\
        \midrule
        3 & IC                & $X_i'=X_i+\zeta(X_i-\theta)+\psi\delta$\\
        4 & JC                & $X_i'=X_i+\ep \sqrt{n/\Tr[\Sigma]}(\bar X-\theta)$\\
        \bottomrule
    \end{tabular}
    \label{table: gaussian perts}
\end{table}
}

{
\renewcommand{\arraystretch}{1.2}
\begin{table}[t]
\caption{Location simulation estimators.}
    \centering
    \begin{tabular}{l l }
        \toprule
        Number   & Estimator \\
        \midrule
        1               & Sample Mean\\
        2                 & Sample Median (per dimension)\\
        \bottomrule
    \end{tabular}
    \label{table: gaussian estimators}
\end{table}
}

\subsection{Linear Regression Simulations}\label{subsec: lr simulations}
We provide details for the plots in \cref{fig: lr}. 
We sample $Y\sim \cN(X\theta,\Sigma)$ for $X\in \R^{n\times p}$, where $n=10$, $p=5$, $\theta=1_{5}$, 
and where $X$ has i.i.d.~standard normal entries divided by $\sqrt{n}$ so that the columns have squared norm equal to unity in expectation. We observe $Y'$, where $W_{2,2}(Y,Y')\le \ep$. 
We consider squared error, $\rho^2(\htheta,\theta)=\|\htheta-\theta\|_2^2$, and scaled prediction error, $\rho^2_X(\htheta,\theta)=\|X(\htheta-\theta)\|_2^2/n$.
We also consider homoskedastic and heteroskedastic error, where $\Sigma=I_n/100$ and $\Sigma=\diag(1,2,\ldots,10)/200$ 
respectively, chosen so that $\Sigma$ and $\ep$ are on similar scales.
We provide the perturbations in \cref{table: lr perts}.

{
\renewcommand{\arraystretch}{1.2}
\begin{table}[t]
 \caption{Linear regression simulation perturbations. The values of $\zeta$ in Perturbations 3 is chosen to satisfy $\mathbb{E}{\|Y'-Y\|_2^2}\le \ep^2$. Perturbation 4 uses $v_n$, the left singular vector of $X$ corresponding to the smallest singular value.}
    \centering
    \begin{tabular}{l c @{\hskip 1cm} l }
        \toprule
        Number  & Type & Perturbation \\
        \midrule
        1 & JC                & $Y'=Y+\sqrt{n}\ep \cdot e_1$\\
        2 & JC                & $Y'=Y+\sqrt{n}\ep \cdot 1_n/\sqrt{n}$\\
        3 & JC                & $Y'=Y+\zeta (P_X Y -X\theta)$\\
        4 & JC                & $Y'=Y+\ep v_n$\\
        \bottomrule
    \end{tabular}
   
    \label{table: lr perts}
\end{table}
}

\end{document}